\documentclass[final,onefignum,onetabnum]{siamonline250211}

\usepackage[section]{placeins} % Figure barrier per section.
\usepackage{subcaption}
\usepackage{algpseudocode}
\usepackage{setspace}
\usepackage{booktabs}
\usepackage{amssymb}
\usepackage{microtype}

\usepackage{enumitem}

\newcommand{\func}[2]{#1[#2]} % function
\newcommand{\vmat}[1]{\boldsymbol{#1}} % variable matrix
\newcommand{\cmat}[1]{\mathbf{#1}} % constant matrix
\newcommand{\vvec}[1]{\boldsymbol{#1}} % variable column vector
\newcommand{\cvec}[1]{\mathbf{#1}} % constant column vector
\newcommand{\transpose}[1]{{#1}^{\!\top}} % transpose
\newcommand{\cscalar}[1]{\mathtt{#1}} % constant scalar
\newcommand\eqdef{\stackrel{\mathrm{def}}{=}}
\DeclareMathOperator{\trace}{Tr}
\DeclareMathOperator{\rms}{RMS\!}
\DeclareMathOperator{\mean}{mean}
\DeclareMathOperator{\diag}{\mathbf{Diag}}

\newsiamthm{prediction}{Prediction}
\newsiamthm{assumption}{Assumption}

\usepackage{url}

\usepackage[authoryear,round]{natbib}

\headers{Simulating Training Spikes of Deep Neural Networks}{R. Tian and C. Paquette}

\title{Simulating Training Spikes of Deep Neural Networks by Layer-wise Random Feature Models}
\author{Ran Tian\thanks{Google Deepmind 
  (\email{tianran@google.com})
  }
\and Courtney Paquette\footnotemark[1]
}

\begin{document}

\maketitle

\begin{abstract}
Intermittent ``spikes'' in loss, during the pre-training of Large Language Models (LLMs), remains a mysterious phenomenon both theoretically and empirically. In this paper, we show that simple quadratic models with heavy-tailed, power-law distributed inputs can replicate these spikes. We introduce a theoretical framework and simulation algorithm to model per-layer training dynamics of Deep Neural Networks (DNNs), using such quadratic models. In a case study on an Adam hyperparameter configuration (where $\beta_2$ is set close to $1$) known to induce spikes in Transformer LMs, our simulation successfully predicts bumps indicating spiky behavior from a given checkpoint. Furthermore, we derive an appropriate $\beta_2$ schedule from observed metrics, which empirically improves performance.
\end{abstract}

\begin{keywords}
Training spikes, Stochastic optimization, Random Feature models, Power law
\end{keywords}

\section{Introduction}

``Training spikes'', the temporary, sharp surges in loss (Figure~\ref{fig:spiky_loss_all}), are a phenomenon often observed during the pre-training of Large Language Models (LLMs) \citep{palm,DBLP:journals/corr/abs-2304-09871}. Empirically, these spikes are considered pathological because the increase in loss may indicate training instability, the hyperparameters being sub-optimal, or the final model less predictive. The wild fluctuations in training metrics can trigger numerical issues as well, since large-scale Deep Neural Networks (DNNs) are usually trained with low-precision arithmetic. While their empirical occurrence is well documented, the underlying causes of these training spikes are not yet fully understood, partly because their occurrences can be affected by many factors, sometimes difficult to reliably reproduce in controlled settings \citep{nishida-etal-2024-initialization,takase2025spike}.

Stochastic optimizers, such as SGD and Adam \citep{kingma2014adam}, inherently produce fluctuations due to the random sampling of training data. However, the ``training spikes'' discussed in this paper are distinct in their sensitivity to \emph{both} data shuffling and model initialization. Typically, the trajectory of training loss fluctuations (the ``up-and-downs'' of the curve) remains consistent under an identical data shuffle (Figure~\ref{fig:spiky_loss_train_partial}), irrespective of model initialization or minor architectural modifications. 
% Typically, it is observed that the topology of training loss fluctuations, or the ``up-and-downs'' of a loss curve as zoomed in, remains consistent as long as the data shuffle is identical (Figure~\ref{fig:spiky_loss_train_partial}), regardless of model initialization or minor architectural changes. 
However, training spikes do not follow this pattern. As shown in Figure~\ref{fig:spiky_loss_train_full}, two training trajectories, which differ only by their random initialization seeds, can spike at different steps. This extra randomness complicates the study and mitigation of training spikes. For example, simply removing ``data outliers'' as a remedy to training spikes will only partially solve the problem, since one can often still observe random surges in the \emph{validation loss} (Figure~\ref{fig:spiky_loss_valid_partial}). This suggests that training spikes are usually not caused by a specific occurrence of one single data point.

%It is noteworthy that when stochastic optimizers such as SGD and Adam are used, training loss is inherently subject to fluctuations due to the random sampling of training examples. What distinguishes training spikes from such ``normal'' fluctuations is the sensitivity to not only the data-shuffle, but also the initialization -- It is observed that the up-and-downs of training loss normally exhibit the same ``topology'', independent of model initialization or small modifications of architecture, when the data-shuffle is kept the same (Figure~\ref{fig:spiky_loss_train_partial}). However, this property no longer holds for training spikes: In Figure~\ref{fig:spiky_loss_train_full}, we show an example of two training trajectories, with random-seed of model initialization being the only difference, spiking at different places during training. This demonstration of extra randomness makes the study of training spikes more difficult, and its remedy more elusive. For example, without being fully aware of the different types of randomness, one might propose to remove the ``data outliers'' as a remedy for training spikes. That is unlikely the complete solution, because we observe the validation loss spiking at the same place (Figure~\ref{fig:spiky_loss_valid_partial}). It suggests that training spikes are actually sudden increases of the \emph{average} loss, which is not caused by the specific data points where the spikes happen.

In \S~\ref{sec:sketch}, we will detail our theoretical framework, which posits that normal fluctuations in loss are driven solely by data shuffling because high model dimensionality typically cancels out model-originated randomness. This concentration of measure property is conditional; it breaks down when the data distribution is heavy-tailed, resulting in training spikes. We demonstrate this mechanism using Power-Law Random Feature (PLRF) models -- a class of simple linear-regression models with quadratic loss. Furthermore, we introduce an algorithm that uses PLRF-like dynamics to simulate the training behavior of individual layers within a DNN.

%In Section~\ref{sec:sketch}, we will elaborate our theoretical understanding that the property of ``normal'' fluctuations of training loss being driven by the data-shuffle only, is due to the effect that the high-dimensionality of model can cancel out all the randomness originated from the model. This \emph{concentration of measure} property holds with conditions -- and it can be broken if the data distribution is heavy-tailed -- when broken, training spikes are observed. This mechanism can be demonstrated by a class of simple, linear-regression models with quadratic loss, called the Power-Law Random Feature (PLRF) models. More generally, we will present an algorithm to simulate the training behavior of each layer of a DNN by PLRF-like dynamics. 

\begin{figure}[t]
 \centering
 \begin{subfigure}[b]{0.6\textwidth}
     \centering
     \includegraphics[width=\textwidth]{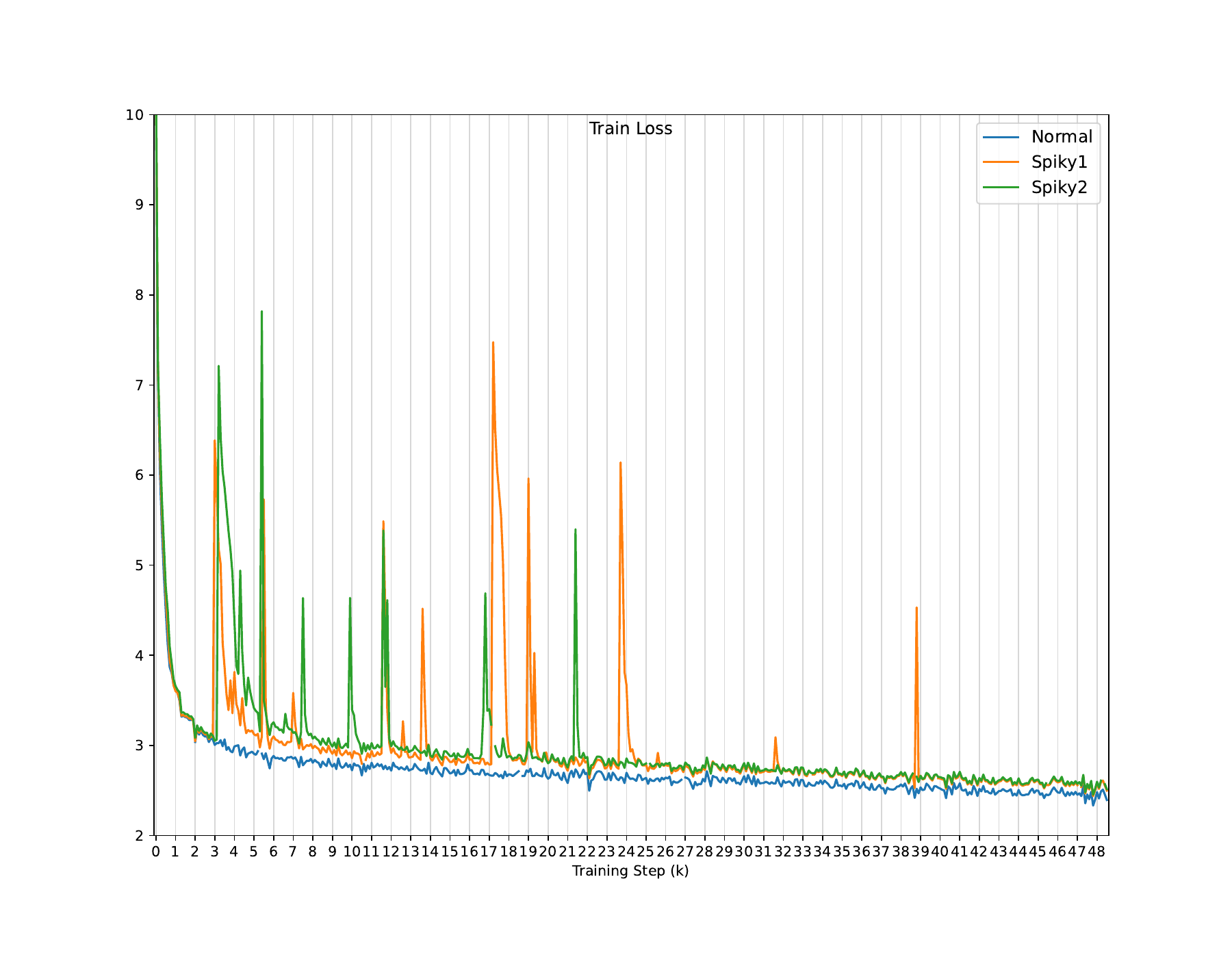}
     \caption{Overall trajectories of the training loss.}
     \label{fig:spiky_loss_train_full}
     \end{subfigure}
 \begin{minipage}[b]{0.39\textwidth}
 \centering
 \begin{subfigure}[b]{\textwidth}
     \centering
     \includegraphics[width=\textwidth]{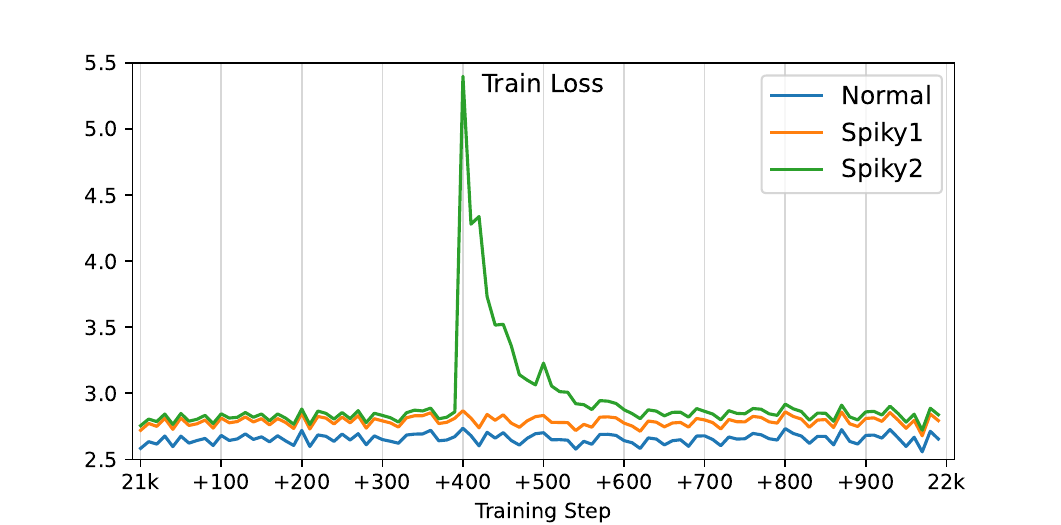}
     \caption{Training loss from 21k to 22k.}
     \label{fig:spiky_loss_train_partial}
 \end{subfigure}
 \\
 \begin{subfigure}[b]{\textwidth}
     \centering
     \includegraphics[width=\textwidth]{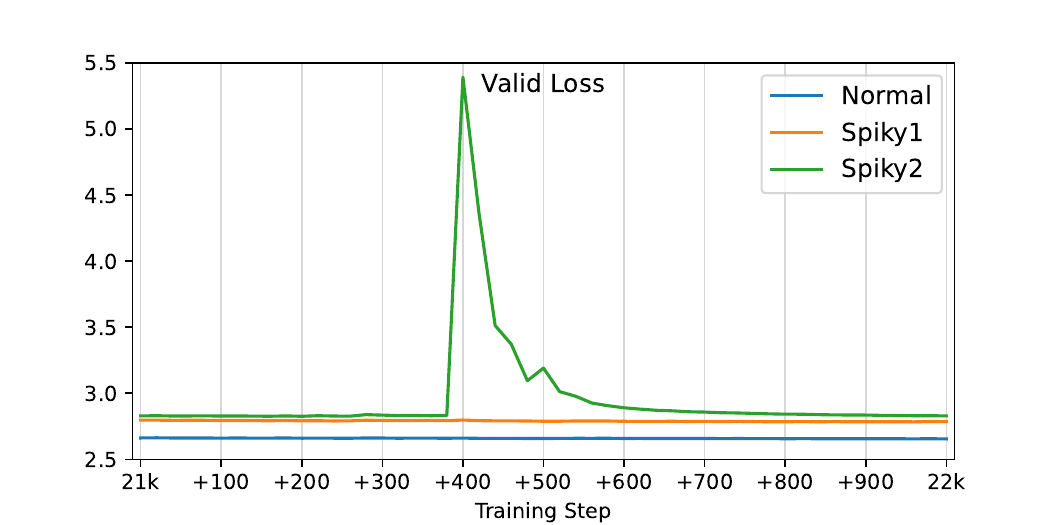}
     \caption{Validation loss from 21k to 22k.}
     \label{fig:spiky_loss_valid_partial}
 \end{subfigure}
 \end{minipage}
\caption{Training of a Transformer LM. Comparing a ``Normal'' training with two ``Spiky'' trainings. ``Normal'' and ``Spiky1'' differ in the $\beta_2$ hyperparameter of Adam only, while ``Spiky1'' and ``Spiky2'' differ in the random-seed of model initialization only. Training spikes can occur at different steps between ``Spiky1'' and ``Spiky2'', while ``normal fluctuations'' in training loss remain consistent.}
\label{fig:spiky_loss_all}
\end{figure}

Remarkably, even simple models like PLRF can exhibit training spikes. Figure~\ref{fig:plrf} compares various PLRF training trajectories -- with and without spikes -- across different optimizers, learning rates, and power-law exponents. Our findings highlight that the power-law exponent $\alpha$, which quantifies the ``heavy-tailedness'' of the data distribution, is critical; a larger $\alpha$ is a necessary condition for these spikes to occur. To our knowledge, this is the first study to explicitly link training spikes to heavy-tailed data distributions.

%It is already a surprise that such simple models like PLRF can exhibit training spikes. In Figure~\ref{fig:plrf}, we compare different training trajectories of PLRF models, with or without training spikes, across different optimizers, learning-rates, and power-law exponents of the data distribution. We note that the power-law exponent $\alpha$, which measures how heavy-tailed the data distribution is, plays a critical role -- with larger $\alpha$ (i.e., more heavy-tailed distribution) being a necessary condition for training spikes. To our knowledge, this is the first paper that explicitly points out the connection between training spikes and the heavy-tail distributions of data.

Our simulation framework is largely inspired by \citet{DBLP:conf/nips/LeeXSBNSP19}, utilizing a Taylor expansion of the DNN at training step $t$ to predict dynamics up to a future step $t+s$. We advance this approach by incorporating stochastic optimization algorithms like SGD and Adam, which are more representative of real-world practices than full-batch gradient descent. This shift requires a 2nd-order Taylor expansion to account for stochastic gradient noise; Fortunately, recent work by \citet{sgd_in_large,four_plus_three} provides the theoretical tools to precisely solve training dynamics under this setting. The 2nd-order analysis is essential; as we show in Figure~\ref{fig:dnn_simulation}, empirical evidence suggests that training spikes of DNNs likely originate from 2nd-order gradient noise within some layers.

%Our simulation framework is largely inspired by \citet{DBLP:conf/nips/LeeXSBNSP19}, of which the idea is to take Taylor expansion of the DNN at training step $t$, and use the information to predict training dynamics until a future step $t+s$. Instead of full-batch gradient descent, our new progress is to consider stochastic optimization algorithms such as SGD and Adam, which are more realistic in practice. In order to do so, one has to take Taylor expansion to the 2nd-order (instead of the 1st-), and calculate the stochastic noise of gradients from Hessian. Fortunately, recent studies \citep{sgd_in_large,four_plus_three} have developed substantial theoretical tools to solve the training dynamics precisely under this setting. The analysis of 2nd-order terms is essential, as in Figure~\ref{fig:dnn_simulation} we provide empirical evidence that training spikes probably originate from the 2nd-order gradient noise at some layer.

To summarize our contributions:

1. We developed a framework aligning DNN layers with simple Random Feature models to enable lightweight simulations of training dynamics (\S~\ref{sec:simulation_algorithm}). This approach facilitates the scientific study of elusive phenomena like training spikes in DNNs.

%We have developed a framework to align each layer of a DNN to simple Random Feature models, enabling lightweight simulations of the training dynamics. This is convenient for scientific study of DNNs, including the elusive phenomenon of training spikes.

2. While spiky behavior is often linked to the complex, non-convex nature of feature learning in DNNs \citep[as seen in the ``Edge-of-Stability'' regime, ][]{eos,adaptive_eos}, it is surprising that even convex PLRF models exhibit similar training spikes (\S~\ref{sec:background}).

%Related to the Edge-of-Stability \citep{eos,adaptive_eos} regime, spiky behavior has been considered as a demonstration of complicated, non-convex and non-linear feature-learning nature of DNNs. The finding of convex, PLRF models exhibiting training spikes, is a surprise.

3. We conducted a case-study where the Adam hyperparameter $\beta_2$ causes training spikes in a Transformer LM. Our simulation successfully predicts the spiky behavior for a given model checkpoint (\S~\ref{sec:simulation_transformer_LM}). We then analyze several training metrics in details, and offer optimization recipes such as an appropriate $\beta_2$ schedule, which can avoid spikes and empirically improve performance (\S~\ref{sec:appropriate_beta2}).

%We analyze a concrete example where the choice of hyperparameter $\beta_2$ of Adam is known to cause training spikes of a Transformer LM. We show that our simulation is able to predict spiky training from a given model checkpoint, in this specific case. We discuss details of the mechanism, pointing to conjectures beyond Random Feature models as well.

%4. Generally, we believe this work is a first step towards detailed understanding of the complicated training process of DNNs, which could potentially improve training and optimization in the future.

\section{Background}
\label{sec:background}

A Random Feature model assumes a non-trainable $\cscalar{D}\times\cscalar{V}$ matrix $\cmat{A}$ which maps a $\cscalar{V}$-dimensional random input $\cvec{x}$ to $\cscalar{D}$-dimensional features. Then, the features are fed to linear regression with a quadratic loss
\begin{equation}
\label{eq:rf}
\func{\ell}{\cvec{x};\vvec{w}}\eqdef \frac{c}{2}\big(\transpose{\vvec{w}}\cmat{A}\cvec{x}-\func{b}{\cvec{x}}\big)^2.
\end{equation}
Here, $\vvec{w}$ is the $\cscalar{D}$-dimensional trainable model weights, the label $\func{b}{\cvec{x}}$ a scalar function of $\cvec{x}$, and $c$ a constant. The constant matrix $\cmat{A}$ is usually set to random Gaussian if not else specified.

In addition, the Power-Law Random Feature model \citep[PLRF]{solvable_model_of_neural_scaling_laws,explain_neural_scaling_laws} assumes that $\cvec{x}=(i^{-\alpha}x_i)_{1\leq i\leq\cscalar{V}}$, where each $x_i$ is Gaussian i.i.d.~drawn from $\mathcal{N}(0,1)$, and $\alpha$ is the power-law exponent of input data. In this work (\S~\ref{sec:dnn2rf}), we will use $\cmat{A}\cvec{x}$ to approximate hidden vectors in a Deep Neural Network (DNN), so $\cscalar{D}$ is the model dimension of the DNN, while $\cscalar{V}$ is a very large ($\approx\infty$) number intended for a latent space that can linearize the sub-DNN which produces the hidden vector, as inspired by Neural Tangent Kernel \citep{DBLP:conf/nips/JacotHG18,DBLP:conf/icml/BordelonAP24}.

\begin{figure}[t]
 \centering
 \begin{subfigure}[b]{0.32\textwidth}
     \centering
     \includegraphics[width=\textwidth]{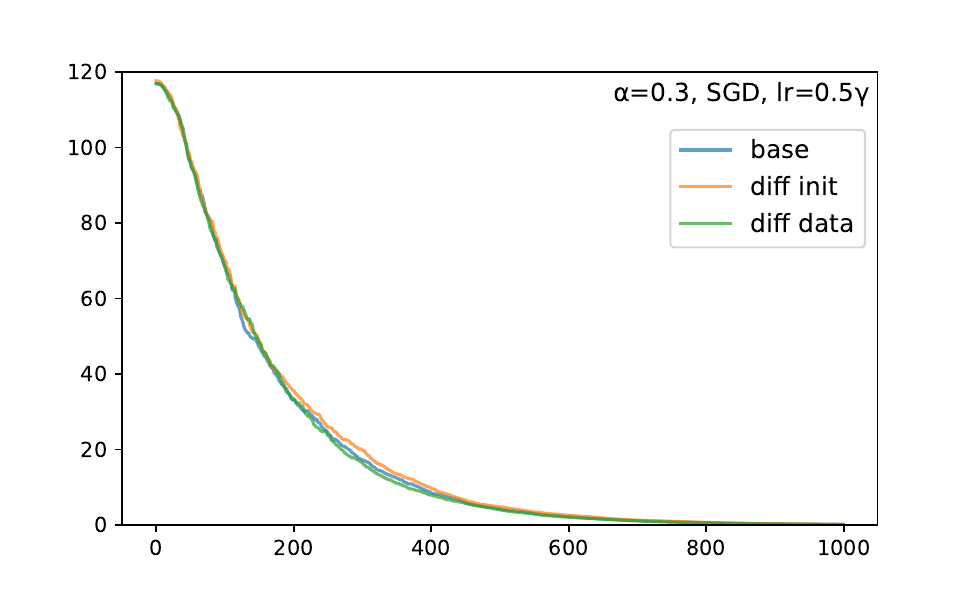}
     %\caption{(a)}
     %\label{fig:y equals x}
 \end{subfigure}
 \begin{subfigure}[b]{0.32\textwidth}
     \centering
     \includegraphics[width=\textwidth]{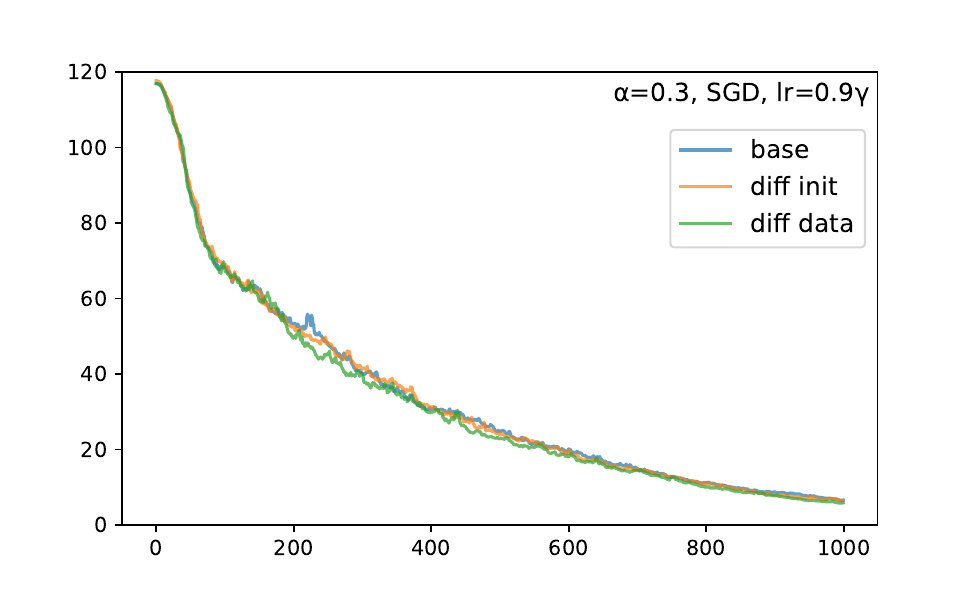}
     %\caption{(a)}
     %\label{fig:y equals x}
 \end{subfigure}
 \begin{subfigure}[b]{0.32\textwidth}
     \centering
     \includegraphics[width=\textwidth]{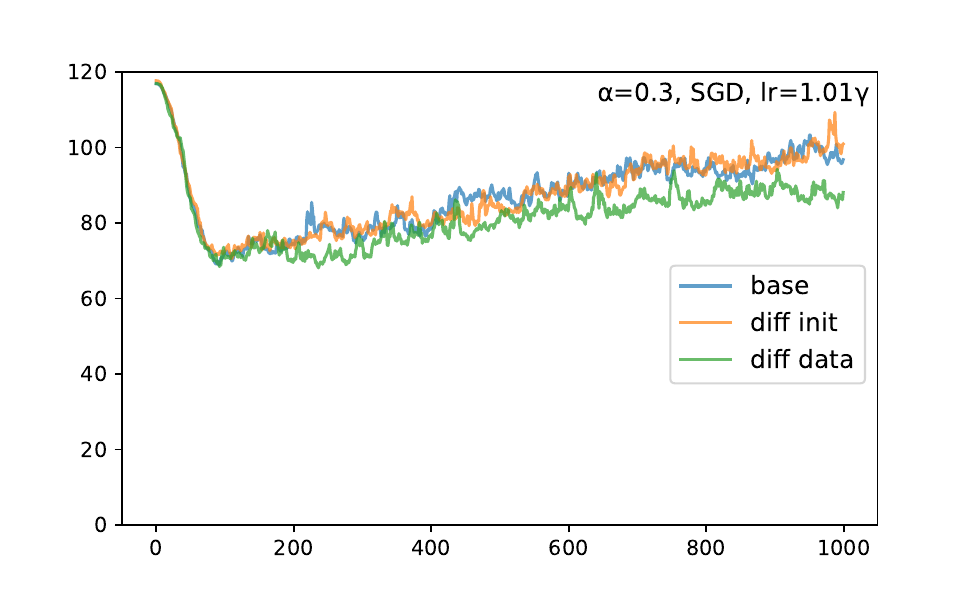}
     %\caption{(a)}
     %\label{fig:y equals x}
 \end{subfigure}\\
 \begin{subfigure}[b]{0.32\textwidth}
     \centering
     \includegraphics[width=\textwidth]{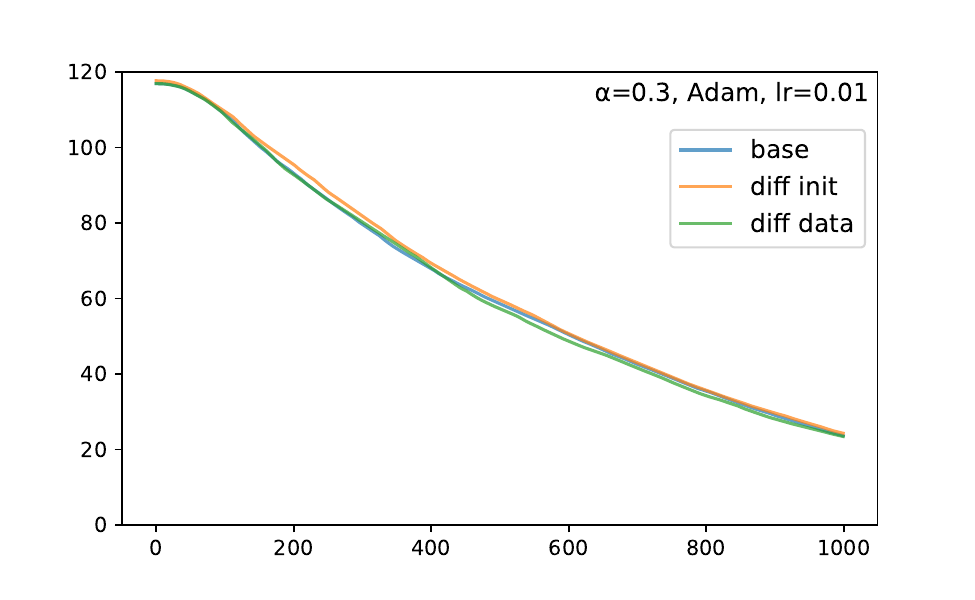}
     %\caption{(a)}
     %\label{fig:y equals x}
 \end{subfigure}
 \begin{subfigure}[b]{0.32\textwidth}
     \centering
     \includegraphics[width=\textwidth]{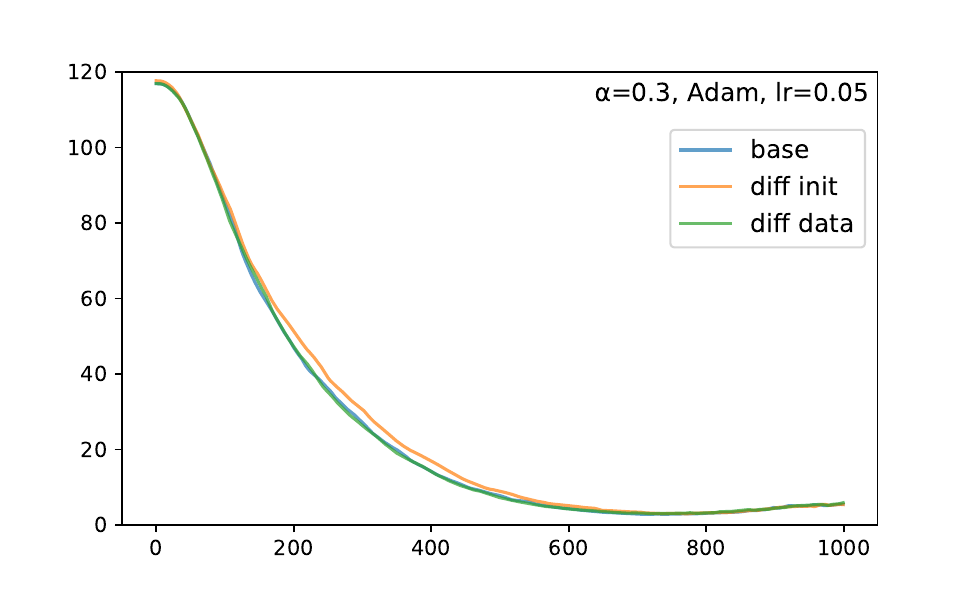}
     %\caption{(a)}
     %\label{fig:y equals x}
 \end{subfigure}
 \begin{subfigure}[b]{0.32\textwidth}
     \centering
     \includegraphics[width=\textwidth]{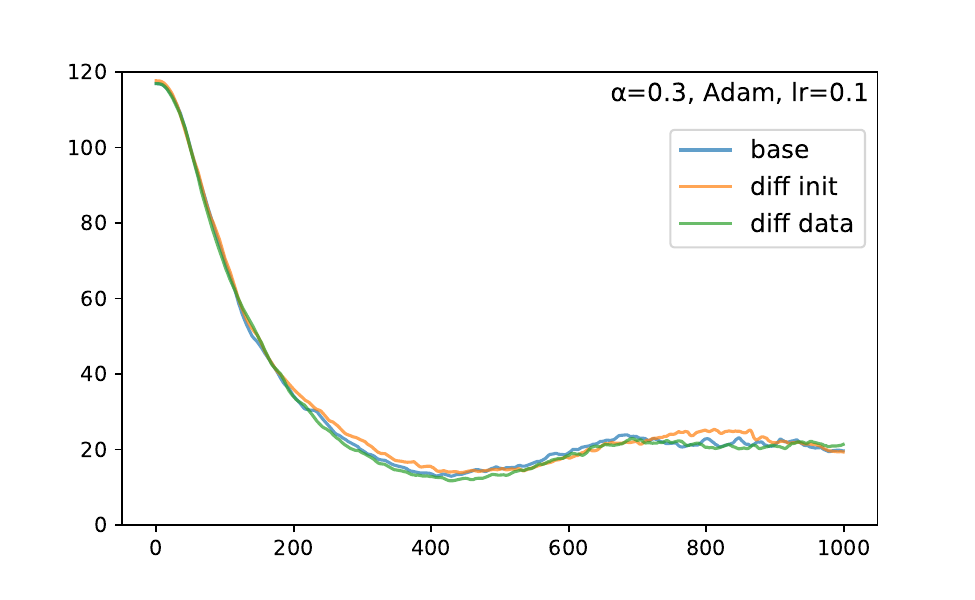}
     %\caption{(a)}
     %\label{fig:y equals x}
 \end{subfigure}\\
 \begin{subfigure}[b]{0.32\textwidth}
     \centering
     \includegraphics[width=\textwidth]{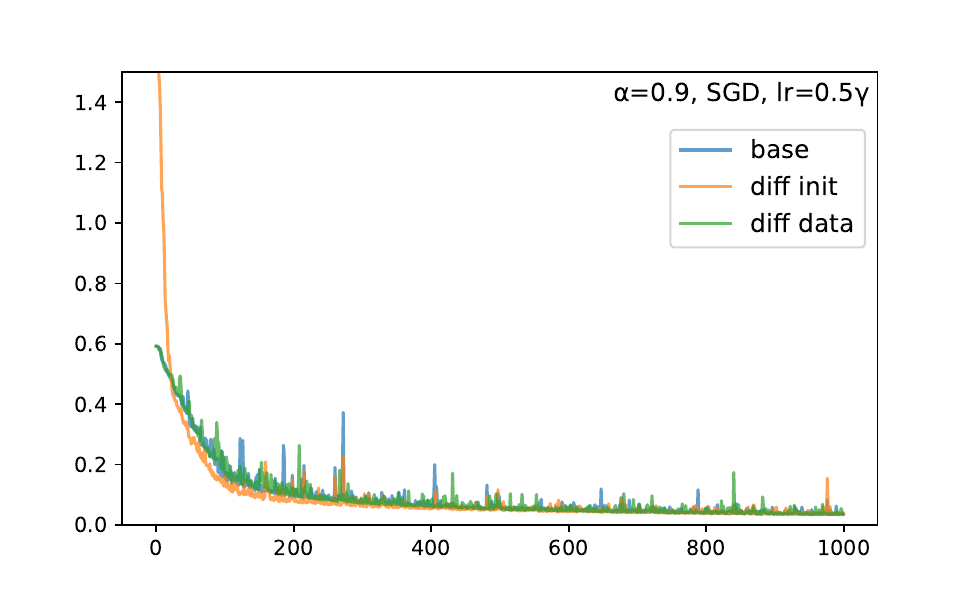}
     %\caption{(a)}
     %\label{fig:y equals x}
 \end{subfigure}
 \begin{subfigure}[b]{0.32\textwidth}
     \centering
     \includegraphics[width=\textwidth]{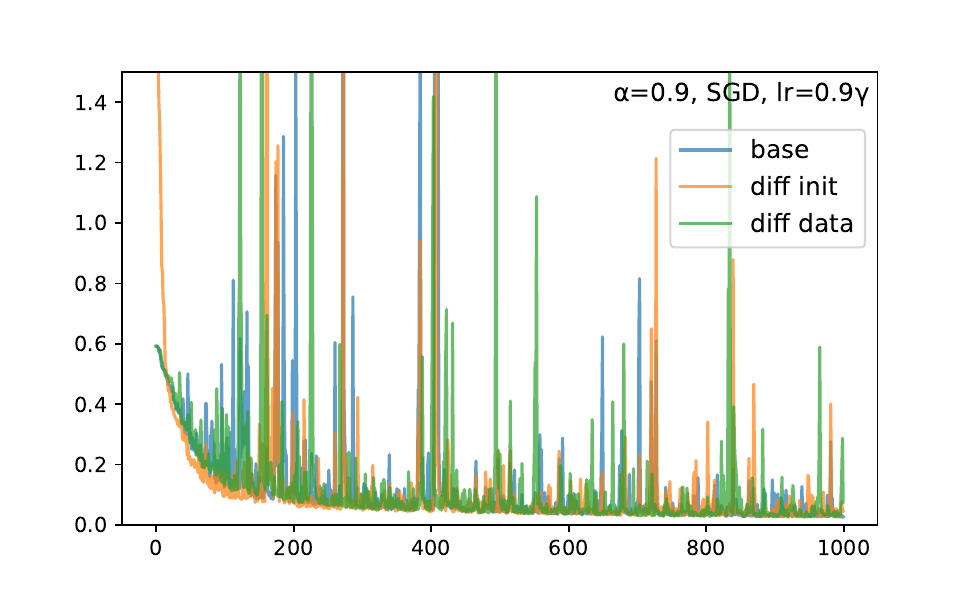}
     %\caption{(a)}
     %\label{fig:y equals x}
 \end{subfigure}
 \begin{subfigure}[b]{0.32\textwidth}
     \centering
     \includegraphics[width=\textwidth]{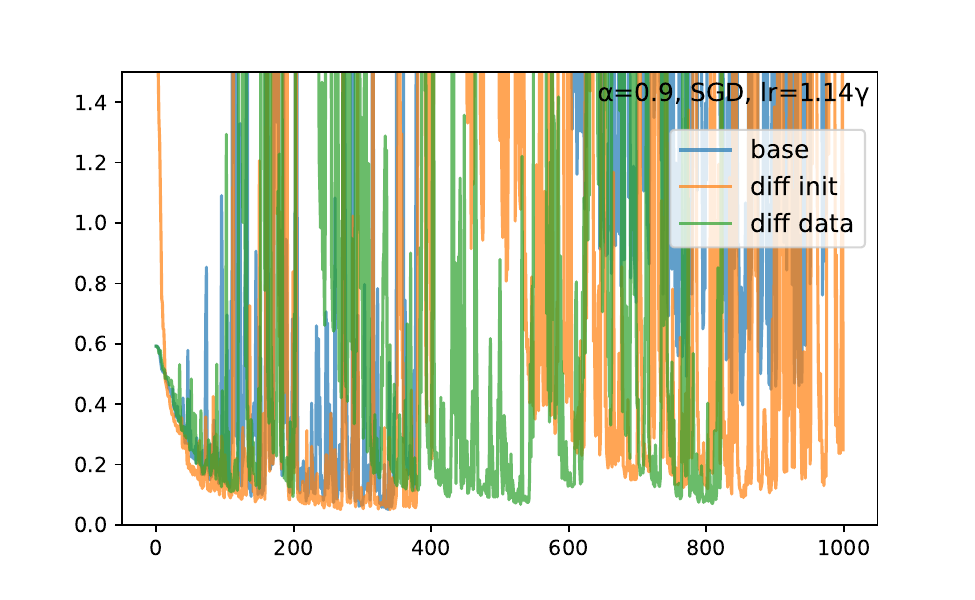}
     %\caption{(a)}
     %\label{fig:y equals x}
 \end{subfigure}\\
 \begin{subfigure}[b]{0.32\textwidth}
     \centering
     \includegraphics[width=\textwidth]{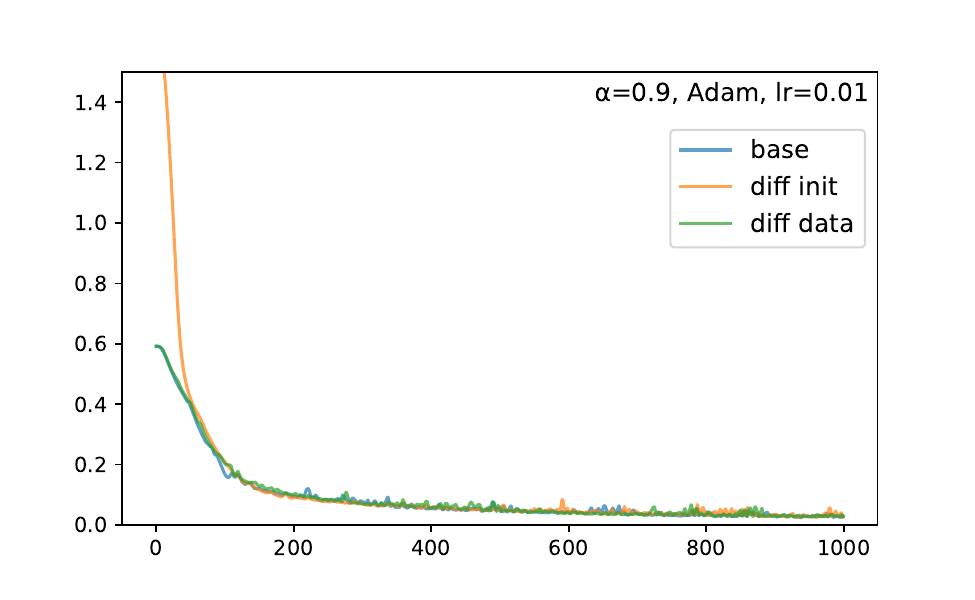}
     %\caption{(a)}
     %\label{fig:y equals x}
 \end{subfigure}
 \begin{subfigure}[b]{0.32\textwidth}
     \centering
     \includegraphics[width=\textwidth]{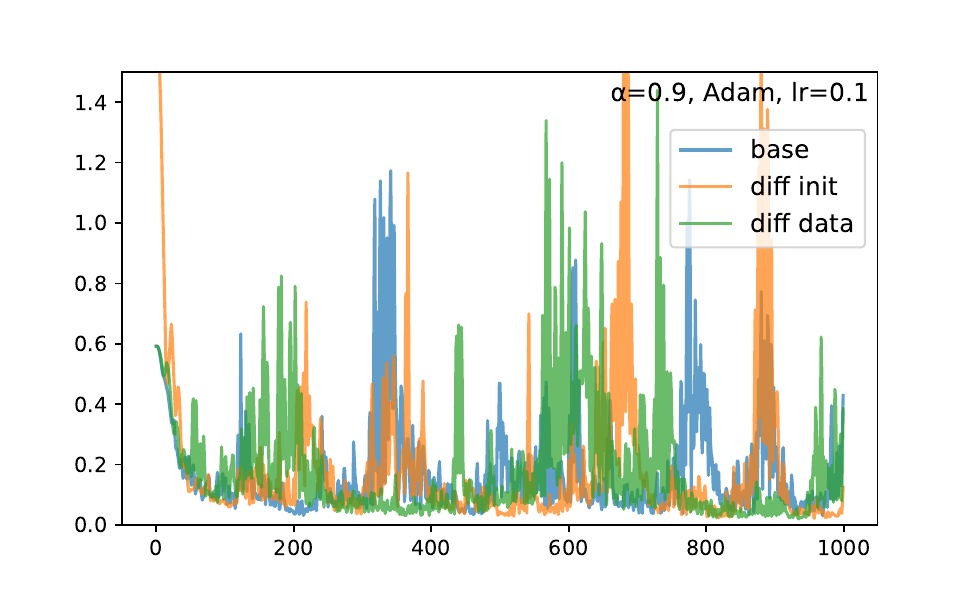}
     %\caption{(a)}
     %\label{fig:y equals x}
 \end{subfigure}
 \begin{subfigure}[b]{0.32\textwidth}
     \centering
     \includegraphics[width=\textwidth]{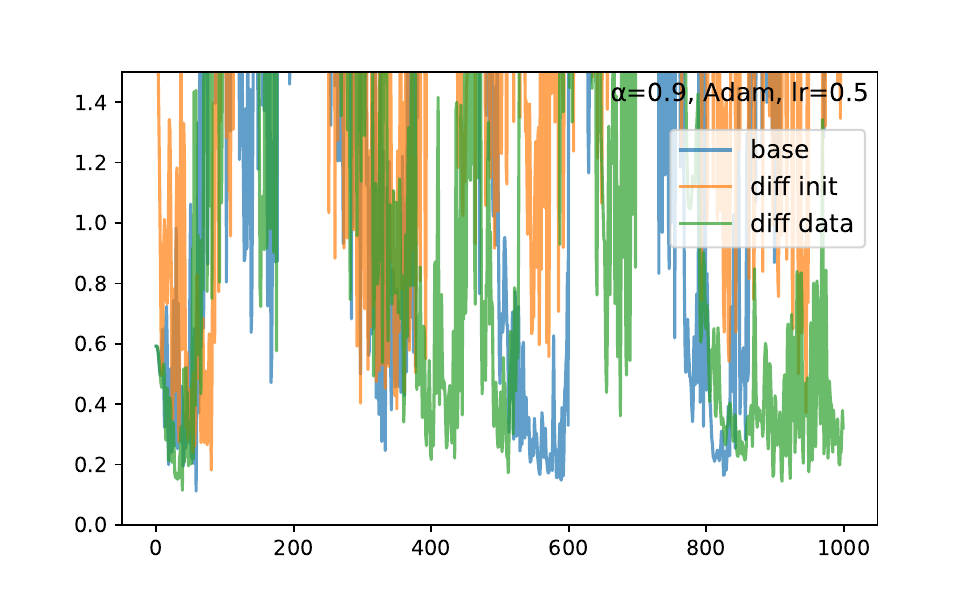}
     %\caption{(a)}
     %\label{fig:y equals x}
 \end{subfigure}
\caption{The validation loss of PLRF, from training step 0 to 1k. We compare $\alpha\in\{0.3,0.9\}$, for SGD and Adam, with different learning-rates. Each plot shows 3 random seeds (to distinguish initialization and data-shuffle separately), and learning-rates are chosen such that training converges, nearly diverges, or diverges, respectively. For SGD, the required learning-rates are calculated from the threshold $\gamma$ given in formula \eqref{eq:gamma}. For Adam, the learning-rates are found by manual grid search.}
\label{fig:plrf}
\end{figure}

When Stochastic Gradient Descent (SGD) is applied to the optimization of model \eqref{eq:rf}, the training dynamics can be analytically solved \citep{sgd_in_large,four_plus_three}. Moreover, since the power-law data distribution in PLRF implies neural scaling laws \citep{solvable_model_of_neural_scaling_laws,explain_neural_scaling_laws}, solving the dynamics allows for numerical predictions of \emph{compute-optimal} scaling \citep{four_plus_three}. In this study, we further demonstrate that a large $\alpha$ can cause training spikes as well.

%Moreover, as \citet{solvable_model_of_neural_scaling_laws,explain_neural_scaling_laws} point out that the power-law distribution of input data in PLRF can lead to neural scaling laws, solving the training dynamics of PLRF enables numerical predictions of \emph{compute-optimal} neural scaling laws \citep{four_plus_three}. In this work, we further demonstrate that large $\alpha$ can cause training spikes as well.

In Figure~\ref{fig:plrf}, we compare the training dynamics of PLRF for $\alpha\in\{0.3,0.9\}$. We set $\cscalar{D}=1024$, $\cscalar{V}=20000$, $c=1$, and apply SGD and Adam across different learning-rates. The batch-size $\cscalar{B}=8$. When $\alpha=0.3$, as the learning-rate increases, validation loss diverges without showing any spikes. In contrast, when $\alpha=0.9$, both optimizers exhibit training spikes before divergence. This suggests two conditions for training spikes in PLRF: (1) heavy-tailed data (large $\alpha$) and (2) near divergence (large learning-rate). We provide the Python code for these experiments in supplementary materials.

%The spiky behavior of PLRF training is surprising for two reasons. First, while SGD-based training of Random Feature models is a random process influenced by data shuffling, Paquette et al. (2021) demonstrated that validation loss typically converges to a smooth, deterministic curve—a phenomenon known as the "Concentration of SGD"—when D and V are sufficiently large. Training spikes in PLRF seemingly contradict this concentration effect. However, for PLRF with α ≥ 0.5, while the expected validation loss remains smooth, convergence in the high-dimensional limit can no longer be proven (see Section 3.4 and Appendix E). Thus, while the scaling analysis of Paquette et al. (2024) remains valid, it must be noted that these results describe average-case behavior; at α ≥ 0.5, training can still exhibit stochastic spikes that temporarily deviate from predicted dynamics, even at massive scales.

The spiky behavior of PLRF is surprising for two reasons. First, while SGD-based training is a random process influenced by data shuffling, \citet{sgd_in_large} proves that the validation loss of Random Feature models typically converges to a smooth, \emph{deterministic} curve -- a phenomenon known as the ``Concentration of SGD'' -- when $\cscalar{D}$ and $\cscalar{V}$ are sufficiently large. Random training spikes of PLRF seemingly contradict this concentration effect -- As it turns out, for PLRF with $\alpha>0.5$, while the \emph{expected} validation loss remains smooth, and the scaling analysis of \citet{four_plus_three,ferbach2025dimensionadapted} still valid, it must be noted that these results describe \emph{average-case} behavior; But one can no longer prove the \emph{convergence} to expectation (see \S~\ref{sec:sketch} and \S~\ref{sec:concentration_alpha}). In other words, at $\alpha>0.5$, training can exhibit random spikes that temporarily deviate from the expectation, even at massive scales.

%although the training of Random Feature models by SGD is a random process depending on the data-shuffle, \citet{sgd_in_large} proves that the validation loss will (under certain conditions) converge to a \emph{deterministic} and smooth curve when $\cscalar{D}$ and $\cscalar{V}$ are sufficiently large (which is called the \emph{Concentration of SGD}). This concentration effect contradicts the training spikes observed in PLRF. It turns out that for PLRF with $\alpha\geq0.5$, the \emph{expectation} of the validation loss is still a smooth curve under SGD, but one can no longer prove the \emph{convergence} at the high-dimension limit (see Section~\ref{sec:sketch} and Appendix~\ref{sec:concentration_alpha} for more details). The scaling analysis in \citet{four_plus_three} still holds, but an extra note should emphasize that the results are about the \emph{average-case} behavior; For $\alpha\geq0.5$, the training can exhibit random spikes in validation loss that temporarily deviate from the predicted dynamics, even at the largest scales.

Second, while similar spiky behavior has been observed in DNNs under \emph{full-batch gradient descent}, it was previously considered impossible for quadratic models \citep{eos,DBLP:conf/iclr/DamianNL23,central_flow}. This is because full-batch gradient descent on quadratic loss has straightforward dynamics: It diverges if the learning rate exceeds $2/\lambda_\textrm{max}$ (where $\lambda_\textrm{max}$ is the largest eigenvalue of the Hessian). Typically, only DNNs with 3rd-order terms exhibit spikes and the subsequent stabilization characteristic of the so-called ``Edge-of-Stability'' regime, when the learning rate nears this threshold.

%Secondly, under \emph{full-batch gradient descent} training, similar spiky behavior has been observed with DNNs, but considered impossible for quadratic models \citep{eos,DBLP:conf/iclr/DamianNL23,central_flow}. This is because the training dynamics of full-batch gradient descent is simple for quadratic loss: It will diverge once the learning-rate exceeds $2/\lambda_\textrm{max}$, where $\lambda_\textrm{max}$ is the largest eigenvalue of the Hessian. Only for DNNs with 3rd-order terms one can observe the spikes and stabilizing behavior when the learning-rate is close to $2/\lambda_\textrm{max}$ -- the so-called Edge-of-Stability regime.

However, this theory does not translate directly to stochastic optimization. Due to stochastic gradient noise captured by Hessian, training spikes can occur in quadratic models when the input data is heavy-tailed; 3rd-order terms are not necessary. Additionally, the learning-rate threshold $\gamma$ of divergence is not $2/\lambda_\textrm{max}$ for SGD; instead, when the batch-size $\cscalar{B}$ is small ($\cscalar{B}\ll\sqrt{\cscalar{D}}$), $\gamma$ is determined by the following equation \citep[see][C.2, or \S~\ref{sec:derive_gamma} of this paper]{four_plus_three}:
\begin{equation}
\label{eq:gamma}
\frac{1}{\cscalar{B}}\sum_i\frac{\gamma\lambda_i}{2-\gamma\lambda_i}=1,
\end{equation}
where $\{\lambda_i\}$ is the eigenvalues of the Hessian. In particular, $\gamma<\min\big(2\cscalar{B}/\sum_i\lambda_i,\,2/\lambda_\textrm{max}\big)$.
%rough approximation is $\gamma\approx2\cscalar{B}/\sum_i\lambda_i$. In particular, $\gamma$ depends on all the eigenvalues. Furthermore, when the learning-rate is close to $\gamma$, stochastic gradient noise captured by the Hessian can already produce training spikes, as demonstrated by PLRF models (see ...). No longer necessary to introduce 3rd-order terms.

\section{Theory}

Having demonstrated the rich dynamics of PLRF including training spikes, in this section we introduce a theoretical framework that aligns each DNN layer with a Random Feature model.

%Having seen the substantially rich dynamics of PLRF including training spikes, this section presents a theoretical framework which aligns each layer of a DNN with a Random Feature model.

%a variety of training spikes that look similar to what one observes in practice, we present a theory that explains why such spikes exist. The theoretical settings is not limited to PLRF models; in fact, we provide a formalization that can be applied to every layer in a Deep Neural Network (DNN).

\subsection{Layer-wise Quadratic Loss}

Let $\mathcal{W}$ be the model parameters of a DNN. Given a loss function $\func{\ell}{\mathfrak{x};\mathcal{W}}$, where $\mathfrak{x}$ is a data point, the DNN is trained by minimizing the loss over a data distribution $\mathfrak{x}\sim\mathcal{D}$. We are interested in the dynamics of the validation loss $\mathbb{E}_{\mathfrak{x}\sim\mathcal{D}}\big(\func{\ell}{\mathfrak{x};\mathcal{W}_t}\big)$, where $\mathcal{W}_t$ is the model checkpoint at training step $t$. 

%In addition, we are interested in the average squared norm of gradients of each layer $\vmat{W}\subset\mathcal{W}$, defined by
% \begin{equation}\label{eq:def_varphi}
% \varphi^{\vmat{W}}_t\eqdef\mathbb{E}_{\mathfrak{x}\sim\mathcal{D}}\,\Big\lVert\func{\frac{\partial\ell}{\partial\vmat{W}}}{\mathfrak{x};\mathcal{W}_t}\Big\rVert^2.
% \end{equation}

%In this paper, instead of directly investigating the above, we will take layer-wise 2nd-order approximations of the dynamics as surrogate metrics.
\begin{definition}\label{def:delta} Instead of directly investigating $\mathbb{E}_{\mathfrak{x}\sim\mathcal{D}}\,\func{\ell}{\mathfrak{x};\mathcal{W}_t}$, we take its Taylor expansion to the 2nd order at training step $t$, for a fixed layer $\vvec{w}\subset\mathcal{W}$:
\begin{equation*}%\label{eq:delta}
\delta^{\vvec{w}}_{t,s}\eqdef\mathbb{E}_{\mathfrak{x}\sim\mathcal{D}}\Big(\func{\frac{\partial\ell}{\partial\vvec{w}}}{\mathfrak{x};\mathcal{W}_{t}}\Big)(\vvec{w}_{t+s}\!-\!\vvec{w}_{t})+\frac{1}{2}\transpose{(\vvec{w}_{t+s}\!-\!\vvec{w}_{t})}\mathbb{E}_{\mathfrak{x}\sim\mathcal{D}}\Big(\func{\frac{\partial^2\ell}{\transpose{\partial\vvec{w}}\partial\vvec{w}}}{\mathfrak{x};\mathcal{W}_{t}}\Big)(\vvec{w}_{t+s}\!-\!\vvec{w}_{t}).
\end{equation*}
This is the approximate ``amount of change'' %(to the 2nd order) 
contributed by $\vvec{w}$ to the validation loss, when $\vvec{w}$ is trained to step $t+s$ while other parameters are fixed at step $t$. One subtle detail is that $\vvec{w}_{t+s}$ evolves by the \emph{original} updates calculated from the gradients of $\func{\ell}{\mathfrak{x}_{t+s};\mathcal{W}_{t+s}}$ (where $\mathfrak{x}_{t+s}\sim\mathcal{D}$ is drawn from the training data-shuffle), without freezing any model parameters. We only freeze model parameters %and use validation data 
when calculating the evaluation metric $\delta^{\vvec{w}}_{t,s}$.
\end{definition}

The metric $\delta^{\vvec{w}}_{t,s}$ is a quadratic function of $\vvec{w}_{t+s}$, which can be written as ``$\func{L}{\vvec{w}_{t+s}}+\mbox{Constant}$'' where
\begin{equation}\label{eq:layer_quadratic_loss}
\func{L}{\vvec{w}}\eqdef\frac{1}{2}\transpose{(\vvec{w}-\cvec{w}^\ast)}\mathbb{E}_{\mathfrak{x}\sim\mathcal{D}}\Big(\func{\frac{\partial^2\ell}{\transpose{\partial\vvec{w}}\partial\vvec{w}}}{\mathfrak{x};\mathcal{W}_{t}}\Big)(\vvec{w}-\cvec{w}^\ast).
\end{equation}
Here, $\cvec{w}^\ast$ is calculated such that the gradient of $L$ can match that of the validation loss at $\vvec{w}_t$:
\[
\func{\frac{\partial L}{\partial\vvec{w}}}{\vvec{w}_t}=\mathbb{E}_{\mathfrak{x}\sim\mathcal{D}}\Big(\func{\frac{\partial\ell}{\partial\vvec{w}}}{\mathfrak{x};\mathcal{W}_{t}}\Big).
\]
Our goal is to simulate the dynamics of $L$, the \emph{layer-wise quadratic loss}. In this paper, we rigorously define ``\emph{layers}'' of a DNN as below.

\begin{definition}\label{def:layer}
A \emph{layer} is a subset $\vmat{W}\subset\mathcal{W}$ of model parameters such that:
\begin{enumerate}[leftmargin=3.5em, labelwidth=2.5em, labelsep=1em]
\item[(1)~] $\vmat{W}$ acts as a matrix multiplication, i.e., it takes a vector $\cvec{h}$ as input, and returns 
$\vvec{y}\eqdef\vmat{W}\cvec{h}$
as the layer output.
\item[(2a)] The forward pass to the input $\cvec{h}$ does not depend on $\vmat{W}$. In other words, $\cvec{h}$ can be written as 
$\cvec{h}=\func{\cvec{h}}{\mathfrak{x};\mathcal{W}^\mathrm{F}}$ where $\mathcal{W}^\mathrm{F}\cap\vmat{W}=\emptyset$.
\item[(2b)] The backward pass to the output gradient $\frac{\partial\ell}{\partial\vvec{y}}$ does not depend on $\vmat{W}$. In other words, $\frac{\partial\ell}{\partial\vvec{y}}$ can be written as $\frac{\partial\ell}{\partial\vvec{y}}=\func{\frac{\partial\ell}{\partial\vvec{y}}}{\mathfrak{x};\vvec{y},\mathcal{W}^\mathrm{B}}$, where $\mathcal{W}^\mathrm{B}\cap\vmat{W}=\emptyset$.
\end{enumerate}
\end{definition}

Intuitively, Definition~\ref{def:layer} ensures that $\vmat{W}$ acts as a ``standalone linear unit'' in the DNN. In practice, most DNNs can be decomposed into layers under this definition\footnote{In certain model designs, ``shared weights'' can break the conditions. For example, some LMs reuse the input embedding layer as the output logits head, which breaks condition (2a) and (2b) because the same weights are used at different sites of the model. We will leave such cases to future work.}. Because $\vmat{W}$ is generally a \emph{matrix}, we flatten it to a \emph{vector} $\vvec{w}$ in Definition~\ref{def:delta}.

We can calculate the \emph{per-example} gradient of $\vmat{W}$ from Definition~\ref{def:layer}: %, the (per-example) gradient of $\vmat{W}$ can be calculated as
\begin{equation}
\func{\frac{\partial\ell}{\partial\vmat{W}}}{\mathfrak{x};\mathcal{W}}=\func{\cvec{h}}{\mathfrak{x};\mathcal{W}^\mathrm{F}}\func{\frac{\partial\ell}{\partial\vvec{y}}}{\mathfrak{x};\vvec{y},\mathcal{W}^\mathrm{B}}.
\end{equation}
In the RHS above, the only component that depends on $\vmat{W}$ is $\vvec{y}$. Therefore, if we focus on the evolution of $\vmat{W}_{t+s}$, everything except the linear regression $\vvec{y}_{t+s}$ can be treated as ``data given''. This is the main idea for the next section.

\subsection{Per-Layer Approximation by Random Feature Models}
\label{sec:dnn2rf}

Let $\vmat{W}$ be a $\cscalar{C}\times\cscalar{D}$ matrix. The next theorem suggests that one can approximate the per-example gradients of $\vmat{W}$, as if they come from a (generalized) Random Feature Model.
\begin{theorem}\label{thm:dnn2rf}
Fix a training step $t$. There exists a map $(\mathfrak{x},s)\mapsto\cvec{x}_{s}$, such that for any training example $\mathfrak{x}$ and step $s\geq 0$, a $\cscalar{V}$-dimensional vector $\cvec{x}_{s}$ %(where $\cscalar{V}$ can be very large) 
satisfies the following:
%For any training example $\mathfrak{x}$ at step $\tau$ (where $t\leq\tau\leq t+s$), there is a latent vector representation $\cvec{x}$, of $\cscalar{V}$ dimension (where $\cscalar{V}$ can be very large), such that:
\begin{enumerate}[leftmargin=3.5em, labelwidth=2.5em, labelsep=1em]
% \item[(a)] The $\cscalar{D}$-dimensional vector $\cvec{h}_{t+s}\eqdef\func{\cvec{h}}{\mathfrak{x};\mathcal{W}^\mathrm{F}_{t+s}}$ can be approximated by
% \[
% \cvec{h}_{t+s}\approx\cmat{A}\cvec{x}_{s}=\cvec{h}_{t}+\Big(\func{\frac{\partial\cvec{h}}{\partial\mathcal{W}^\mathrm{F}}}{\mathfrak{x};\mathcal{W}^\mathrm{F}_{t}}\Big)(\mathcal{W}^\mathrm{F}_{t+s}-\mathcal{W}^\mathrm{F}_{t}),
% \]
% where $\cmat{A}$ is a constant $\cscalar{D}\times\cscalar{V}$ matrix that does not depend on $\mathfrak{x}$ or $s$.
\item[(a)] The $\cscalar{D}$-dimensional vector $\cvec{h}_{t+s}\eqdef\func{\cvec{h}}{\mathfrak{x};\mathcal{W}^\mathrm{F}_{t+s}}$ can be approximated by $\cmat{A}\cvec{x}_{s}$,
where $\cmat{A}$ is a constant $\cscalar{D}\times\cscalar{V}$ matrix that does not depend on $\mathfrak{x}$ or $s$.
\item[(b)] The $\cscalar{C}$-dimensional vector $\nabla\vvec{y}_{t+s}\eqdef\func{\frac{\partial\ell}{\transpose{\partial\vvec{y}}}}{\mathfrak{x};\vvec{y}_{t+s},\mathcal{W}^\mathrm{B}_{t+s}}$ (where $\vvec{y}_{t+s}\eqdef\vmat{W}_{t+s}\cvec{h}_{t+s}$), can be approximated by $\vmat{G}_{t+s}\cvec{x}_{s}$, where $\vmat{G}_{t+s}$ is an affine function of $\vmat{W}_{t+s}$ and a $\cscalar{C}\times\cscalar{V}$ matrix that does not depend on $\mathfrak{x}$.
% \item[(b)] The $\cscalar{C}$-dimensional vector $\nabla\vvec{y}_{t+s}\eqdef\func{\frac{\partial\ell}{\transpose{\partial\vvec{y}}}}{\mathfrak{x};\vvec{y}_{t+s},\mathcal{W}^\mathrm{B}_{t+s}}$, where $\vvec{y}_{t+s}\eqdef\vmat{W}_{t+s}\cvec{h}_{t+s}$, can be approximated by
% \begin{multline*}
% \nabla\vvec{y}_{t+s}\approx\vmat{G}_{t+s}\cvec{x}_{s}\approx\nabla\vvec{y}_{t}+\Big(\func{\frac{\partial^2\ell}{\transpose{\partial\vvec{y}}\partial\vvec{y}}}{\mathfrak{x};\vvec{y}_{t},\mathcal{W}^\mathrm{B}_{t}}\Big)(\vvec{y}_{t+s}-\vvec{y}_{t})\\
% +\Big(\func{\frac{\partial^2\ell}{\transpose{\partial\vvec{y}}\partial\mathcal{W}^\mathrm{B}}}{\mathfrak{x};\vvec{y}_{t},\mathcal{W}^\mathrm{B}_{t}}\Big)(\mathcal{W}^\mathrm{B}_{t+s}-\mathcal{W}^\mathrm{B}_{t}),
% \end{multline*}
% where $\vmat{G}_{t+s}$ is an affine function of $\vmat{W}_{t+s}$, and a $\cscalar{C}\times\cscalar{V}$ matrix that does not depend on $\mathfrak{x}$. %, and %$\vmat{G}_{t+s}=\func{\vmat{G}}{\vmat{W}_{t+s}}$ is 
%.
%\item[(c)] Furthermore, for any training example $\mathfrak{x}$ we have exactly
% \[
% \vmat{G}_{t+s}\cvec{x}_{0}\transpose{\cvec{x}_{0}}\transpose{\cmat{A}}=\func{\frac{\partial^2\ell}{\partial\vmat{W}\partial\vmat{W}}}{\mathfrak{x};\mathcal{W}_{t}}(\vmat{W}_{t+s}-\vmat{W}_{t})+\func{\frac{\partial\ell}{\partial\vmat{W}}}{\mathfrak{x};\mathcal{W}_{t}},
% \]
% which is the gradient of $\vmat{W}$ as obtained from the 2nd-order Taylor expansion at $\mathcal{W}_{t}$.
\end{enumerate}
\end{theorem}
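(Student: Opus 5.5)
The plan is to construct the latent vector $\cvec{x}_s$ explicitly from first-order Taylor expansions at step $t$, in the spirit of the linearization of \citet{DBLP:conf/nips/LeeXSBNSP19}. The trick is to place all the $\mathfrak{x}$-dependence into $\cvec{x}_s$ and all the $s$-dependence that is shared across examples into the matrices $\cmat{A}$ and $\vmat{G}_{t+s}$. Write $\Delta^\mathrm{F}_s\eqdef\mathcal{W}^\mathrm{F}_{t+s}-\mathcal{W}^\mathrm{F}_t$ and $\Delta^\mathrm{B}_s\eqdef\mathcal{W}^\mathrm{B}_{t+s}-\mathcal{W}^\mathrm{B}_t$. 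For (a), Definition~\ref{def:layer}(2a) says $\cvec{h}$ depends only on $\mathfrak{x}$ and $\mathcal{W}^\mathrm{F}$. Hence
\[
\cvec{h}_{t+s}\approx\cvec{h}_t+\Big(\func{\frac{\partial\cvec{h}}{\partial\mathcal{W}^\mathrm{F}}}{\mathfrak{x};\mathcal{W}^\mathrm{F}_t}\Big)\Delta^\mathrm{F}_s .
\]
The Jacobian depends on $\mathfrak{x}$ but not on $s$, while $\Delta^\mathrm{F}_s$ depends on $s$ but not on $\mathfrak{x}$. I will therefore flatten the Jacobian and $\cvec{h}_t$ into coordinates of $\cvec{x}_s$, and put $\Delta^\mathrm{F}_s$ into the coefficient matrix. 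Since $\cmat{A}$ must be independent of $s$, the $s$-dependent factor has to be moved into $\cvec{x}_s$ instead. Concretely, let one block of $\cvec{x}_s$ be the vector $\big(\cvec{h}_t,\ \tfrac{\partial\cvec{h}}{\partial\mathcal{W}^\mathrm{F}}\Delta^\mathrm{F}_s\big)$. Then $\cmat{A}=[\cmat{I}\ \cmat{I}\ 0\ \cdots]$ is a constant selection matrix, and (a) holds up to the Taylor remainder.

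For (b), Definition~\ref{def:layer}(2b) says $\nabla\vvec{y}$ depends on $\mathfrak{x}$, $\vvec{y}$ and $\mathcal{W}^\mathrm{B}$ only. Expanding at $(\vvec{y}_t,\mathcal{W}^\mathrm{B}_t)$ gives
\[
\nabla\vvec{y}_{t+s}\approx\nabla\vvec{y}_t+\cmat{H}_{yy}(\mathfrak{x})(\vvec{y}_{t+s}-\vvec{y}_t)+\cmat{H}_{yB}(\mathfrak{x})\Delta^\mathrm{B}_s .
\]
Next I linearize $\vvec{y}_{t+s}-\vvec{y}_t=\vmat{W}_{t+s}\cvec{h}_{t+s}-\vmat{W}_t\cvec{h}_t$. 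Using (a), this is $\vmat{W}_{t+s}(\cmat{A}\cvec{x}_s)-\vmat{W}_t\cvec{h}_t$, which is linear in $\cvec{x}_s$ with coefficient affine in $\vmat{W}_{t+s}$. The factor $\cmat{H}_{yy}(\mathfrak{x})$ sits to the left of $\vmat{W}_{t+s}$, so it cannot be pulled out of a product. To handle this I enlarge $\cvec{x}_s$ with a Kronecker/vec block. Using $\cmat{H}_{yy}\vmat{W}\cvec{h}=\sum_{c,d}W_{cd}\,(\cmat{H}_{yy})_{:,c}h_d$, I include the entries $(\cmat{H}_{yy})_{ic}(\cmat{A}\cvec{x}_s)_d$ as coordinates of $\cvec{x}_s$, together with $\cmat{H}_{yy}\vmat{W}_t\cvec{h}_t$, $\nabla\vvec{y}_t$ and $\cmat{H}_{yB}\Delta^\mathrm{B}_s$. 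Then $\nabla\vvec{y}_{t+s}\approx\vmat{G}_{t+s}\cvec{x}_s$, where the block of $\vmat{G}_{t+s}$ acting on the Kronecker coordinates has entries $W_{t+s,cd}$ and the remaining blocks are constant $\pm\cmat{I}$ selectors. This makes $\vmat{G}_{t+s}$ affine in $\vmat{W}_{t+s}$ and independent of $\mathfrak{x}$.

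The map $(\mathfrak{x},s)\mapsto\cvec{x}_s$ is then the concatenation of all these blocks. Its dimension $\cscalar{V}$ is huge but finite, of order $\cscalar{C}^2\cscalar{D}+\cscalar{C}+\cscalar{D}$ plus a few more. The approximation in (a) and (b) means equality up to second-order remainders in $\Delta^\mathrm{F}_s$, $\Delta^\mathrm{B}_s$ and $\vmat{W}_{t+s}-\vmat{W}_t$.

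The main obstacle is the circularity in (b). The latent $\cvec{x}_s$ must absorb $\mathfrak{x}$-dependent Hessian factors that multiply the trainable $\vmat{W}_{t+s}$, and it must do so without letting $\vmat{W}_{t+s}$ itself enter $\cvec{x}_s$; otherwise $\vmat{G}$ would not be affine in $\vmat{W}_{t+s}$ with $\mathfrak{x}$-free coefficients. The vec/Kronecker lifting is what resolves this, and I expect checking its bookkeeping carefully to be the delicate step. A related choice is whether $\cvec{x}_s$ should also be allowed to depend on $\Delta_s$ (which is $\mathfrak{x}$-free) or whether to instead keep $\cvec{x}$ fixed at $s=0$ and move $\Delta^\mathrm{F}_s$ into $\cmat{A}$. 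The latter is cleaner, but it contradicts the requirement that $\cmat{A}$ be constant in $s$, so I would keep the $s$-dependence in $\cvec{x}_s$ as above.
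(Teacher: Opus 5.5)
Your construction is correct, but it stores the $\mathfrak{x}$-dependence in a different place from the paper's.

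\textbf{The paper's route.} The paper keeps the $\mathfrak{x}$-dependent matrices out of $\cvec{x}$. It stacks the per-example Jacobians $\func{\cmat{J}}{\mathfrak{x}}$ and the Hessian blocks $\partial^2\ell/\partial\transpose{\vvec{y}}\,\partial\vvec{y}$ and $\partial^2\ell/\partial\transpose{\vvec{y}}\,\partial\mathcal{W}^\mathrm{B}$ over the whole (finite) training set into huge constant matrices $\tilde{\cmat{A}}$, $\cmat{B}$, $\tilde{\cmat{C}}$. Then $\cvec{x}$ identifies the example only through which block holds the shared, $\mathfrak{x}$-independent increments $\mathcal{W}^\mathrm{F}_{t+s}-\mathcal{W}^\mathrm{F}_t$ and $\mathcal{W}^\mathrm{B}_{t+s}-\mathcal{W}^\mathrm{B}_t$. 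The obstruction you identified (an $\mathfrak{x}$-dependent Hessian to the left of $\vmat{W}_{t+s}$) is handled by pairing the $k$-th Hessian block with the $k$-th padded Jacobian in $\cmat{B}\bigoplus_k(\vmat{W}_{t+s}\widetilde{\cmat{J}_k}+\cdots)$. The cost is $\cscalar{V}\approx\mbox{\#\,training-examples}\times\mbox{\#\,model-parameters}$.

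\textbf{Your route.} You evaluate the $\mathfrak{x}$-dependent products inside $\cvec{x}_s$, so $\cmat{A}$ and the non-$\vmat{W}$ part of $\vmat{G}_{t+s}$ are plain selectors. The Kronecker block $\cmat{I}_{\cscalar{C}}\otimes\transpose{\mathrm{vec}(\vmat{W}_{t+s})}$, acting on the coordinates $(\cmat{H}_{yy})_{ic}(\cmat{A}\cvec{x}_s)_d$, does the same job. This buys you three things:
\begin{itemize}
\item $\cscalar{V}=O(\cscalar{C}^2\cscalar{D})$;
\item no need for a finite dataset;
\item automatic pairing of each example with its own Hessian.
\end{itemize}
The last point matters: in the paper's formula as written, $\widetilde{\Delta\vmat{W}}$ reads the shared $\cvec{h}_t$ block. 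The stacked product therefore yields $\sum_k\func{\cmat{H}_{yy}}{\mathfrak{x}^{(k)}}(\vmat{W}_{t+s}-\vmat{W}_t)\cvec{h}_t$ unless $\cvec{h}_t$ is also copied into example-indexed blocks.

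\textbf{What the paper's route buys.} Its advantage is interpretive. $\cmat{A}$ really is a stacked tangent-feature matrix, and $\cvec{x}$ moves with $s$ only through shared parameter increments. That is the NTK/random-feature picture behind Assumption~\ref{ass:iid} and the SVD of $\cmat{A}$ in Prediction~\ref{pred:grad_norm}. Your selector $\cmat{A}$, by contrast, makes plain that the statement as written is weak, and that the real modeling content sits in that assumption.

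\textbf{A correction to your reasoning.} Your stated reason for keeping $\vmat{W}_{t+s}$ out of $\cvec{x}_s$ is not a formal requirement. Since $\cvec{x}_s$ may depend on $s$ arbitrarily, $\cvec{x}_s=(\cvec{h}_{t+s},\nabla\vvec{y}_{t+s})$ with constant selectors (trivially affine in $\vmat{W}_{t+s}$) already satisfies (a) and (b) exactly. The real reason is downstream: routing the $\vmat{W}_{t+s}$-dependence through $\vmat{G}_{t+s}$ is what lets $\mathbb{E}(\nabla\vvec{y}_{t+s}\nabla\transpose{\vvec{y}}_{t+s})\approx\vmat{G}_{t+s}\transpose{\vmat{G}}_{t+s}$ track the weights later. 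Both constructions respect this.
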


%At first glance, such linear representations as required by Theorem~\ref{thm:dnn2rf} may seem unreasonably strong, because DNNs are highly non-linear. However, we can naturally ``linearize'' the DNN as inspired by the Neural Tangent Kernel work \citep{DBLP:conf/nips/LeeXSBNSP19}.

At first glance, the \emph{linear} approximation as required by Theorem~\ref{thm:dnn2rf} may seem unreasonably strong, because DNNs are highly non-linear -- which is why our proof (given in \S~\ref{sec:proof_of_thm}) uses a latent space of extremely high dimensionality ($\cscalar{V}\approx\mbox{\#\,training-examples}\times\mbox{\#\,model-parameters}$) to ``linearize'' the DNN, as inspired by Neural Tangent Kernel. Therefore, the 
%A proof of Theorem~\ref{thm:dnn2rf} is given in Appendix~\ref{sec:proof_of_thm}. The 
vector $\cvec{x}_{s}$ and matrices $\cmat{A}$, $\vmat{G}_{t+s}$ are %extremely high-dimensional (with $\cscalar{V}\approx\mbox{\#\,training-examples}\times\mbox{\#\,model-parameters}$), and 
infeasible to calculate in practice -- However, they provide the necessary framework to apply powerful assumptions (e.g.~the independence of entries of $\cvec{x}_{s}$ under suitable change of basis) and theoretical tools (e.g.~SVD decomposition of $\cmat{A}$), enabling predictions that can be empirically verified.

%they bring the necessary language to enable powerful assumptions (e.g.~the independence of entries of $\cvec{x}_{s}$ under suitable change of basis) and theoretical tools (e.g.~SVD decomposition of $\cmat{A}$) that can make predictions to be empirically verified.

Under stochastic optimization, $\mathfrak{x}\sim\mathcal{D}$ is randomly drawn from the training data, so its corresponding representation $\cvec{x}_s$ is a random vector. Although the distribution of $\mathfrak{x}$ does not depend on the training step, the construction of $\cvec{x}_{s}$ in Theorem~\ref{thm:dnn2rf} does. In other words, there is generally \emph{distribution drift} of $\cvec{x}_{s}$ across different $s$. However, in this work, we impose a strong assumption that the distribution drift is negligible when $s$ is small. In fact, we assume that the distribution of $\cvec{x}_{s}$ is explicitly given as i.i.d.~Gaussian, so that we can align with a PLRF model:
\begin{assumption}\label{ass:iid}
One can replace the $\cvec{x}_{s}$ in Theorem~\ref{thm:dnn2rf} by a suitable linear transformation $\cmat{T}\cvec{x}$, where $\cmat{T}$ does not depend on $\mathfrak{x}$ or $s$, and the entries of $\cvec{x}$ are i.i.d.~Gaussian of mean $0$ and variance $1$.
\end{assumption}
When Assumption~\ref{ass:iid} is applied, we can omit the subscript of $\cvec{x}_{s}$ and absorb the linear transformation $\cmat{T}$ into $\cmat{A}$ and $\vmat{G}_{t+s}$. So $\cvec{h}_{t+s}\approx\cmat{A}\cvec{x}$ and $\nabla\vvec{y}_{t+s}\approx\vmat{G}_{t+s}\cvec{x}$, with the entries of $\cvec{x}$ being i.i.d.~Gaussian of mean $0$ and variance $1$. This assumption enables explicit calculation of the covariance of the per-example gradient, $\nabla\vmat{W}_{t+s}\eqdef\func{\frac{\partial\ell}{\transpose{\partial\vmat{W}}}}{\mathfrak{x};\mathcal{W}_{t+s}}\approx\vmat{G}_{t+s}\cvec{x}\,\transpose{\cvec{x}}\transpose{\cmat{A}}$, as below.

\begin{prediction}\label{pred:grad_norm}
Using Assumption~\ref{ass:iid}, and assume that $\cscalar{D}$ is sufficiently large. Let $\mathbb{E}(\cvec{h}_{t}\transpose{\cvec{h}}_{t})$ be the correlation matrix of input hidden vectors, where $\mathbb{E}$ is averaging over the training data $\mathfrak{x}\sim\mathcal{D}$. Let $\mathbb{E}(\cvec{h}_{t}\transpose{\cvec{h}}_{t})=\cmat{U}\diag(\vvec{\lambda})\transpose{\cmat{U}}$ be the eigen decomposition. Then,
\begin{equation}
\label{eq:egg}
\mathbb{E}(\vvec{g}_i\transpose{\vvec{g}_j})\approx\begin{cases}
\lambda_i\,\mathbb{E}(\nabla\vvec{y}_{t+s}\nabla\transpose{\vvec{y}}_{t+s})\quad&\mbox{if $i=j$} \\
\cmat{0}\quad&\mbox{if $i\neq j$}
\end{cases}
\end{equation}
where $\vvec{g}_i$, $\vvec{g}_j$ are the $i,j$-th column vectors of the $\cscalar{C}\times\cscalar{D}$ matrix $(\nabla\vmat{W}_{t+s})\cmat{U}$. In particular, we have
\begin{equation}\label{eq:grad_norm}
\mathbb{E}(\lVert\nabla\vmat{W}_{t+s}\lVert^2)\approx\sum_{i=1}^\cscalar{D}\lambda_i\,\mathbb{E}(\lVert\nabla\vvec{y}_{t+s}\rVert^2)
\end{equation}
by taking the trace of \eqref{eq:egg}.
\end{prediction}
We theoretically prove Prediction~\ref{pred:grad_norm} in \S~\ref{sec:proof_of_pred} and empirically verify in \S~\ref{sec:verification_of_pred}. This is a non-trivial result relying on the high-dimensionality of $\cscalar{D}$: Intuitively, it claims that for $\nabla\vmat{W}_{t+s}=(\nabla\vvec{y}_{t+s})\,\transpose{\cvec{h}}_{t+s}$, one can calculate the expected gradient norm $\mathbb{E}(\lVert\nabla\vmat{W}_{t+s}\lVert^2)$ as if $\lVert\nabla\vvec{y}_{t+s}\lVert$ and $\lVert\cvec{h}_{t+s}\rVert$ are independent -- Of course, $\nabla\vvec{y}_{t+s}\approx\vmat{G}_{t+s}\cvec{x}$ and $\cvec{h}_{t+s}\approx\cmat{A}\cvec{x}$ are \emph{not} independent because they share a common factor $\cvec{x}$ -- But the ``dependent part'' turns out to be negligible when $\cscalar{D}$ is large. In our case-study of a Transformer LM, we show that the empirical error of \eqref{eq:grad_norm} is mostly within $5\%$, even though the two sides simultaneously vary multiple magnitudes across different layers and training steps.

\subsection{Simulation Algorithm}
\label{sec:simulation_algorithm}

From the layer-wise quadratic loss $\func{L}{\vvec{w}}$ one can only obtain the \emph{average} gradient (which matches the approximate average gradient of the DNN, as calculated from the layer-wise Taylor expansion to the 2nd order); But in order to simulate stochastic optimization, one has to synthesize \emph{per-example} gradients, or more generally the \emph{mini-batched} gradients of the DNN.

Our idea is to use Prediction~\ref{pred:grad_norm}, which provides the 2nd moment of per-example gradients. Roughly speaking, we can synthesize ``pseudo gradients'' by randomly sampling Gaussians\footnote{For Random Feature models, the distribution of the per-example gradient $\vmat{G}_{t+s}\cvec{x}\,\transpose{\cvec{x}}\transpose{\cmat{A}}$ is \emph{not} Gaussian -- it's a product of two Gaussian random variables. However, since the distribution is still of the exponential family, and the simulation is a long-term accumulation of gradients across multiple steps, Gaussian sampling at each step should be a good approximation for long-term, average-case analysis.} to match the average and 2nd moment, at each step of the simulation.

In Prediction~\ref{pred:grad_norm}, one can directly measure $\mathbb{E}(\cvec{h}_{t}\transpose{\cvec{h}}_{t})$ from forward activations on a validation set (and Assumption~\ref{ass:iid} implies it does not change over time), but $\mathbb{E}(\nabla\vvec{y}_{t+s}\nabla\transpose{\vvec{y}}_{t+s})$ is tricky because it depends on $\vmat{W}_{t+s}$. However, for the Random Feature model given in \eqref{eq:rf}, where 
\[
y_{t+s}\eqdef\transpose{\vvec{w}}_{t+s}\cmat{A}\cvec{x}
\]
is a \emph{scalar}, one has
\begin{equation}
\label{eq:rf_simplification_volterra}
\mathbb{E}\big((\nabla y_{t+s})^2\big)=\mathbb{E}\Big(c^2\big(\transpose{\vvec{w}}_{t+s}\cmat{A}\cvec{x}-\func{b}{\cvec{x}}\big)^2\Big)=2c\,\mathbb{E}\big(\func{\ell}{\cvec{x};\vvec{w}_{t+s}}\big)
\end{equation}
which is a \emph{multiple of the validation loss}, a presumed quantity that we can handle. Essentially, this simplification \eqref{eq:rf_simplification_volterra} leads to the Volterra equation investigated in \citet{sgd_in_large} and the analytic solution of SGD training dynamics for Random Feature models. In this paper, we further apply \eqref{eq:rf_simplification_volterra} to DNNs, by making the following assumption:
%When the model is large, this is also difficult to be empirically measured even for a fixed step\footnote{For LMs, $\nabla\vvec{y}_{t+s}$ should be considered of $\cscalar{L}\times\cscalar{C}$ dimension where $\cscalar{L}$ is the sequence length, because different tokens in a sequence have correlations. This makes $\nabla\vvec{y}_{t+s}\nabla\transpose{\vvec{y}}_{t+s}$ a tensor of $(\cscalar{L}\times\cscalar{C})^2$ total dimensions, which is infeasible to calculate.}. However, the trace $\trace(\nabla\vvec{y}_{t+s}\nabla\transpose{\vvec{y}}_{t+s})=\lVert\nabla\vvec{y}_{t+s}\rVert^2$ is easy to measure. In order to conduct the simulation, we further make the following assumption:
\begin{assumption}\label{ass:trace}
$\mathbb{E}(\nabla\vvec{y}_{t+s}\nabla\transpose{\vvec{y}}_{t+s})$ can be roughly approximated by $\big(\mathbb{E}(\lVert\nabla\vvec{y}_{t+s}\rVert^2)/\cscalar{C}\big)\cmat{I}$, where $\cmat{I}$ is the identity matrix.
\end{assumption}
This final assumption hypothesizes that most eigenvalues of $\mathbb{E}(\nabla\vvec{y}_{t+s}\nabla\transpose{\vvec{y}}_{t+s})$ are close to each other, so that one can approximately collapse it into a scalar operator (which equals the mean eigenvalue $\mathbb{E}(\lVert\nabla\vvec{y}_{t+s}\rVert^2)/\cscalar{C}$). As a result, our simulation approximates the behavior of a $\cscalar{D}$-dimensional ``average slice'' of the $\cscalar{C}\times\cscalar{D}$ matrix $\vmat{W}$, so that one can apply \eqref{eq:rf_simplification_volterra} and predict the dynamics.
%. This leads exactly to a simple form of Random Feature model as defined in \eqref{eq:rf}, where $\transpose{\vvec{w}}$ is corresponding to the average slice and $\transpose{\vvec{w}}\cmat{A}\cvec{x}$ is a scalar.
\begin{algorithm}[t]
\caption{Simulating the approximate behavior of the layer-wise quadratic loss $L$ from step $t$ to $t+s$, where a $\cscalar{D}$-dimensional average slice $\transpose{\vvec{w}}$ of the $\cscalar{C}\times\cscalar{D}$ matrix $\vmat{W}$ gets trained.}\label{alg:simulation}
\begin{algorithmic}\setstretch{1.2}
\Require $\vvec{\lambda}=(\lambda_i)_{1\leq i\leq\cscalar{D}}$ and $\cmat{U}$, the eigenvalues and eigenvectors of $\mathbb{E}(\cvec{h}_{t}\transpose{\cvec{h}}_{t})$, respectively.
\Require $\mathbb{E}(\nabla\vmat{W}_{t})$, the average gradient of the DNN layer, at step $t$.
\Require $\mathbb{E}(\lVert\nabla_\cscalar{B}\vmat{W}_{t}\lVert^2)$, the mean squared-norm of per-batch gradients.
\Require $\trace\big(\mathbb{E}(\nabla^2\vmat{W}_{t})\big)$, the trace of the layer-wise Hessian.
\Require $\func{\phi}{\bullet}$, an optimizer to map (synthetic) per-batch gradients to updates.
\Ensure $\{L_{t+j}\}_{1\leq j\leq s}$, a trajectory of simulated layer-wise quadratic loss.
\State $c \gets \trace\big(\mathbb{E}(\nabla^2\vmat{W}_{t})\big)/(\sum_{i}\lambda_i\cscalar{C})$ \Comment{Approximate the constant $c$ in \eqref{eq:rf}.}
\State $\mathfrak{B}\gets\dfrac{(\sum_i\lambda_i)^2}{\sum_i\lambda_i^2}\dfrac{\lVert\mathbb{E}(\nabla\vmat{W}_{t})\rVert^2}{\mathbb{E}(\lVert\nabla_\cscalar{B}\vmat{W}_{t}\lVert^2)}$ \Comment{The equivalent batch factor.}
\State $\vvec{z} \gets \rms\big(\mathbb{E}(\nabla\vmat{W}_{t})\cmat{U},\,\texttt{axis}=0\big)/\vvec{\lambda}$ \Comment{Initialize $c(\vvec{w}-\cvec{w}^\ast)$.}
\For{$j=0,\ldots,s$}
\State $L_{t+j} \gets \dfrac{\cscalar{C}}{2c}\sum_i\lambda_iz_i^2$ \Comment{Output.}
\State $n_k\sim\mathcal{N}\Big(0,\lambda_k\big\lvert\sum_i\lambda_iz_i^2/\mathfrak{B}-\lambda_k z_k^2\big\rvert\Big)$ \Comment{Sample stochastic noise of per-batch gradient.}
\State $\vvec{g} \gets \vvec{\lambda}\odot\vvec{z} + (n_k)_{1\leq k\leq\cscalar{D}}$ \Comment{Synthetic per-batch gradient.}
\State $\vvec{u} \gets \transpose{\cmat{U}}\func{\phi}{\cmat{U}\vvec{g}}$ \Comment{Call optimizer.}
\State $\vvec{z} \gets \vvec{z}-c\vvec{u}$ \Comment{Update.}
\EndFor
\end{algorithmic}
\end{algorithm}

The simulation pseudo-code is given in Algorithm~\ref{alg:simulation}. We choose the constant $c$ in the Random Feature model \eqref{eq:rf} such that the trace of the Hessian matches that of the DNN Hessian in the layer-wise quadratic loss \eqref{eq:layer_quadratic_loss}. This approximation is necessary also because when the DNN is large, the full Hessian tensor of $(\cscalar{C}\times\cscalar{D})^2$ total dimensions is infeasible to calculate. Then, our algorithm tracks the $\cscalar{D}$-dimensional vector $\vvec{z}=c(\vvec{w}-\cvec{w}^\ast)$, and initializes $\vvec{z}$ such that the norms of the initial average gradients match in every eigenmode. Finally, we synthesize \emph{per-batch} gradients instead of \emph{per-example} gradients, depending on the batch-size. For this purpose, we define an \emph{equivalent batch factor} $\mathfrak{B}$ which is empirically measured to match the initial mean squared-norm of per-batch gradients. Under certain conditions, $\mathfrak{B}$ coincides with the actual batch-size for Random Feature models (\S~\ref{sec:equiv_batch_factor_derivation}), and we break down the details of $\mathfrak{B}$ 
%The equivalent batch factor under certain conditions coincides with the actual batch-size for Random Feature models (\S~\ref{sec:equiv_batch_factor_derivation}), and we will break down the details of $\mathfrak{B}$ 
for DNNs in \S~\ref{sec:equiv_batch_factor_dnn_breakdown}. The theoretical derivation of Algorithm~\ref{alg:simulation} is given in \S~\ref{sec:derive_alg}. When applied to PLRF models, we empirically verify that the algorithm correctly captures training dynamics, including spikes (Figure~\ref{fig:plrf_simulation}).

Because our simulation does not call the backward pass of a DNN at each step -- it synthesizes a random gradient instead -- the algorithm is extremely lightweight. The initial inputs require $\mathbb{E}(\nabla\vmat{W}_{t})$, $\mathbb{E}(\lVert\nabla_\cscalar{B}\vmat{W}_{t}\lVert^2)$, $\trace\big(\mathbb{E}(\nabla^2\vmat{W}_{t})\big)$ and (the eigen decomposition of) $\mathbb{E}(\cvec{h}_{t}\transpose{\cvec{h}}_{t})$, which cost one pass over a validation set. See Table~\ref{tab:compute} for a summary of the compute requirements.

\subsection{A Sketch of why Training Spikes Occur}
\label{sec:sketch}

In this section, we briefly explain why power-law distributed inputs can lead to training spikes. %the stochastic noise of per-example gradients combined with power-law distributed inputs can lead to training spikes.
For simplicity, we analyze the dynamics of $\vvec{w}$ under SGD. (We expect optimizers with momentum and precondition, such as Adam, to follow a similar principle.) At any step $t$, let $\eta_{t}$ be the learning-rate. The update of SGD is given by
%Using Assumption~\ref{ass:iid}, we will analyze the dynamics of $\vmat{W}$ under SGD (in our experiments, preconditioned optimizers such as RMSProp will be approximated by averaging the preconditioning to a scalar learning-rate). For any $s\geq 0$, let $\gamma_{t+s}$ be the learning-rate. The update rule is given by
\begin{equation}\label{eq:update}
\vvec{w}_{t+1}=\vvec{w}_{t}-\eta_{t}\nabla\vvec{w}_{t},
\end{equation}
where $\nabla\vvec{w}_{t}$ is a per-example gradient. This is a random process depending on the data sample at each step; We apply the decomposition $\nabla\vvec{w}_{t}=\mathbb{E}(\nabla\vvec{w}_{t})+\vvec{n}_{t}$, where $\mathbb{E}$ averages over the data distribution, and $\vvec{n}_{t}$ is a random vector of mean $\cvec{0}$, representing the stochastic noise of training samples.

Under a quadratic loss, the average gradient $\mathbb{E}(\nabla\vvec{w}_{t})$ is in the form $\mathbb{E}(\nabla\vvec{w}_{t})=\cmat{H}(\vvec{w}_{t}-\cvec{w}^\ast)$, where $\cmat{H}$ is the Hessian. Thus, we can decompose \eqref{eq:update} into $\vvec{w}_{t}=\overline{\vvec{w}}_{t}+\vvec{r}_{t}$, such that
\begin{align}
\overline{\vvec{w}}_{t+1}-\cvec{w}^\ast&=(\cmat{I}-\eta_{t}\,\cmat{H})(\overline{\vvec{w}}_{t}-\cvec{w}^\ast) \label{eq:predictable} \\
\mbox{and}\quad\vvec{r}_{t+1}&=(\cmat{I}-\eta_{t}\,\cmat{H})\,\vvec{r}_{t}-\eta_{t}\,\vvec{n}_{t}. \label{eq:randomness}
\end{align}
So $\overline{\vvec{w}}_{t}$ is the predictable full-batch gradient descent, while $\vvec{r}_{t}$ is a random walk with mean $\cvec{0}$, accumulating the stochastic noise $\vvec{n}_{t}$ across steps. When the Central Limit Theorem applies, $\vvec{r}_{t}$ should approximately follow a (anisotropic) Gaussian distribution.

Intuitively, one can imagine $\vvec{r}_{t}$ as a high-dimension (isotropic) Gaussian with a given variance $\sigma^2$; so for any random $\vvec{r}_{t}$, we know that $\lVert\vvec{r}_{t}\rVert^2\approx\cscalar{D}\sigma^2$ with high probability, while $\vvec{r}_{t}$ can point to any direction. The fact that $\lVert\vvec{r}_{t}\rVert^2\approx\cscalar{D}\sigma^2$ holds (with high probability) is an example of the more general principle called ``\emph{concentration of measure}'': the value of a \emph{scalar} function of $\vvec{r}_{t}$ ``concentrates'' to its expectation, because the random fluctuations of entries of $\vvec{r}_{t}$ can cancel out each other, much like in the Law of Large Numbers.

The validation loss $\mathbb{E}\big(\func{\ell}{\cvec{x};\bullet}\big)$ is a scalar, quadratic function of $\vvec{w}_{t}=\overline{\vvec{w}}_{t}+\vvec{r}_{t}$. So if this quadratic function is similar to $\lVert\bullet\rVert^2$, by analogy to the above, the concentration of measure principle applies and $\mathbb{E}\big(\func{\ell}{\cvec{x};\overline{\vvec{w}}_{t}+\vvec{r}_{t}}\big)$ should (with high probability) not depend on the random direction of $\vvec{r}_{t}$. As a result, $\mathbb{E}\big(\func{\ell}{\cvec{x};\overline{\vvec{w}}_{t}+\vvec{r}_{t}}\big)$ converges to a deterministic trajectory at the high-dimension limit $\cscalar{D}\to\infty$.

However, in order for the ``concentration of measure'' to hold, the eigenvalues of the quadratic function $\mathbb{E}\big(\func{\ell}{\cvec{x};\bullet}\big)$ cannot be too heavy-tailed. If there are some very large eigenvalues that cannot be counter-weighted by all the small ones, the value of $\mathbb{E}\big(\func{\ell}{\cvec{x};\overline{\vvec{w}}_{t}+\vvec{r}_{t}}\big)$ will suddenly surge when $\vvec{r}_{t}$ randomly points toward directions of the large eigenvalues -- a ``\emph{spike}'' will occur.

We also note that, the learning-rate has to be relatively large so that $\vvec{r}_{t}$ can accumulate to a significant amount, for the training spikes to manifest.
%in order for the spikes to be observed,    the random $\vvec{r}_{t}$ (which evolves as in \eqref{eq:randomness}, depending on the learning-rate $\eta_t$) has large norm. 
On the other hand, $\vvec{r}_{t}$ starts to grow larger and larger if the learning-rate is too high -- which means the training diverges. The threshold $\gamma$ of divergence can be accurately calculated for SGD, which is given in \eqref{eq:gamma}. One observes large training spikes when the learning-rate is slightly below $\gamma$. For general optimizers, one can use our simulation to find the threshold learning-rate, because Algorithm~\ref{alg:simulation} faithfully models the accumulation of random gradient noise. For more technical details, please refer to Appendix~\ref{sec:derive_gamma} and \ref{sec:concentration_alpha}.

\begin{table}[t]
\centering
\resizebox{\textwidth}{!}{%
\begin{tabular}{@{}lcccccc@{}}
\toprule
Experiment & Train            & Eval $\delta^{\vvec{w}}_{t,s}$ & Eval $\mathbb{E}(\cvec{h}_{t}\transpose{\cvec{h}}_{t})$ & $\mathbb{E}(\nabla\vmat{W}_{t})$ \& $\mathbb{E}(\lVert\nabla_\cscalar{B}\vmat{W}_{t}\lVert^2)$ & $\trace\big(\mathbb{E}(\nabla^2\vmat{W}_{t})\big)$ & Simulation \\ \midrule
Data size  & 25.5B tokens     & 336M tokens     & 336M tokens                                             & 336M tokens                                                                                     & 336M tokens                                        & 1500 steps \\
Model size & 1.3B (24 blocks) & 24 \texttt{ffn\_0} & all layers                                              & all layers                                                                                      & 6 \texttt{ffn\_0}                                & 1 layer    \\
Time       & 40 hrs           & 1.1 hrs           & 8 min                                                   & 15 min                                                                                          & 1.7 hrs                                            & 5 sec      \\
Machine    & TPUv4$\times32$  & TPUv4$\times32$ & TPUv4$\times32$                                         & TPUv4$\times32$                                                                                 & TPUv4$\times32$                                    & CPU        \\ \bottomrule
\end{tabular}%
}
\caption{Summary of compute resources for running experiments in this paper.}
\label{tab:compute}
\end{table}

% \begin{table}[t]
% \scriptsize
% \centering
% \begin{tabular}{@{}lcccccc@{}}
% \toprule
% Experiment & Train            & Eval $\delta^{\vvec{w}}_{t,s}$ & Eval $\mathbb{E}(\cvec{h}_{t}\transpose{\cvec{h}}_{t})$ & $\mathbb{E}(\nabla\vmat{W}_{t})$ \& $\mathbb{E}(\lVert\nabla_\cscalar{B}\vmat{W}_{t}\lVert^2)$ & $\trace\big(\mathbb{E}(\nabla^2\vmat{W}_{t})\big)$ & Simulation \\ \midrule
% Data size  & 25.5B tokens     & 336M tokens     & 336M tokens                                             & 336M tokens                                                                         & 336M tokens                                        & 1500 steps \\
% Model size & 1.3B (24 blocks) & 24 \texttt{ffn\_0} & all layers                                              & all layers                                                                          & 6 \texttt{ffn\_0}                            & 1 layer    \\
% Time       & 40 hrs           & 1.1 hrs           & 8 min                                                   & 15 min                                                                              & 1.7 hrs                                            & 5 sec      \\
% Machine    & TPUv4$\times32$  & TPUv4$\times32$ & TPUv4$\times32$                                         & TPUv4$\times32$                                                                     & TPUv4$\times32$                                    & CPU        \\ \bottomrule
% \end{tabular}
% \caption{Summary of compute resources for running experiments in this paper.}
% \label{tab:compute}
% \end{table}

\section{Simulation of Transformer LM Training}
\label{sec:simulation_transformer_LM}

\begin{figure}[t!]
 \centering
 \begin{subfigure}[b]{0.32\textwidth}
     \centering
     \includegraphics[width=\textwidth]{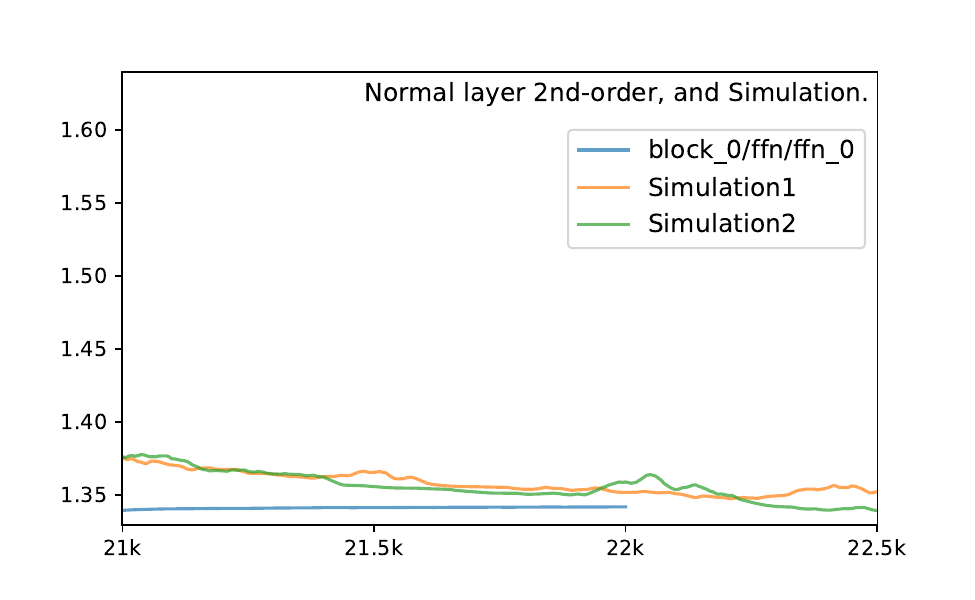}
     %\caption{(a)}
     %\label{fig:y equals x}
 \end{subfigure}
 \begin{subfigure}[b]{0.32\textwidth}
     \centering
     \includegraphics[width=\textwidth]{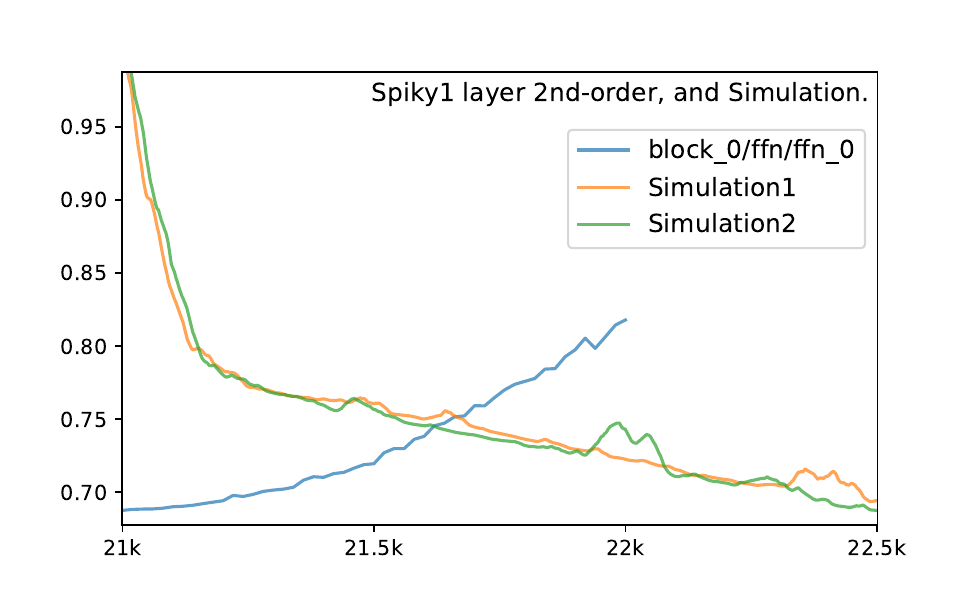}
     %\caption{(a)}
     %\label{fig:y equals x}
 \end{subfigure}
 \begin{subfigure}[b]{0.32\textwidth}
     \centering
     \includegraphics[width=\textwidth]{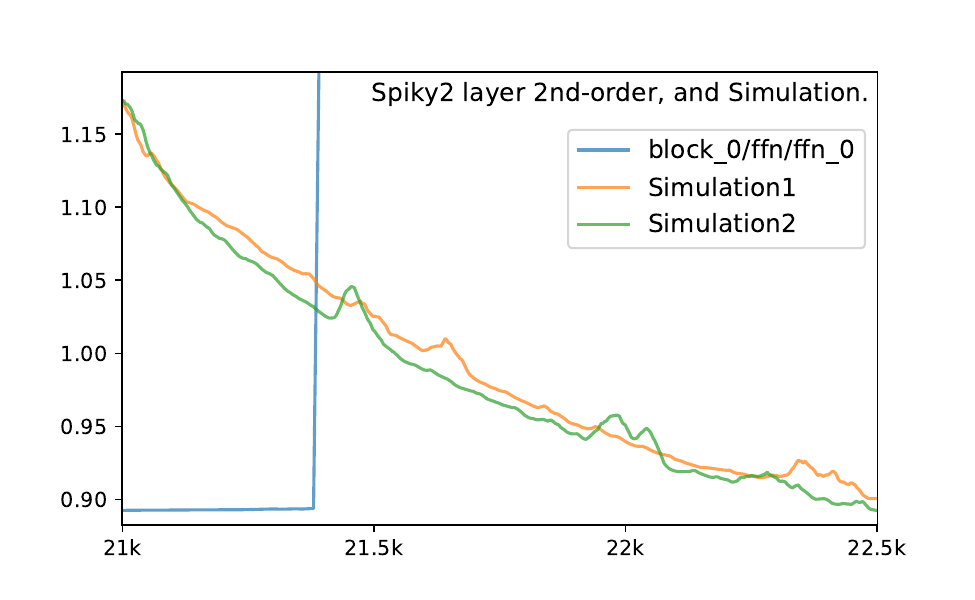}
     %\caption{(a)}
     %\label{fig:y equals x}
 \end{subfigure}\\
 \begin{subfigure}[b]{0.32\textwidth}
     \centering
     \includegraphics[width=\textwidth]{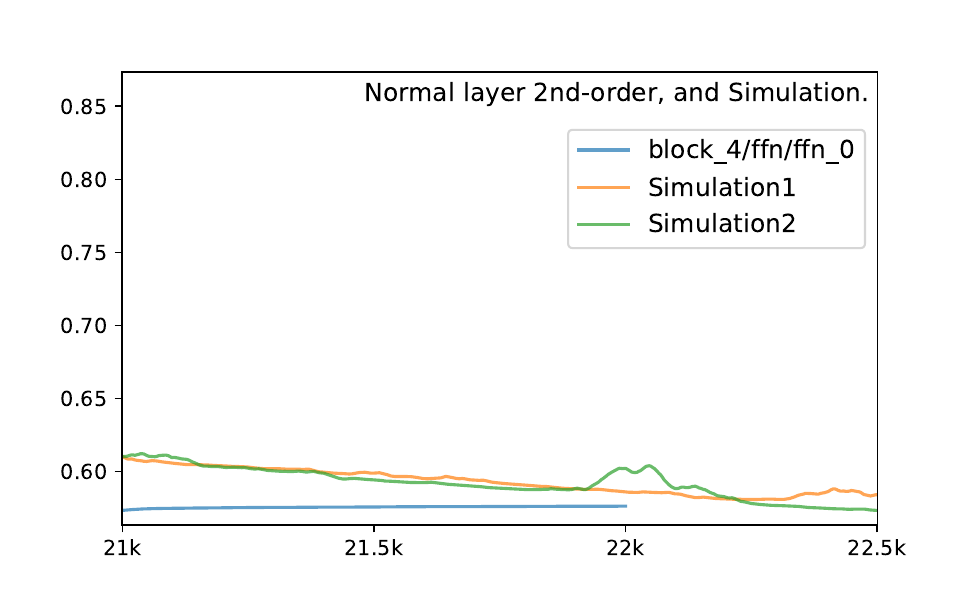}
     %\caption{(a)}
     %\label{fig:y equals x}
 \end{subfigure}
 \begin{subfigure}[b]{0.32\textwidth}
     \centering
     \includegraphics[width=\textwidth]{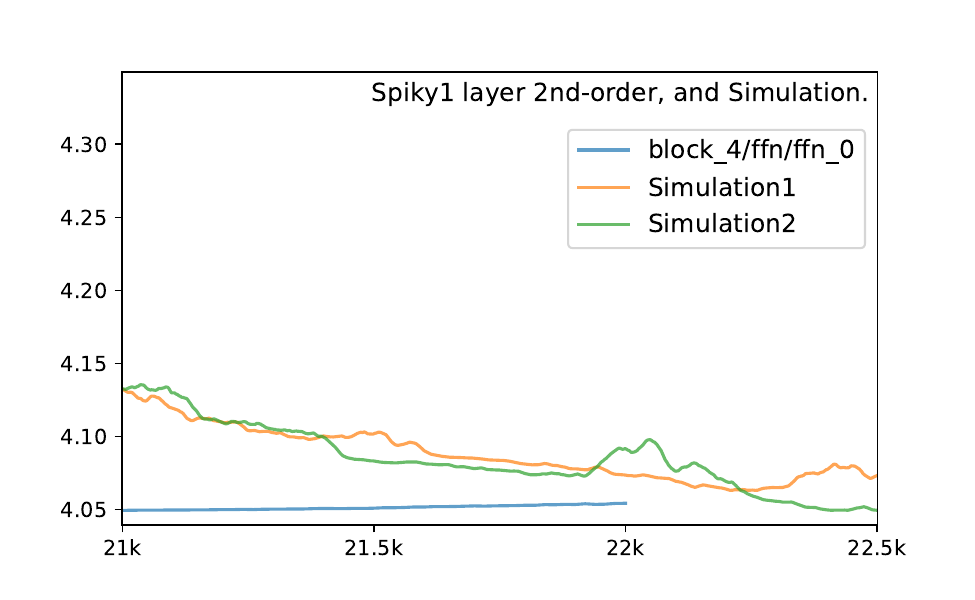}
     %\caption{(a)}
     %\label{fig:y equals x}
 \end{subfigure}
 \begin{subfigure}[b]{0.32\textwidth}
     \centering
     \includegraphics[width=\textwidth]{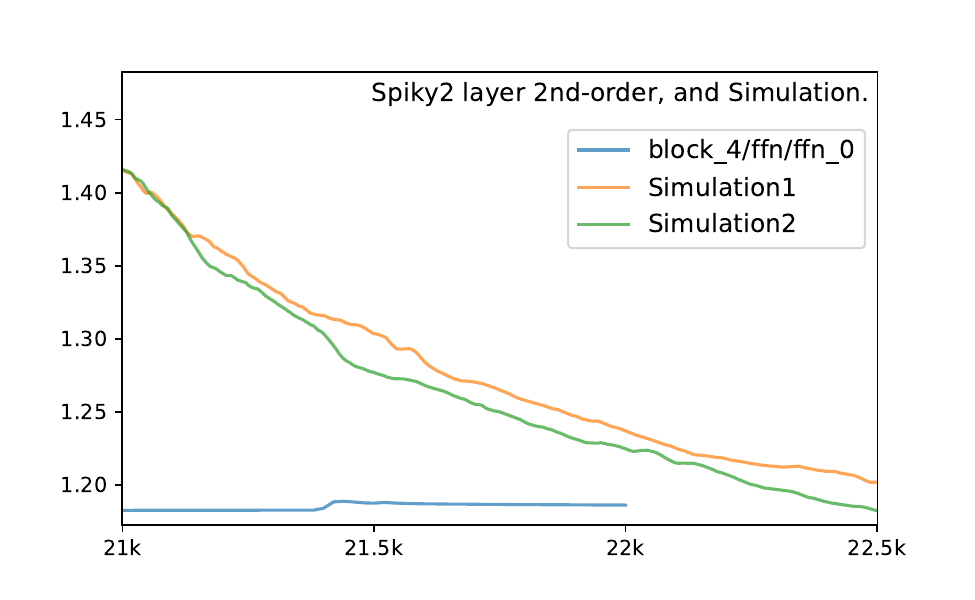}
     %\caption{(a)}
     %\label{fig:y equals x}
 \end{subfigure}\\
 \begin{subfigure}[b]{0.32\textwidth}
     \centering
     \includegraphics[width=\textwidth]{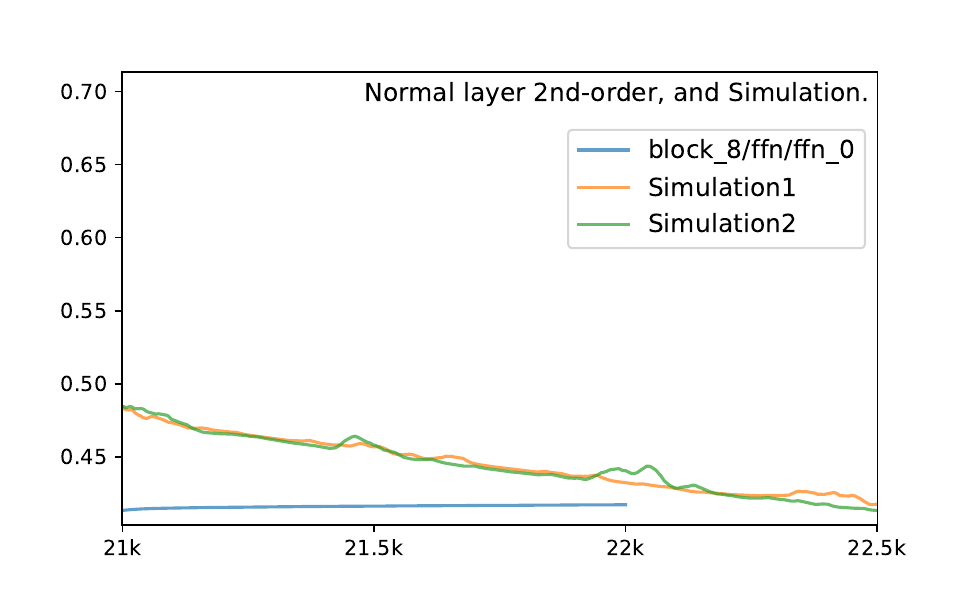}
     %\caption{(a)}
     %\label{fig:y equals x}
 \end{subfigure}
 \begin{subfigure}[b]{0.32\textwidth}
     \centering
     \includegraphics[width=\textwidth]{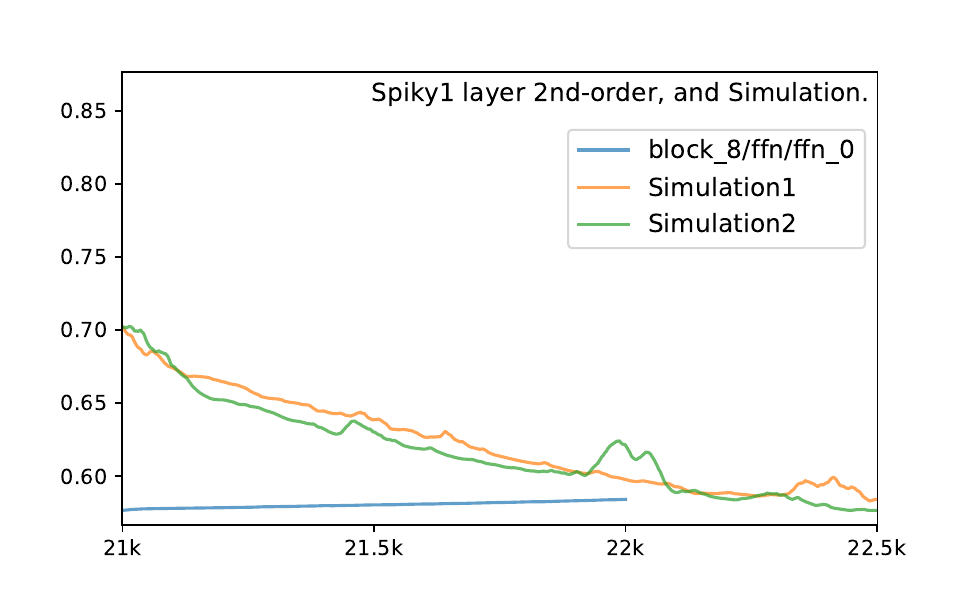}
     %\caption{(a)}
     %\label{fig:y equals x}
 \end{subfigure}
 \begin{subfigure}[b]{0.32\textwidth}
     \centering
     \includegraphics[width=\textwidth]{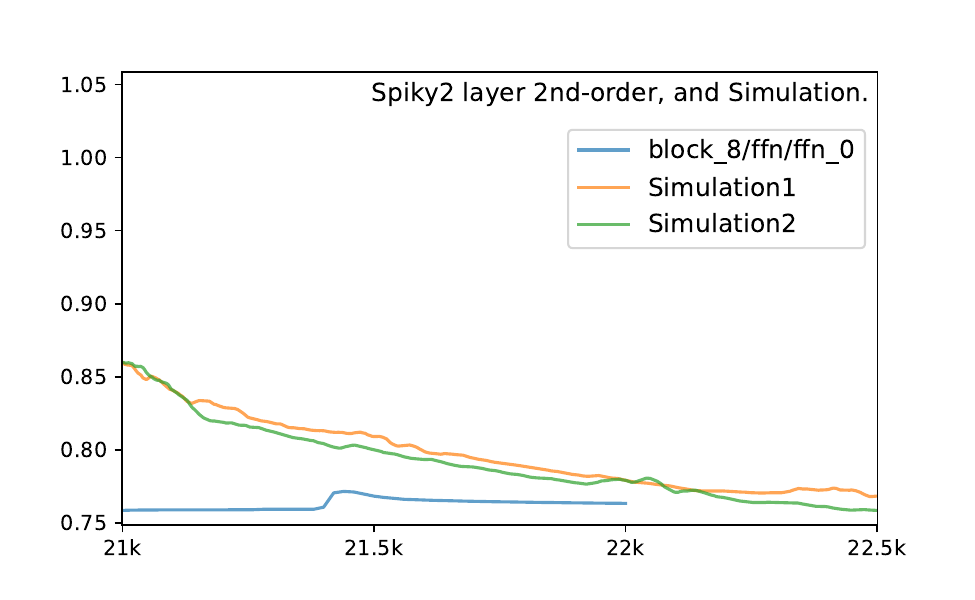}
     %\caption{(a)}
     %\label{fig:y equals x}
 \end{subfigure}\\
 \begin{subfigure}[b]{0.32\textwidth}
     \centering
     \includegraphics[width=\textwidth]{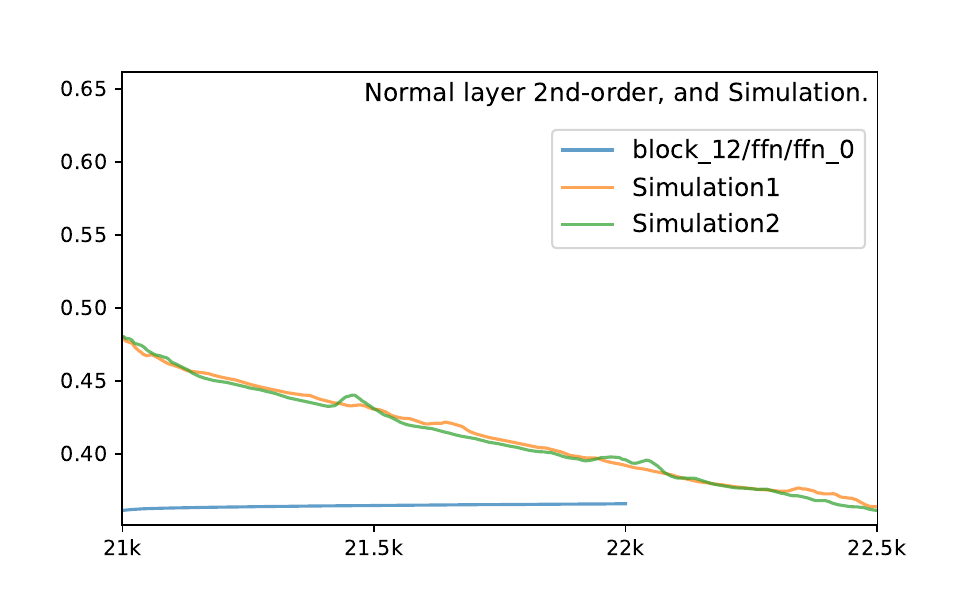}
     %\caption{(a)}
     %\label{fig:y equals x}
 \end{subfigure}
 \begin{subfigure}[b]{0.32\textwidth}
     \centering
     \includegraphics[width=\textwidth]{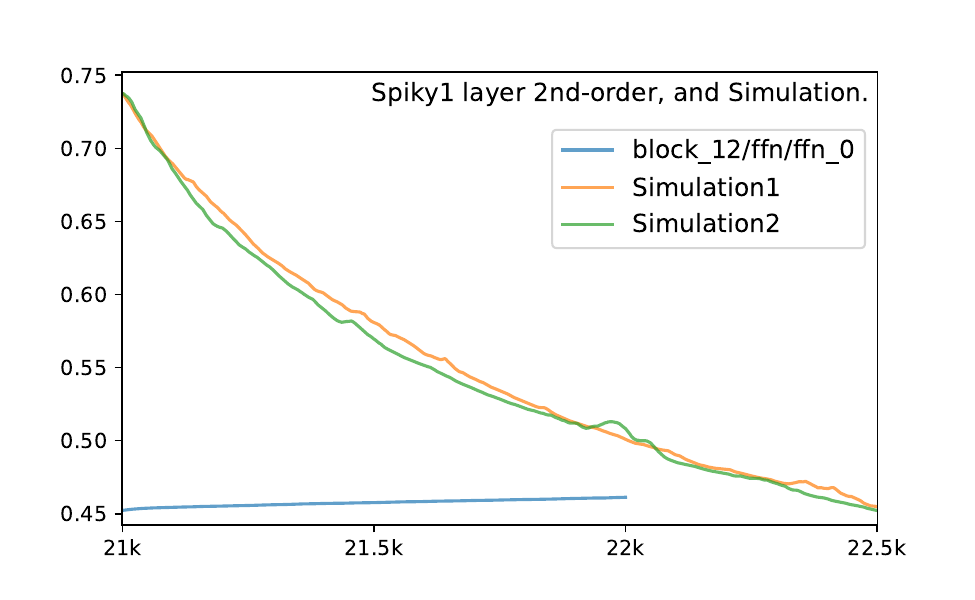}
     %\caption{(a)}
     %\label{fig:y equals x}
 \end{subfigure}
 \begin{subfigure}[b]{0.32\textwidth}
     \centering
     \includegraphics[width=\textwidth]{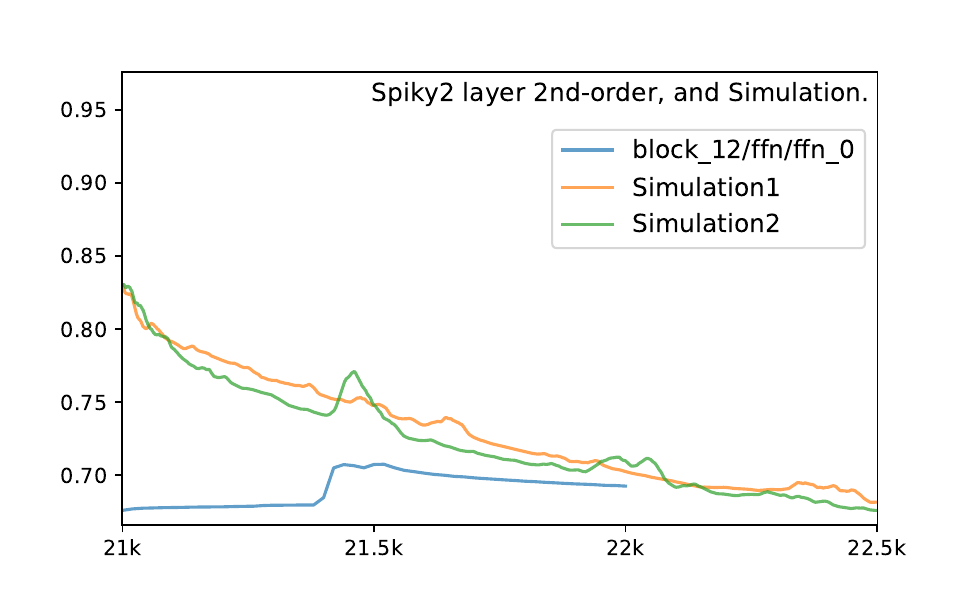}
     %\caption{(a)}
     %\label{fig:y equals x}
 \end{subfigure}\\
 \begin{subfigure}[b]{0.32\textwidth}
     \centering
     \includegraphics[width=\textwidth]{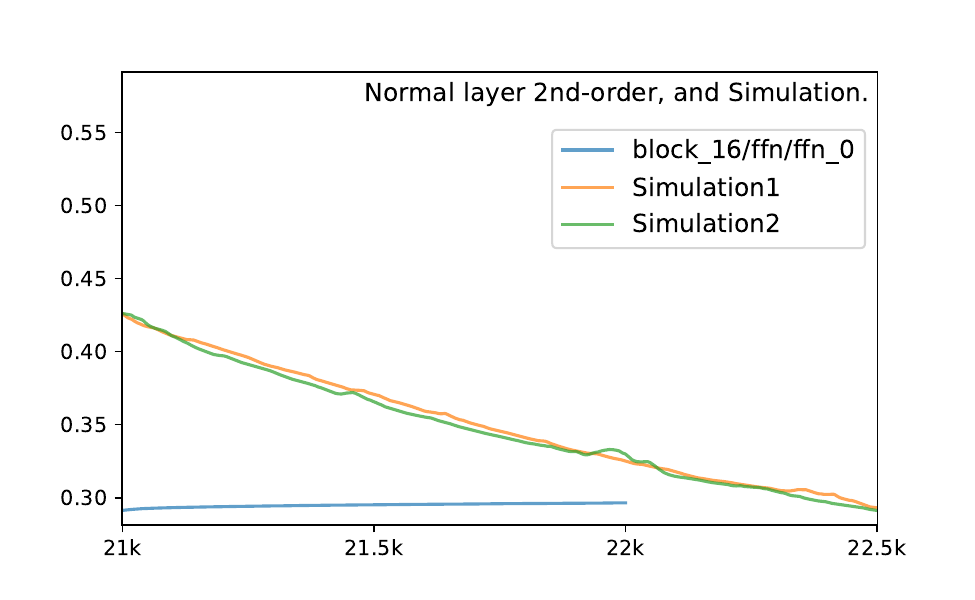}
     %\caption{(a)}
     %\label{fig:y equals x}
 \end{subfigure}
 \begin{subfigure}[b]{0.32\textwidth}
     \centering
     \includegraphics[width=\textwidth]{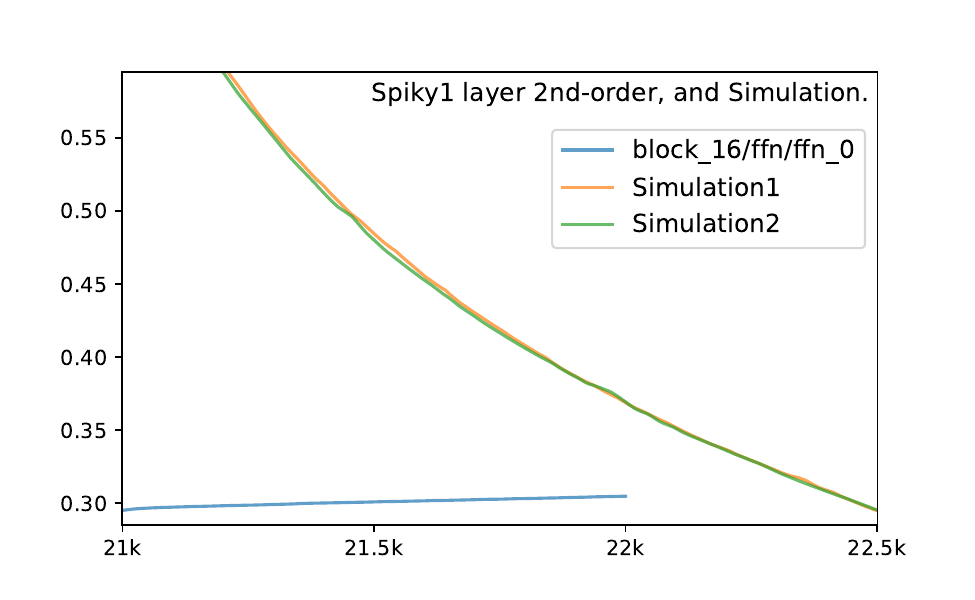}
     %\caption{(a)}
     %\label{fig:y equals x}
 \end{subfigure}
 \begin{subfigure}[b]{0.32\textwidth}
     \centering
     \includegraphics[width=\textwidth]{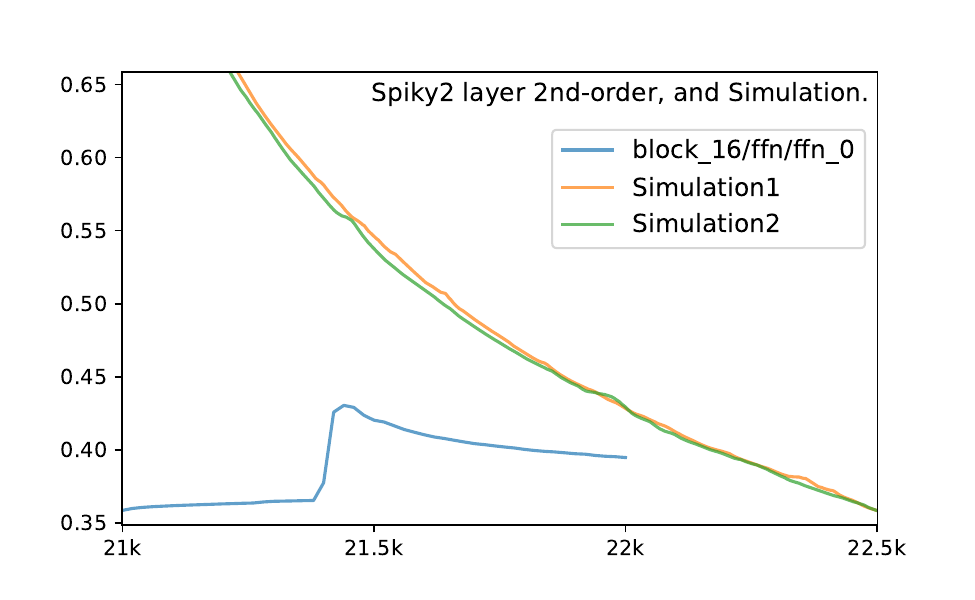}
     %\caption{(a)}
     %\label{fig:y equals x}
 \end{subfigure}\\
 \begin{subfigure}[b]{0.32\textwidth}
     \centering
     \includegraphics[width=\textwidth]{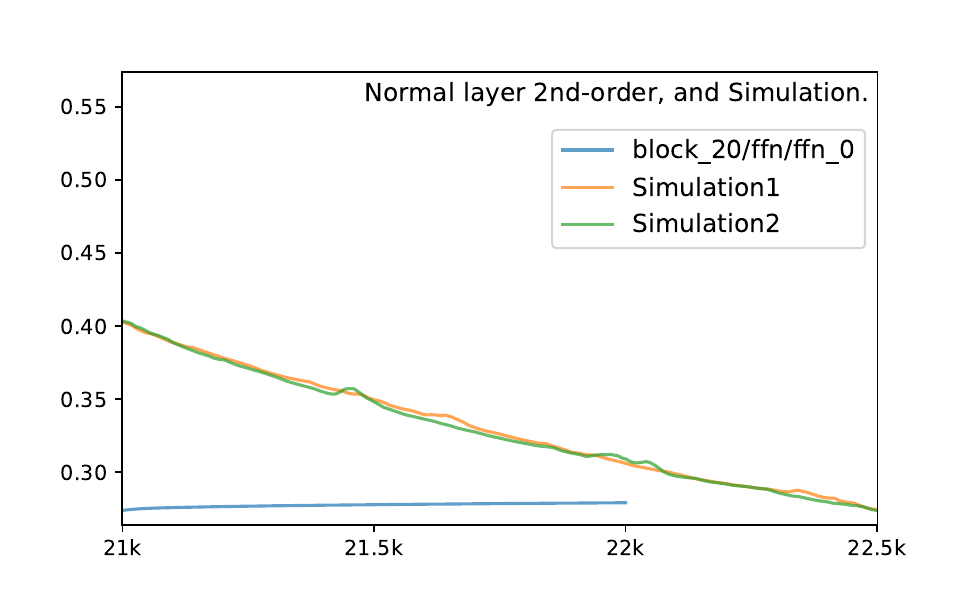}
     %\caption{(a)}
     %\label{fig:y equals x}
 \end{subfigure}
 \begin{subfigure}[b]{0.32\textwidth}
     \centering
     \includegraphics[width=\textwidth]{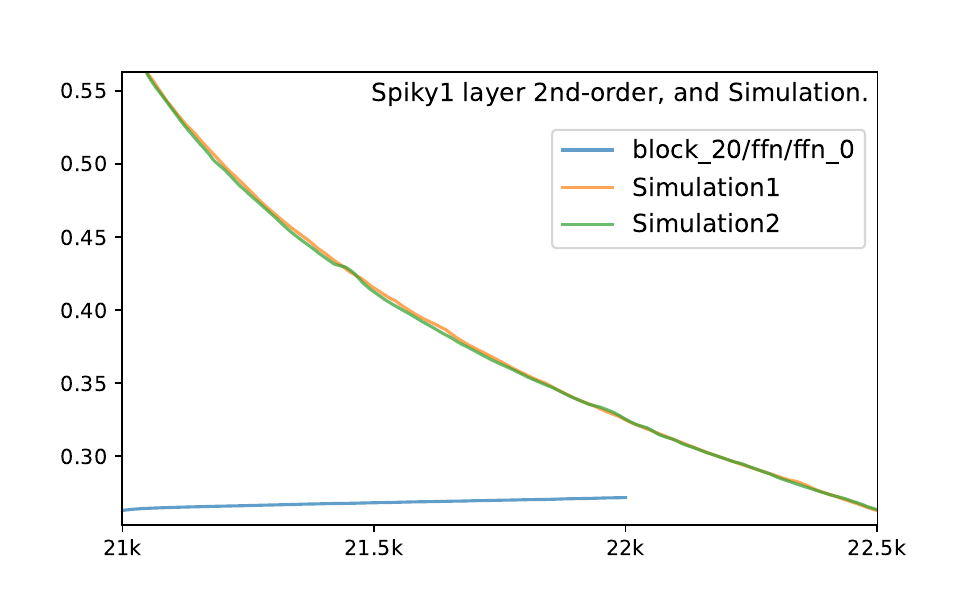}
     %\caption{(a)}
     %\label{fig:y equals x}
 \end{subfigure}
 \begin{subfigure}[b]{0.32\textwidth}
     \centering
     \includegraphics[width=\textwidth]{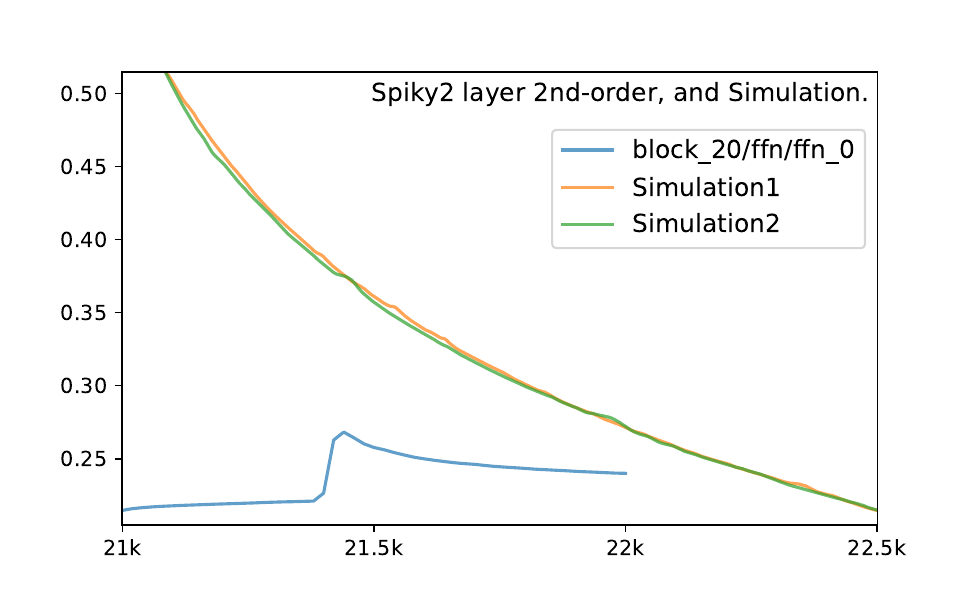}
     %\caption{(a)}
     %\label{fig:y equals x}
 \end{subfigure}
\caption{Simulation of the layer-wise quadratic approximation of validation loss at 6 layers (the feed-forward input of block\_\{0,4,8,12,16,20\}), from training step 21k to 22.5k. We compare 3 settings of training (Normal, Spiky1 and Spiky2) as in Figure~\ref{fig:spiky_loss_all}, and plot two simulations of different seeds, together with the actual 2nd-order term measured from corresponding training trajectories. The scale of the $y$-axis is kept the same for all plots, so that the fluctuations can be compared.}
\label{fig:dnn_simulation}
\end{figure}

As a case-study, we train a vanilla Transformer decoder model of 1.3B parameters (24 Transformer blocks) on the C4 dataset, to the Chinchilla optimal number of tokens. We use the AdamW optimizer with hyperparameters inherited from previous research \citep{DBLP:conf/icml/EverettXWANLGSK24,DBLP:journals/corr/abs-2503-09799}, which are tuned to near optimal. Then, we change $\beta_2$ of AdamW from $0.95$ to $0.999$ in order to produce training spikes. We obtain three training trajectories: Normal, Spiky1 and Spiky2, as illustrated in Figure~\ref{fig:spiky_loss_all}. More details are given in \S~\ref{sec:llm_train_settings}.

We take the model checkpoints at step 21k, and run simulations for all three configurations. Optimizer states are initialized from model checkpoints, but we did not apply weight-decay during the simulation. Figure~\ref{fig:dnn_simulation} shows the results of 6 feed-forward layers simulated (with 2 random seeds each) from step 21k to 22.5k. We see clear bumps in the simulation for \emph{both} Spiky1 and Spiky2 (e.g., block\_\{0,4,8\} at step $\sim$22k for Spiky1, and block\_\{0,12\} at step $\sim$21.5k for Spiky2), while the simulation for Normal is comparatively more smooth. It suggests that our simulation has the predictive power of telling whether a training configuration can lead to spikes, from a given model checkpoint. It is noteworthy that in the actual training trajectory (Figure~\ref{fig:spiky_loss_all}), only Spiky2 spikes between steps 21k and 22k; And the spiky positions in our simulations differ between random seeds, demonstrating the random nature of training spikes. In \S~\ref{sec:power_law_dist}, we will also show that the distribution of eigenvalues of $\mathbb{E}(\cvec{h}_{t}\transpose{\cvec{h}}_{t})$ is significantly more heavy-tailed for spiky configurations.

On the other hand, we plot the 2nd-order term of $\delta^{\vvec{w}}_{t,s}$ (Definition~\ref{def:delta}), namely
\[
\frac{1}{2}\transpose{(\vvec{w}_{t+s}\!-\!\vvec{w}_{t})}\mathbb{E}_{\mathfrak{x}\sim\mathcal{D}}\Big(\func{\frac{\partial^2\ell}{\transpose{\partial\vvec{w}}\partial\vvec{w}}}{\mathfrak{x};\mathcal{W}_{t}}\Big)(\vvec{w}_{t+s}\!-\!\vvec{w}_{t}),
\]
from step 21k to 22k, in Figure~\ref{fig:dnn_simulation} with a shifted origin but the same scale, in order to compare the largeness of fluctuations. We observe jumps in the 2nd-order terms from Spiky2 synchronized with the training spike, while the largeness of the jumps matches the fluctuation of simulations in some layers (e.g.~block\_\{4,8,12\}) if not all. %The 1st-order term of $\delta^{\vvec{w}}_{t,s}$, and more discussion can be found in Appendix~\ref{sec:response_plot}.
One big difference between our simulation and the empirical measurement of the 2nd-order terms of $\delta^{\vvec{w}}_{t,s}$, is that the 2nd-order terms of $\delta^{\vvec{w}}_{t,s}$ exhibit ``jumps'' rather than ``spikes'' (i.e., not resuming to the previous level after initial surge), which is also different from the behavior of the total validation loss as shown in Figure~\ref{fig:spiky_loss_valid_partial}. This suggests that the dynamics of the Taylor expansion $\delta^{\vvec{w}}_{t,s}$ may not be fully explained by layer-wise quadratic models, as it cannot fully explain the total validation loss either. One may need to consider correlations across layers, and Taylor expansions beyond 2nd order.

%\subsection{Discussion about \texorpdfstring{$\delta^{\vvec{w}}_{t,s}$}{delta}}
%\label{sec:response_plot}

\begin{figure}[t!]
 \centering
 \begin{subfigure}[b]{0.32\textwidth}
     \centering
     \includegraphics[width=\textwidth]{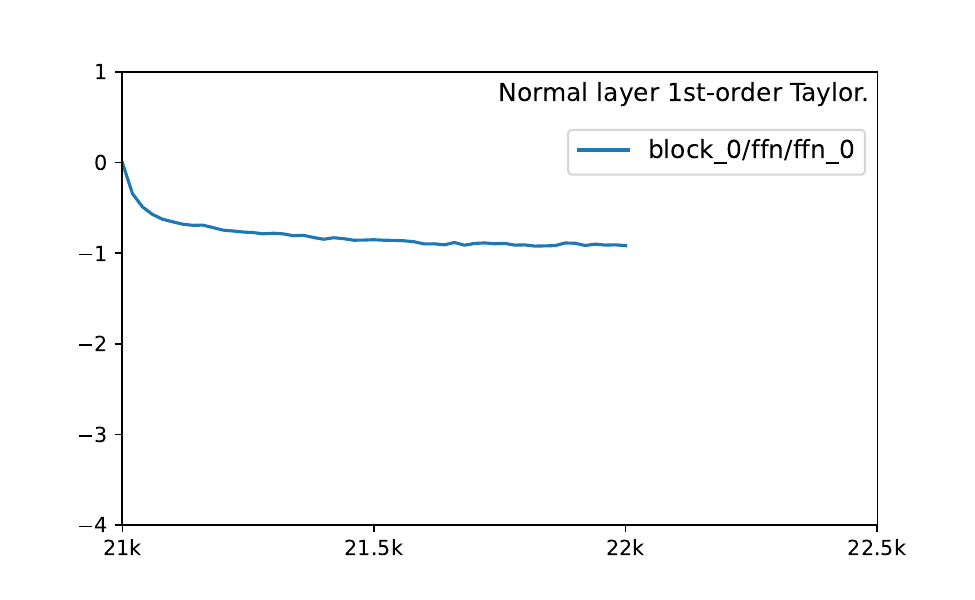}
     %\caption{(a)}
     %\label{fig:y equals x}
 \end{subfigure}
 \begin{subfigure}[b]{0.32\textwidth}
     \centering
     \includegraphics[width=\textwidth]{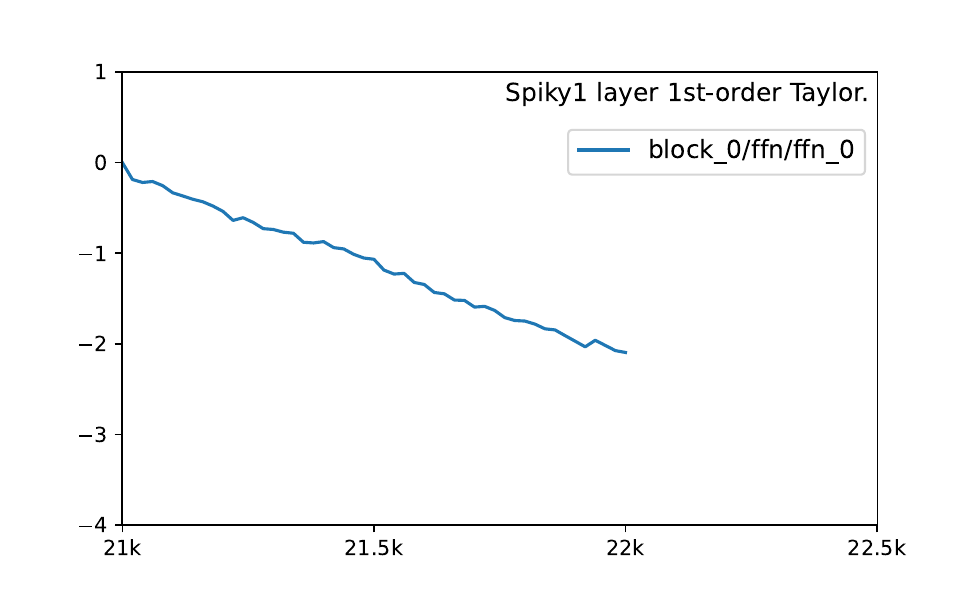}
     %\caption{(a)}
     %\label{fig:y equals x}
 \end{subfigure}
 \begin{subfigure}[b]{0.32\textwidth}
     \centering
     \includegraphics[width=\textwidth]{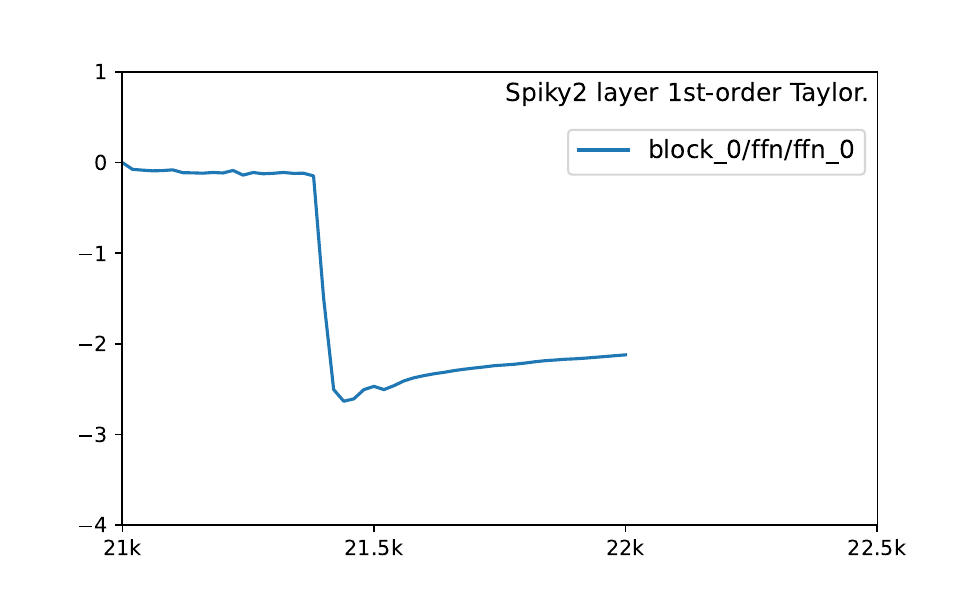}
     %\caption{(a)}
     %\label{fig:y equals x}
 \end{subfigure}\\
 \begin{subfigure}[b]{0.32\textwidth}
     \centering
     \includegraphics[width=\textwidth]{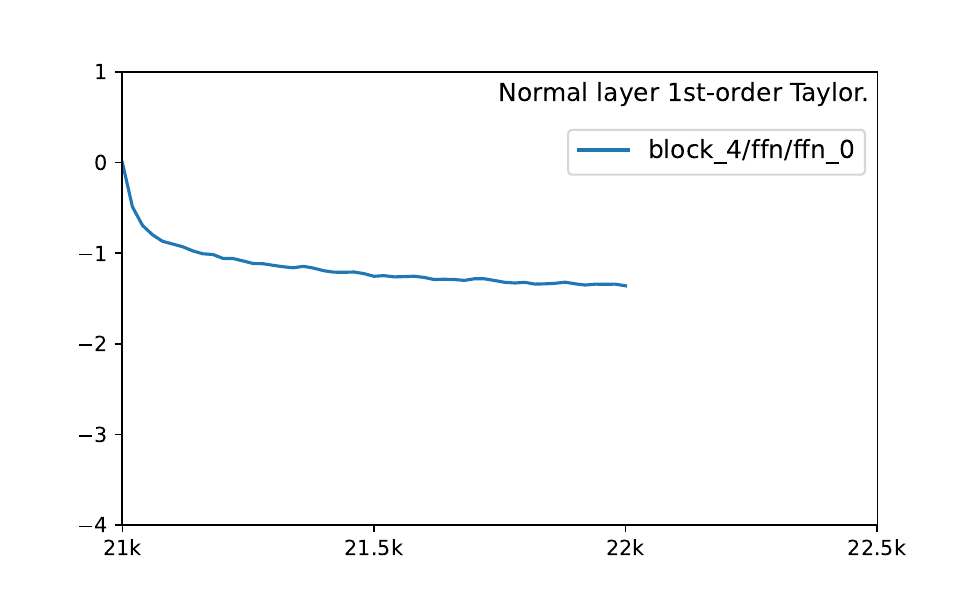}
     %\caption{(a)}
     %\label{fig:y equals x}
 \end{subfigure}
 \begin{subfigure}[b]{0.32\textwidth}
     \centering
     \includegraphics[width=\textwidth]{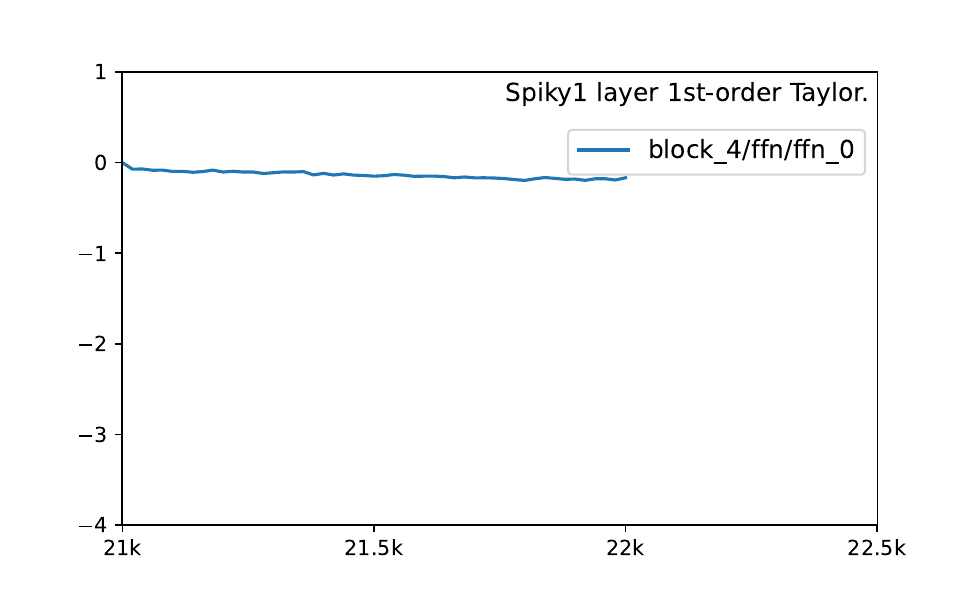}
     %\caption{(a)}
     %\label{fig:y equals x}
 \end{subfigure}
 \begin{subfigure}[b]{0.32\textwidth}
     \centering
     \includegraphics[width=\textwidth]{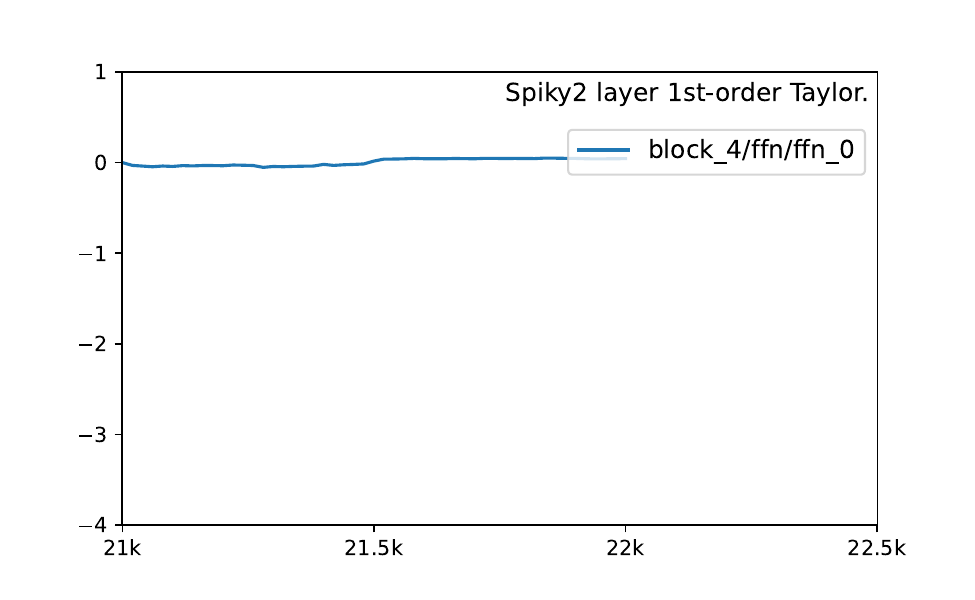}
     %\caption{(a)}
     %\label{fig:y equals x}
 \end{subfigure}\\
 \begin{subfigure}[b]{0.32\textwidth}
     \centering
     \includegraphics[width=\textwidth]{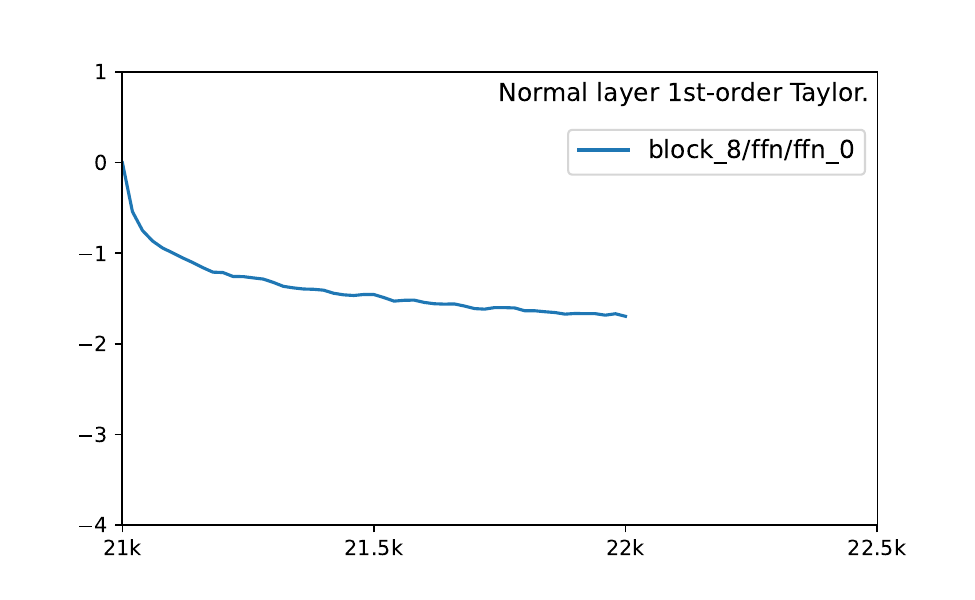}
     %\caption{(a)}
     %\label{fig:y equals x}
 \end{subfigure}
 \begin{subfigure}[b]{0.32\textwidth}
     \centering
     \includegraphics[width=\textwidth]{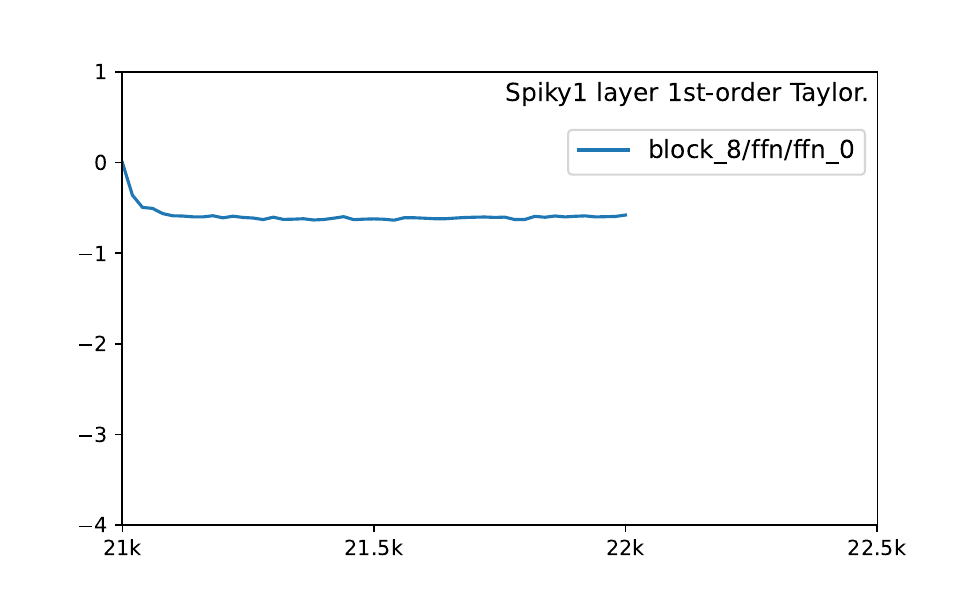}
     %\caption{(a)}
     %\label{fig:y equals x}
 \end{subfigure}
 \begin{subfigure}[b]{0.32\textwidth}
     \centering
     \includegraphics[width=\textwidth]{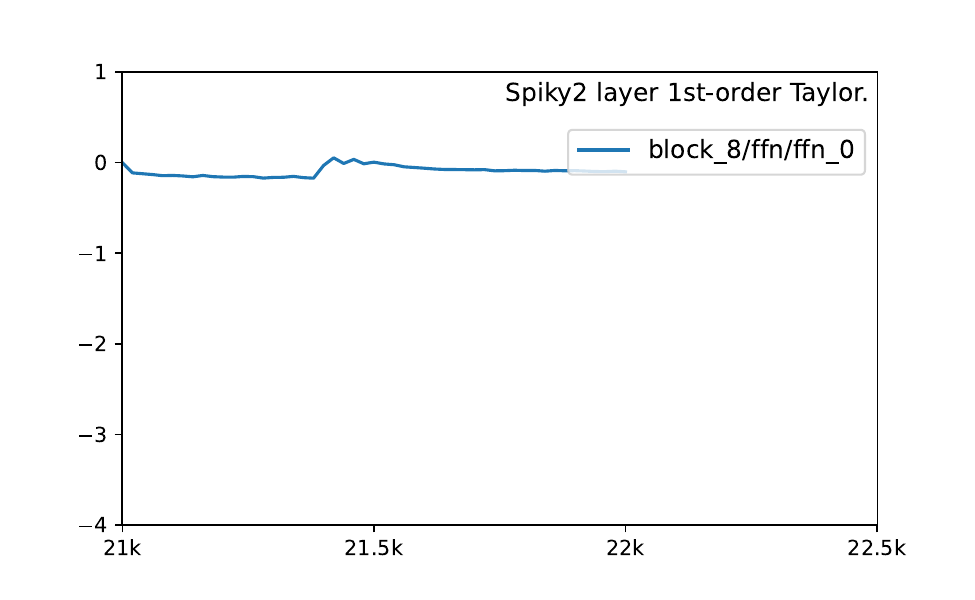}
     %\caption{(a)}
     %\label{fig:y equals x}
 \end{subfigure}\\
 \begin{subfigure}[b]{0.32\textwidth}
     \centering
     \includegraphics[width=\textwidth]{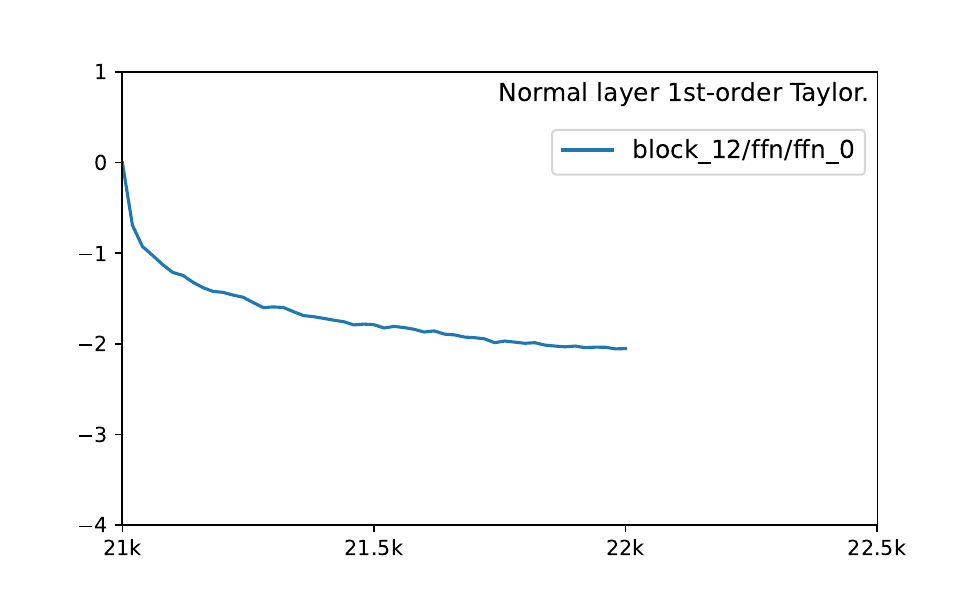}
     %\caption{(a)}
     %\label{fig:y equals x}
 \end{subfigure}
 \begin{subfigure}[b]{0.32\textwidth}
     \centering
     \includegraphics[width=\textwidth]{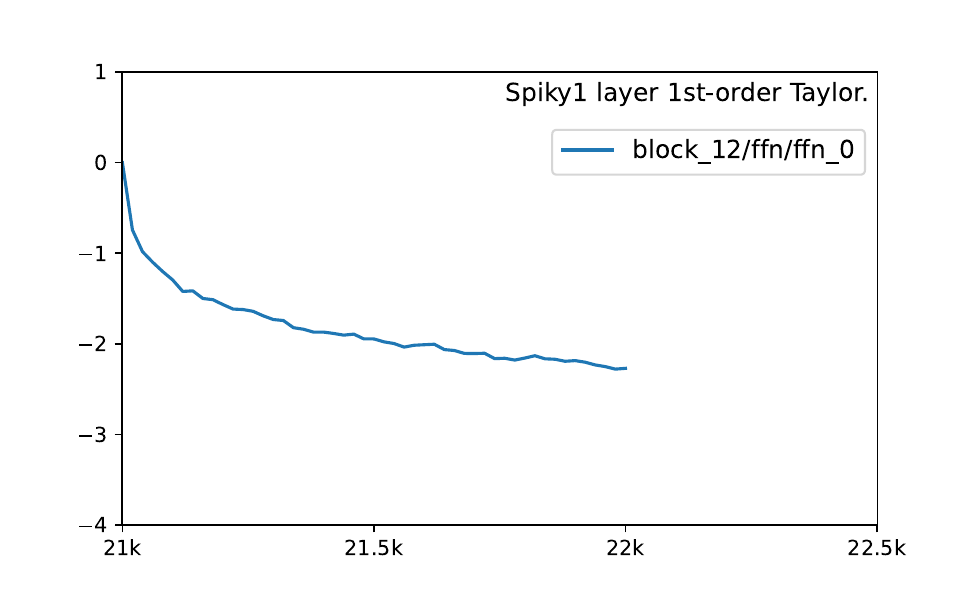}
     %\caption{(a)}
     %\label{fig:y equals x}
 \end{subfigure}
 \begin{subfigure}[b]{0.32\textwidth}
     \centering
     \includegraphics[width=\textwidth]{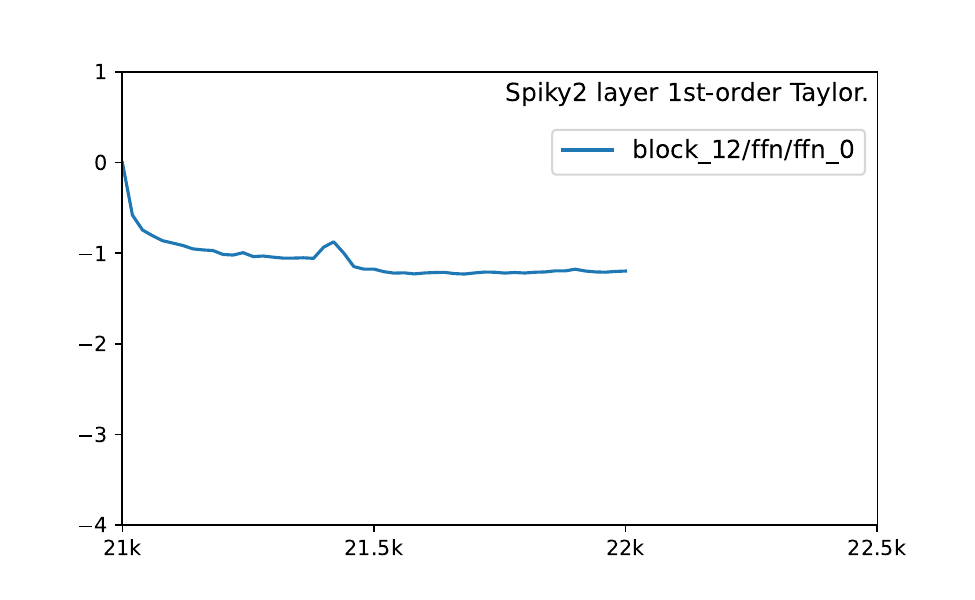}
     %\caption{(a)}
     %\label{fig:y equals x}
 \end{subfigure}\\
 \begin{subfigure}[b]{0.32\textwidth}
     \centering
     \includegraphics[width=\textwidth]{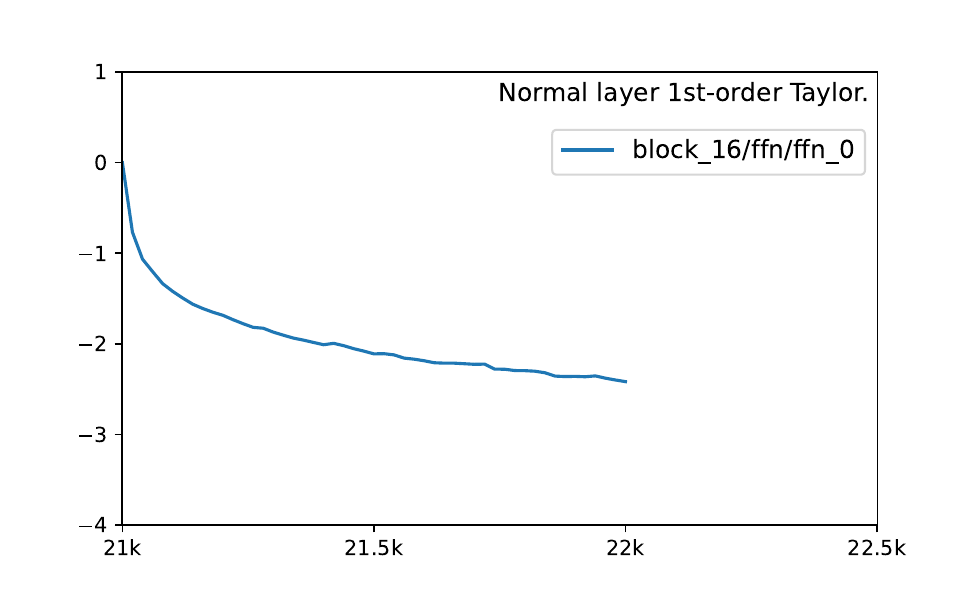}
     %\caption{(a)}
     %\label{fig:y equals x}
 \end{subfigure}
 \begin{subfigure}[b]{0.32\textwidth}
     \centering
     \includegraphics[width=\textwidth]{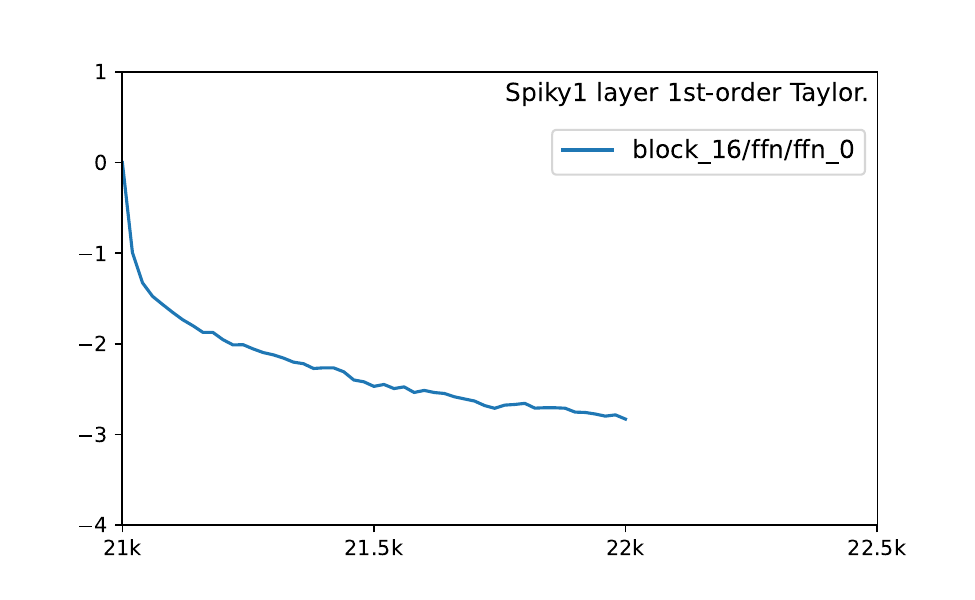}
     %\caption{(a)}
     %\label{fig:y equals x}
 \end{subfigure}
 \begin{subfigure}[b]{0.32\textwidth}
     \centering
     \includegraphics[width=\textwidth]{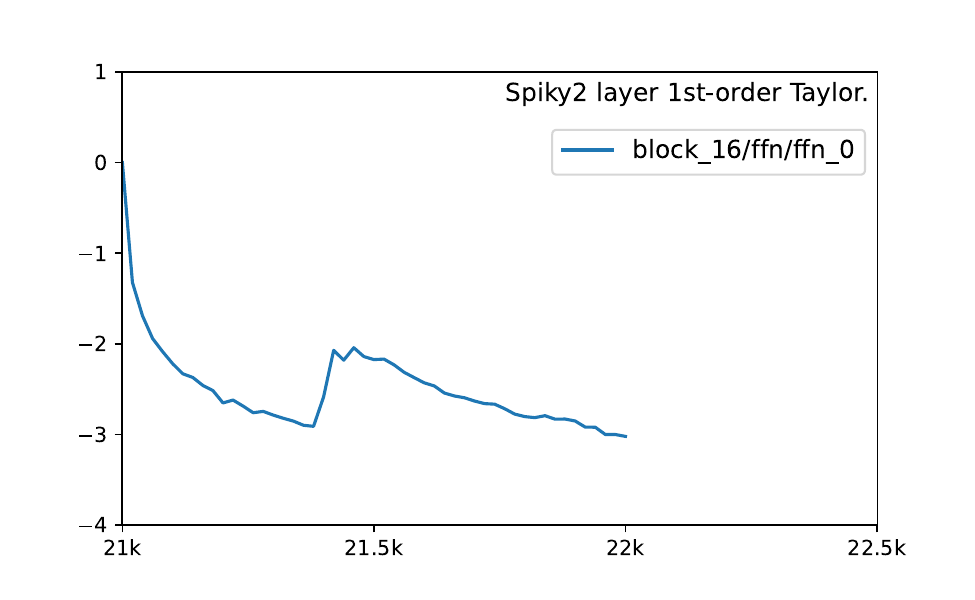}
     %\caption{(a)}
     %\label{fig:y equals x}
 \end{subfigure}\\
 \begin{subfigure}[b]{0.32\textwidth}
     \centering
     \includegraphics[width=\textwidth]{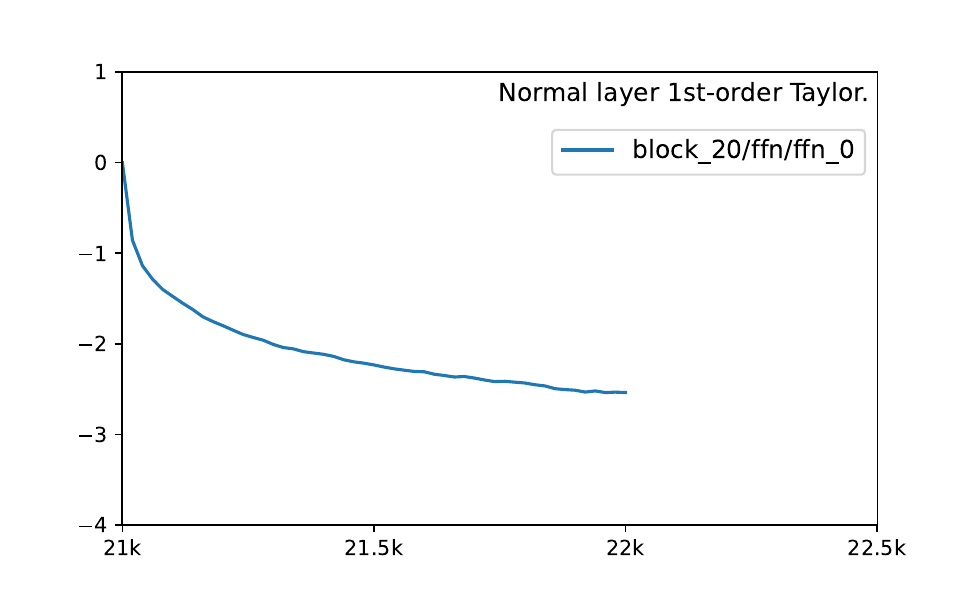}
     %\caption{(a)}
     %\label{fig:y equals x}
 \end{subfigure}
 \begin{subfigure}[b]{0.32\textwidth}
     \centering
     \includegraphics[width=\textwidth]{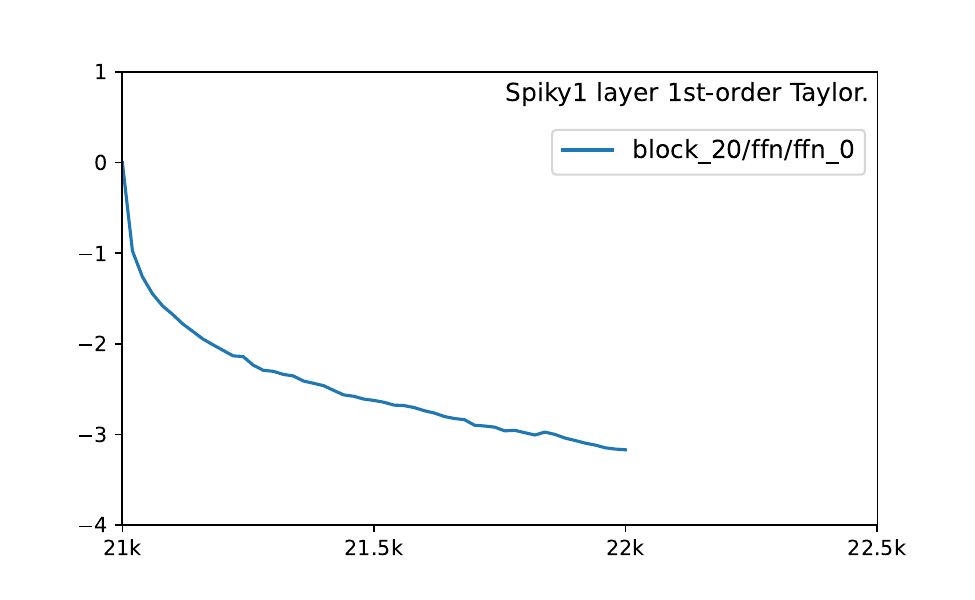}
     %\caption{(a)}
     %\label{fig:y equals x}
 \end{subfigure}
 \begin{subfigure}[b]{0.32\textwidth}
     \centering
     \includegraphics[width=\textwidth]{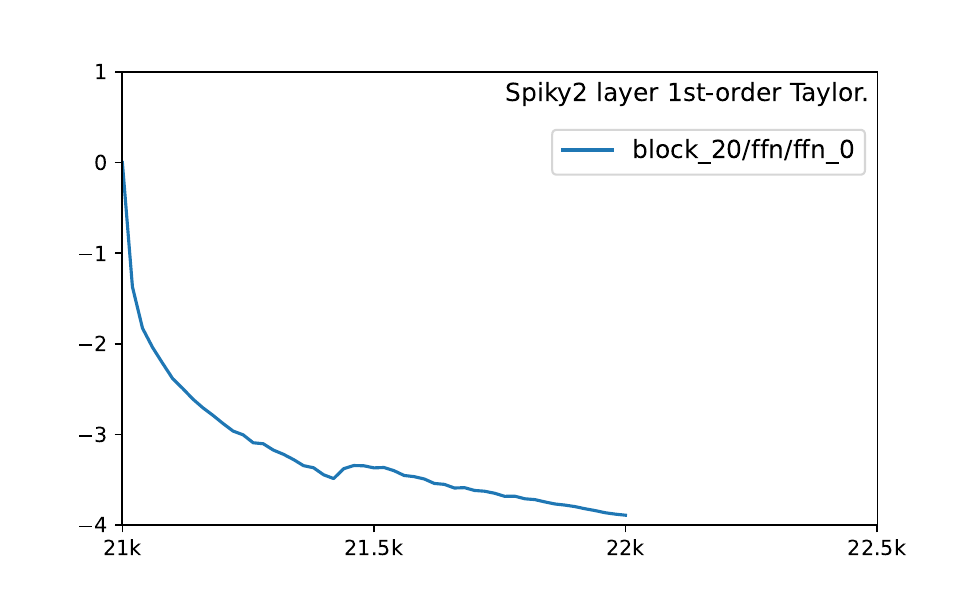}
     %\caption{(a)}
     %\label{fig:y equals x}
 \end{subfigure}
\caption{Plots of the 1st-order term of $\delta^{\vvec{w}}_{t,s}$.}
\label{fig:response}
\end{figure}

In addition, we plot the 1st order term of $\delta^{\vvec{w}}_{t,s}$, namely
\[
\mathbb{E}_{\mathfrak{x}\sim\mathcal{D}}\Big(\func{\frac{\partial\ell}{\partial\vvec{w}}}{\mathfrak{x};\mathcal{W}_{t}}\Big)(\vvec{w}_{t+s}\!-\!\vvec{w}_{t}),
\]
in Figure~\ref{fig:response}. We use the same scale of $y$-axis (from $-4$ to $1$) across all layers, and we compare the three training configurations (Normal, Spiky1 and Spiky2).

We found that the behavior of the 1st-order terms cannot be directly explained by layer-wise quadratic models either: First, note that the scale of $y$-axis in Figure~\ref{fig:response} is about $20\times$ the scale in Figure~\ref{fig:dnn_simulation}; the decrease of the 1st-order terms turns out to be \emph{much faster} than one would expect from a layer-wise quadratic model -- it seems that the highly non-linear nature of the DNN can somehow \emph{accelerate} the feature-learning towards a loss-decreasing direction. On the other hand, the highly non-linear nature of the DNN must also produce \emph{larger noises} than one would expect from a layer-wise quadratic model, because the overall validation loss of the DNN does not decrease this fast. Second, we observe that training spikes are reflected in the plots of 1st-order terms, as either ``bumps'' (block\_12 of Spiky2), ``jumps'' (block\_\{8,16,20\} of Spiky2), or ``dives'' (block\_0 of Spiky2); but layer-wise quadratic models cannot exhibit similar behavior at this scale.

How we can explain these discrepancies, and how the correlations across layers and Taylor expansions beyond 2nd order may play a role in the mechanism, are intriguing questions for future work.

\section{Discussion and Related Works}
\label{sec:discussion}

Theoretical analysis of training spikes in most of previous works has focused on the concept of \emph{sharpness}, defined as the largest eigenvalue $\lambda_\textrm{max}$ of the Hessian. But the truth is, none of them has been able to create toy models with loss dynamics, that can match aesthetically the phenomenon we empirically documented as ``training spikes'', such as in Figure~\ref{fig:spiky_loss_train_full}. Some works show fluctuations with a log-scale in $y$-axis (which will amplify the largeness of the fluctuations in appearance), and try to connect them with training spikes -- \textbf{I'm sorry, but they are not similar.}

Thus, we first want to emphasize the significance of our examples in PLRF models, solely from an aesthetics perspective: Please note the similarity of spikes between Figure~\ref{fig:spiky_loss_train_full} and Figure~\ref{fig:plrf}, and the fact that neither of them has log-scales in $y$-axis. We provide Python code of the PLRF examples in supplementary materials, which can run on Google Colab within 2 minutes. \textbf{Please just try it.}

In order to analyze the behavior of PLRF models under stochastic optimization, one has to investigate the gradient noise -- a factor that does not exist in full-batch training and has been largely overlooked in previous theoretical study of training spikes. In practice, although recent LMs are trained with large numbers of tokens per batch, we especially investigated the \emph{equivalent batch factor} in this paper and found that the gradient noise in training a Transformer LM is comparable to a Random Feature model with batch-size $\approx\!1$ (\S~\ref{sec:equiv_batch_factor_dnn_breakdown}). This is because the point-wise training loss of DNNs can increase gradient noise (\S~\ref{sec:equiv_batch_factor_derivation}) -- an effect specific to DNNs that can cancel the large batch-size. \textbf{In short, one cannot ignore gradient noise for training a Transformer LM.}

As a result, the sharpness $\lambda_\textrm{max}$ is only distantly relevant to training dynamics of stochastic optimization. What really matters is the \emph{distribution} of all eigenvalues $\{\lambda_i\}$ -- That said, we would like to discuss several streams of previous works next, which have contributed to our understanding of training spikes, as well as the more general topic of training process of DNNs.

\paragraph{Feature Learning and Sharpness} The Hessian is ever changing during the training of DNNs -- a fact that cannot be captured by quadratic models like PLRF. Understanding how the Hessian evolves, which we call ``feature learning'', is an important topic to understand deep learning. It is reported by \citet{jastrzębski2018on,eos,adaptive_eos} that during the training of DNNs, the sharpness $\lambda_\textrm{max}$ is increasing until it meets the threshold $2/\gamma$ (where $\gamma$ is the learning-rate); And when $\gamma$ decreases, $\lambda_\textrm{max}$ increases further. This process, called ``progressive sharpening'', can be affected by several optimization hyperparameters \citep{gilmer2022a}, and it is connected to training spikes through Adam preconditioner \citep{bai2026adaptive}, the ``lower-loss-as-sharper structure'' and condensation of model weights \citep{li2026}. The phase where learning-rate is set to larger than $2/\lambda_\textrm{max}$, called ``catapult'' \citep{lewkowycz2020largelearningratephase}, is directly connected to training spikes as well \citep{zhu2024catapultssgdspikestraining}. When the learning-rate temporarily exceeds $2/\lambda_\textrm{max}$, \citet{DBLP:conf/iclr/DamianNL23,central_flow} show that a 3rd-order term will automatically stabilize the sharpness and prevent it from increasing further -- the so-called ``Edge-of-Stability''. We believe the insights regarding 3rd-order terms can be valuable in stochastic optimization as well: As we discussed in \S~\ref{sec:simulation_transformer_LM} and \S~\ref{sec:response_plot}, both the simulation and empirical measurement of layer-wise quadratic loss cannot fully explain the scale of spikes observed in the total validation loss; the random fluctuations driven by gradient noises in quadratic forms may get \emph{amplified} and \emph{stabilized} by higher-order terms. %We will leave this direction to future work.

\paragraph{Spectrum Analysis of Hessian and Model Weights} Beyond the largest eigenvalue $\lambda_\textrm{max}$, our empirical measurement of the whole spectrum of the layer-wise Hessian (especially the trace of the Hessian) is largely inspired by \citet{pmlr-v97-ghorbani19b}. Unfortunately, the computation conducted in \citet{pmlr-v97-ghorbani19b} becomes infeasible for the model size of recent LMs; In this work, we take a shortcut by sticking to the \emph{layer-wise} Hessian instead of the Hessian of the whole model, and we make Assumption~\ref{ass:trace} to largely reduce the computation to Hessian trace and eigen decomposition of correlation matrices of forward activations. Nevertheless, we note that \citet{pmlr-v97-ghorbani19b} observed heavy-tailed distribution of eigenvalues similar to what we observe in Figure~\ref{fig:loglog}. From another perspective, \citet{JMLR:v22:20-410} investigated the spectrum of model weights, and recently the spectrum of gradients and moments become a topic of great interest \citep{fan2025implicit,chen2026muon}, thanks to the popularity of Muon optimizer \citep{jordan2024muon}. Our work is not directly connected to this stream because we simulate an average \emph{slice} of the model weight, rather than the whole matrix. Based on the observation by \citet{JMLR:v22:20-410} that the spectrum of model weights gets more heavy-tailed during training, we initially expected the same for Hessian; However, we found that although the Hessian trace is increasing (Figure~\ref{fig:beta2_est} Third Row, in \S~\ref{sec:appropriate_beta2}), the exponent $\alpha$ of the power-law fit is actually slightly \emph{decreasing} for Normal training (Figure~\ref{fig:powerlaw}, in \S~\ref{sec:power_law_dist}). Still, optimization hyperparameters such as $\beta_2$ of Adam clearly affect the distribution of eigenvalues of Hessian; how to understand its mechanism remains a largely open problem.

\paragraph{Scaling Statistics of Language Universals} It is not a coincidence that the eigenvalues of correlation matrices of forward activations of an LM are power-law distributed, because natural language data generally follow power-law distributions like Zipf's Law \citep{zipf1949human}. In fact, if we view the input embedding layer of a Transformer LM as a linear map from one-hot vectors to token embeddings, the correlation matrix of the inputs is $\diag((p_i))$ where $p_i$ is the unigram probability of the $i$-th token. So the distribution of the eigenvalues of $\diag((p_i))$ is exactly Zipf's Law, and the power-law exponent is exactly $\alpha=0.5$ in terms of the definition of $\alpha$ in PLRF (\S~\ref{sec:background}). Scaling statistics of language universals have long been studied \citep{tanaka2021statistical}, and we believe they are generally behind various neural scaling laws for training LLMs \citep{explain_neural_scaling_laws}.

\paragraph{Double Descent} The idea of distinguishing data-oriented and model-oriented randomness, as we did for training spikes, has also appeared in the explanation of Double Descent by Random Feature models \citep{3495724.3496649}. From a very general point of view, this is not a coincidence because Double Descent can be seen as some kind of ``spikes'' as well. Indeed, \citet{Nakkiran2020Deep} has coined the term ``epoch-wise double descent'' and used the concept to explain temporary increase of validation loss at training time.

% \paragraph{Efficient Rank}

\paragraph{The $\beta_2$ of Adam Optimizer} It has been widely known to practitioners that setting $\beta_2$ close to $1$ can cause training spikes in Transformer LMs \citep[e.g.~][]{liu2019robertarobustlyoptimizedbert}. \citet{DBLP:journals/corr/abs-2304-09871} and \citet{bai2026adaptive} specifically build their theory of training spikes upon this observation. In this paper, we also propose our recipe for choosing the appropriate $\beta_2$ schedule (\S~\ref{sec:appropriate_beta2}), and we show it improves performance in our case. The observation that an increasing schedule of $\beta_2$ can improve performance has been reported before \citep[e.g.~][]{on_convergence_of_adam_and_beyond,pmlr-v80-shazeer18a,dremov2025training,ferbach2026logarithmictimeschedulesscalinglanguage}. In practice, however, model performance and the occurrence of training spikes are affected by a large number of other factors as well. Nevertheless, the exact mechanism of how $\beta_2$ can affect the training dynamics in such a seemingly drastic way, especially how it affects the distribution of eigenvalues of the Hessian, is a curious mystery.

\paragraph{Optimization and Modeling Techniques to Improve Training Stability}

A huge amount of modeling and optimization techniques, such as normalization \citep{pmlr-v37-ioffe15,ba2016layernormalization,xiong2020layernormalizationtransformerarchitecture,pmlr-v202-dehghani23a,wortsman2024smallscale} and clipping \citep{pmlr-v28-pascanu13,wortsman2023stable,Zhang2020Why,marshall2024clipclipdynamicssgd}, \emph{inter alia}, have been proposed to improve the training stability of DNNs. From the perspective of (avoiding) training spikes, our theory suggests essential metrics, such as the power-law exponent of the distribution of eigenvalues of the correlation matrices of forward activations, to be monitored as an indicator of healthy training. Our lightweight simulation algorithm can be applied directly to testing optimization techniques as well. That said, the empirical recipes explored in this work such as $\beta_2$ schedule, are intended as ``proof of concept'' for the theory, rather than alternatives to replace previous methods.

%\paragraph{Worst-case Analysis of Optimization Techniques} clipping etc.

\paragraph{Gradient Noise and Average-case Analysis} Even a simple model like PLRF can exhibit complicated dynamics such as training spikes. We urge the community to move from analyzing full-batch gradient descent and worst-case bounds, to the more realistic and challenging settings of stochastic optimization and average-case behavior of DNN training, with heavy-tailed input data and scaling-up of model sizes. Towards this direction, \citet{pmlr-v48-mandt16,3322706.3361981} have developed stochastic differential equations to capture the training dynamics of SGD, which assume that the gradient noise is Gaussian (or more precisely, Brownian motion since the process is continuous), as similar to what we do for the simulation. What's different in this work is that the variance of gradient noise can be explicitly calculated for Random Feature models (i.e., Prediction~\ref{pred:grad_norm}), and we use that calculation instead of assuming the variance is a given constant (it's not constant, even for Random Feature models). On a different note, \citet{agarwala2025highdimensionalanalysisreveals} has coined the term ``stochastic edge-of-stability'', to emphasize the difference in learning-rate thresholds between full-batch gradient descent and SGD. In \citet{qiu2025scaling}, it is shown that the gradient noise is a major factor controlled by learning-rate schedules, that can shape universally similar loss trajectories across model scales.

\section{Conclusion}

We have seen PLRF models exhibiting training spikes, and presented the theory, algorithm and experiments to align PLRF models with layerwise training dynamics of DNNs. Given the apparent similarity between empirical training spikes and our simulations, we believe quadratic models at least partially reveals the essence of underlying mechanism. %Due to the page limit, we put more discussion and related works in Appendix~\ref{sec:discussion}.

%we put more discussion and related works in the Appendix (\S~\ref{sec:discussion}).

\section*{AI Use Statement}

We have not used generative AI tools for developing theory, formulating mathematical claims, designing experiments, or implementing methods. This is completely human work. During the writing of this paper, we have consulted AI for rephrasing some paragraphs of the draft, and summarizing some related papers; we have checked the outputs and manually rewrote the text. We take responsibility for the final content of this work, including text, claims, mathematical proofs and code.

\section*{Reproducibility Statement}

Our experiments with PLRF models are easy to reproduce; we provide the Python code for \href{https://colab.research.google.com/drive/19S5a-YPFcdWcz02nOadGcKkUOCA-tF_B?usp=sharing}{\underline{training spikes}} and \href{https://colab.research.google.com/drive/1QFUCui4N9UzDpMJSuDiOfp6j0GSlMJyg?usp=sharing}{\underline{the simulation}} in supplementary materials. Our Transformer LM experiments are based on a decoder-only Transformer with common components. All the hyperparameters are listed in \S~\ref{sec:llm_train_settings}, where we also describe how they were tuned. The training dynamics we present in this paper depends on the hyperparameters, especially the learning-rate schedule and weight-decay; But the phenomena we observe should be general across different modeling choices. For mathematical details of our theory, please refer to Appendix~\ref{sec:proof_of_thm}. 

\section*{Acknowledgment}

We thank Vincent Roulet for contributing to the ideas of how to calculate per-example gradient norms and Hessian trace, and providing internal reviews on drafts of this paper. We thank Asma Ghandeharioun for applying and verifying some findings of this paper to Gemma models. Asma Ghandeharioun uses generative AI tools to manage, implement and analyze her experiments, but she is not directly involved in this work. We thank Justin Gilmer for pointing us to several highly related previous works. We thank Lechao Xiao for providing the hyperparameters of Transformer LMs, including the configuration of training spikes. We thank Jeffrey Pennington for helpful discussion, and contributing to the ideas of distinguishing data-oriented and model-oriented randomness, and quantitatively measuring training spikes.

C. Paquette is a Canadian Institute for Advanced Research (CIFAR) AI
chair, Quebec AI Institute (Mila) and a Sloan Research Fellow in
Computer Science (2024). C. Paquette is supported by a Discovery Grant from the
 Natural Science and Engineering Research Council (NSERC) of Canada,
 NSERC CREATE grant Interdisciplinary Math and Artificial Intelligence
 Program (INTER-MATH-AI), Google x Mila research grant, Fonds de recherche du Quebec Nature et technologies (FRQNT) NOVA Grant (DOI: \href{https://doi.org/10.69777/363444}{https://doi.org/10.69777/363444}), and CIFAR AI Catalyst Grant. C. Paquette has 20\% part-time employment at Google DeepMind. 

\bibliography{main}
\bibliographystyle{plainnat}

\newpage
\appendix

%\section*{\Large Appendix}

\section{Theory and Proof}
\label{sec:proof_of_thm}

In this section, we give details of our theory and proof. For notation, we use bold lowercase (e.g.~$\cvec{h}$, $\vvec{w}$, $\vvec{y}$) to denote column vectors, and bold uppercase (e.g.~$\cmat{A}$, $\vmat{W}$) for matrices. We also make the convention to denote trainable tensors (e.g., $\vvec{w}$, $\vmat{W}$, and $\vvec{y}$ which depends on $\vmat{W}$) by italic, and non-trainable constant tensors (e.g.~$\cvec{h}$, $\cmat{A}$, $\cvec{x}$) otherwise. All function-calls are denoted exclusively by the $[\bullet]$ bracket, in order to distinguish from the ordinary parentheses $()$. Derivatives (i.e., gradients) can be represented by either $\frac{\partial\ell}{\partial\vvec{y}}$ or $\nabla\vvec{y}$, but we impose the convention that $\frac{\partial\ell}{\partial\vvec{y}}$ is a \emph{row} vector while $\nabla\vvec{y}$ is a column vector. Therefore, we have
\[
\frac{\partial\ell}{\partial\transpose{\vvec{y}}}=\nabla\vvec{y}
\]
where $\top$ is used to denote transpose. Under this convention, the 1st-order Taylor expansion can be written as
\[
\func{\ell}{\vvec{w}}\approx\func{\ell}{\cvec{w}_0}+\func{\frac{\partial\ell}{\partial\vvec{w}}}{\cvec{w}_0}\big(\vvec{w}-\cvec{w}_0\big).
\]
Also in this paper, an input data point to DNNs is exclusively denoted by $\mathfrak{x}$, and considered as having no internal structure (i.e., $\mathfrak{x}$ can be anything, not necessarily a tensor). And we assume that the training and validation datasets are both finite, and drawn from the same distribution $\mathcal{D}$.

\subsection{Proof of Theorem~\ref{thm:dnn2rf}}

Technically, we prove Theorem~\ref{thm:dnn2rf} by taking Taylor expansions of the DNN. The ``difficulty'' is that Taylor expansions can only be taken for the \emph{point-wise} loss $\func{\ell}{\mathfrak{x};\mathcal{W}}$, for a given data point $\mathfrak{x}$; While the Random Feature model requires a universal constant matrix $\cmat{A}$. Our proof shows that this is still possible, by ``concatenating'' all the data points in the training set into a huge linear map, exploiting the high dimensionality $\cscalar{V}$ of the latent space.

\begin{proof}[Proof of Theorem~\ref{thm:dnn2rf}]
%\paragraph{a} 
For \emph{(a)}, since the Jacobian matrix $\func{\cmat{J}}{\mathfrak{x}}\eqdef\func{\frac{\partial\cvec{h}}{\partial\mathcal{W}^\mathrm{F}}}{\mathfrak{x};\mathcal{W}^\mathrm{F}_{t}}$ depends on $\mathfrak{x}$ but does not depend on $s$, we can construct a matrix $\tilde{\cmat{A}}$ by concatenating the $\func{\cmat{J}}{\mathfrak{x}}$ (on axis 1) for all training examples $\mathfrak{x}\in\mathcal{D}$, and a vector $\tilde{\cvec{x}}$ by padding the flattened vector $(\mathcal{W}^\mathrm{F}_{t+s}-\mathcal{W}^\mathrm{F}_{t})$ with zeros, to obtain
\[
\Big(\func{\frac{\partial\cvec{h}}{\partial\mathcal{W}^\mathrm{F}}}{\mathfrak{x};\mathcal{W}^\mathrm{F}_{t}}\Big)(\mathcal{W}^\mathrm{F}_{t+s}-\mathcal{W}^\mathrm{F}_{t})=\tilde{\cmat{A}}\tilde{\cvec{x}}
\]
where $\tilde{\cmat{A}}$ does not depend on $\mathfrak{x}$ or $s$. Then, we define $\cmat{A}$ by further concatenating $\tilde{\cmat{A}}$ with an identity matrix (on axis 1), and $\cvec{x}$ by concatenating $\tilde{\cvec{x}}$ with $\cvec{h}_{t}$, so that $\cvec{h}_{t+s}$ has the 1st-order Taylor expansion 
\[
\cvec{h}_{t+s}\approx\cvec{h}_{t}+\Big(\func{\frac{\partial\cvec{h}}{\partial\mathcal{W}^\mathrm{F}}}{\mathfrak{x};\mathcal{W}^\mathrm{F}_{t}}\Big)(\mathcal{W}^\mathrm{F}_{t+s}-\mathcal{W}^\mathrm{F}_{t})=\cmat{A}\cvec{x},
\]
as required.

%\paragraph{b} 
For \emph{(b)}, recall from the previous paragraph that we have $\tilde{\cmat{A}}\cvec{x}\approx\cvec{h}_{t+s}-\cvec{h}_{t}$, and $\cvec{h}_{t}$ is contained in $\cvec{x}$, so we can write
%we take $\vvec{y}_{t+s}-\vvec{y}_{t}=\vmat{W}_{t+s}\cvec{h}_{t+s}-\vmat{W}_{t}\cvec{h}_{t}$ and recall from the previous paragraph that we have $\cvec{h}_{t+s}\approx\cmat{A}\cvec{x}$, while $\cvec{h}_{t}$ is contained in $\cvec{x}$, so we can write
\[
\vvec{y}_{t+s}-\vvec{y}_{t}=\vmat{W}_{t+s}(\cvec{h}_{t+s}-\cvec{h}_{t})+(\vmat{W}_{t+s}-\vmat{W}_{t})\cvec{h}_{t}\approx(\vmat{W}_{t+s}\tilde{\cmat{A}}+\widetilde{\Delta\vmat{W}})\cvec{x}
\]
where $\widetilde{\Delta\vmat{W}}$ is $(\vmat{W}_{t+s}-\vmat{W}_{t})$ padded with zeros (on axis 1), such that $\widetilde{\Delta\vmat{W}}\cvec{x}=(\vmat{W}_{t+s}-\vmat{W}_{t})\cvec{h}_{t}$.

Without changing the calculation so far, we can further modify $\cvec{x}$ by concatenating $(\mathcal{W}^\mathrm{B}_{t+s}-\mathcal{W}^\mathrm{B}_{t})$. Then, since both $\func{\frac{\partial^2\ell}{\partial\transpose{\vvec{y}}\,\partial\vvec{y}}}{\mathfrak{x};\vvec{y}_{t},\mathcal{W}^\mathrm{B}_{t}}$ and $\func{\frac{\partial^2\ell}{\partial\transpose{\vvec{y}}\,\partial\mathcal{W}^\mathrm{B}}}{\mathfrak{x};\vvec{y}_{t},\mathcal{W}^\mathrm{B}_{t}}$ are matrices that depend on $\mathfrak{x}$ but do not depend on $s$, we concatenate them (on axis 1) for all training examples $\mathfrak{x}\in\mathcal{D}$ and define
\[
\tilde{\vmat{G}}_{t+s}\eqdef\cmat{B}\,\Big(\textstyle\bigoplus_{k=1}^{\cscalar{n}}(\vmat{W}_{t+s}\widetilde{\cmat{J}_{k}}+\widetilde{\Delta\vmat{W}})\Big)+\tilde{\cmat{C}}
\]
where $\cmat{B}$ is the concatenation of $\func{\frac{\partial^2\ell}{\partial\transpose{\vvec{y}}\,\partial\vvec{y}}}{\mathfrak{x};\vvec{y}_{t},\mathcal{W}^\mathrm{B}_{t}}$, and $\tilde{\cmat{C}}$ the concatenation of $\func{\frac{\partial^2\ell}{\partial\transpose{\vvec{y}}\,\partial\mathcal{W}^\mathrm{B}}}{\mathfrak{x};\vvec{y}_{t},\mathcal{W}^\mathrm{B}_{t}}$. The $\bigoplus_{k=1}^{\cscalar{n}}$ denotes concatenation of matrices on axis 0, with $\cscalar{n}=\mbox{\#\,training-examples}$, and $\widetilde{\cmat{J}_{k}}$ denotes the Jacobian matrix $\func{\cmat{J}}{\mathfrak{x}^{(k)}}$ padded with zeros, for the $k$-th training example $\mathfrak{x}^{(k)}\in\mathcal{D}$. (Such that, $\sum_{k=1}^{\cscalar{n}}\widetilde{\cmat{J}_{k}}=\tilde{\cmat{A}}$ for $\tilde{\cmat{A}}$ in the previous paragraph). The matrices $\cmat{B}$, $\tilde{\cmat{C}}$, $\widetilde{\cmat{J}_{k}}$ and $\widetilde{\Delta\vmat{W}}$ do not depend on $\mathfrak{x}$ or $s$, while the construction of $\tilde{\vmat{G}}_{t+s}$ implies
\[
\Big(\func{\frac{\partial^2\ell}{\partial\transpose{\vvec{y}}\,\partial\vvec{y}}}{\mathfrak{x};\vvec{y}_{t},\mathcal{W}^\mathrm{B}_{t}}\Big)(\vvec{y}_{t+s}-\vvec{y}_{t})+\Big(\func{\frac{\partial^2\ell}{\partial\transpose{\vvec{y}}\,\partial\mathcal{W}^\mathrm{B}}}{\mathfrak{x};\vvec{y}_{t},\mathcal{W}^\mathrm{B}_{t}}\Big)(\mathcal{W}^\mathrm{B}_{t+s}-\mathcal{W}^\mathrm{B}_{t})\approx\tilde{\vmat{G}}_{t+s}\cvec{x}.
\]
Finally, we can modify $\cvec{x}$ by concatenating with $\nabla\vvec{y}_{t}$, and define $\cmat{C}$ by concatenating $\tilde{\cmat{C}}$ with an identity matrix (on axis 1) to absorb the constant term, and obtain
\[
\vmat{G}_{t+s}\eqdef\cmat{B}\,\Big(\textstyle\bigoplus_{k=1}^{\cscalar{n}}(\vmat{W}_{t+s}\widetilde{\cmat{J}_{k}}+\widetilde{\Delta\vmat{W}})\Big)+\cmat{C}
\]
such that the 1st-order Taylor expansion of $\nabla\vvec{y}_{t+s}$ is given by  $\vmat{G}_{t+s}\cvec{x}$. By construction, $\vmat{G}_{t+s}$ is an affine function of $\vmat{W}_{t+s}$.
%For \emph{(c)}, it is easy to see from the above construction that $\cmat{A}\cvec{x}_{0}=\cvec{h}_{t}$, and $\vmat{G}_{t+s}\cvec{x}_{0}$ is exactly the approximation of $\func{\frac{\partial\ell}{\partial\vvec{y}}}{\mathfrak{x};\vmat{W}_{t+s}\cvec{h}_{t},\mathcal{W}^\mathrm{B}_{t}}$ as obtained from the 2nd-order Taylor expansion at $\mathcal{W}_{t}$. So the statement follows.
\end{proof}

\subsection{Proof of Prediction~\ref{pred:grad_norm}}
\label{sec:proof_of_pred}

We have $\cvec{h}_{t}\approx\cmat{A}\cvec{x}$ where the entries of $\cvec{x}$ are i.i.d.~Gaussian of mean $0$ and variance $1$, by Assumption~\ref{ass:iid}. So
\[
\mathbb{E}(\cvec{h}_{t}\transpose{\cvec{h}}_{t})\approx\mathbb{E}(\cmat{A}\cvec{x}\transpose{\cvec{x}}\transpose{\cmat{A}})=\cmat{A}\transpose{\cmat{A}}.
\]
Therefore, the eigen decomposition of $\mathbb{E}(\cvec{h}_{t}\transpose{\cvec{h}}_{t})$ comes from the singular value decomposition
\[
\cmat{A}=\cmat{U}\diag(\sqrt{\vvec{\lambda}})\transpose{\cmat{V}},
\]
where $\sqrt{\vvec{\lambda}}$ is the entry-wise square-root of $\vvec{\lambda}$, and $\cmat{V}$ is a $\cscalar{V}\times\cscalar{D}$ matrix with orthonormal columns.

Now, $\vvec{g}_i$ is the $i$-th column vector of the $\cscalar{C}\times\cscalar{D}$ matrix 
\[
(\nabla\vmat{W}_{t+s})\cmat{U}=(\nabla\vvec{y}_{t+s})\,\transpose{\cvec{h}}_{t+s}\cmat{U}\approx\vmat{G}_{t+s}\cvec{x}\,\transpose{\cvec{x}}\transpose{\cmat{A}}\cmat{U}=\vmat{G}_{t+s}\cvec{x}\,\transpose{\cvec{x}}\cmat{V}\diag(\sqrt{\vvec{\lambda}}),
\]
So
\begin{equation}
\label{eq:gi}
\vvec{g}_i\approx\sqrt{\lambda_i}\vmat{G}_{t+s}\cvec{x}\,\transpose{\cvec{x}}\cmat{V}\cvec{e}_i
\end{equation}
where $\lambda_i$ is the $i$-th entry of $\vvec{\lambda}$ and $\cvec{e}_i$ is the $i$-th unit vector ($1\leq i\leq\cscalar{D}$).

On the other hand, we have $\nabla\vvec{y}_{t+s}\approx\vmat{G}_{t+s}\cvec{x}$
so
\begin{equation}
\label{eq:nablay}
\mathbb{E}(\nabla\vvec{y}_{t+s}\nabla\transpose{\vvec{y}}_{t+s})\approx\mathbb{E}(\vmat{G}_{t+s}\cvec{x}\transpose{\cvec{x}}\transpose{\vmat{G}}_{t+s})=\vmat{G}_{t+s}\transpose{\vmat{G}}_{t+s}.
\end{equation}
Thus, Prediction~\ref{pred:grad_norm} follows by combining the following lemma with \eqref{eq:gi} and \eqref{eq:nablay}. 

\begin{lemma}\label{lem:e2x2}
Assume $\cmat{V}$ is ``general enough''. Then, we have
\[
\mathbb{E}\big(\cvec{x}\,\transpose{\cvec{x}}\cmat{V}\cvec{e}_i\transpose{\cvec{e}_j}\transpose{\cmat{V}}\cvec{x}\,\transpose{\cvec{x}}\big)=\delta_{ij}\cmat{I}_{\cscalar{V}}
\]
at the $\cscalar{D}\to\infty$ limit. Here, $\delta_{ij}$ is the Kronecker delta and $\cmat{I}_{\cscalar{V}}$ is the $\cscalar{V}\times\cscalar{V}$ identity matrix.
\end{lemma}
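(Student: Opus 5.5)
Write $\cvec{v}_i\eqdef\cmat{V}\cvec{e}_i$ for the $i$-th column of $\cmat{V}$, so the orthonormality of the columns gives $\transpose{\cvec{v}_i}\cvec{v}_j=\delta_{ij}$. The matrix inside the expectation is then
\[
\cvec{x}\,(\transpose{\cvec{x}}\cvec{v}_i)(\transpose{\cvec{v}_j}\cvec{x})\,\transpose{\cvec{x}},
\]
a fourth-order polynomial in the standard Gaussian vector $\cvec{x}$. The plan is to compute its expectation entrywise with Isserlis' (Wick's) theorem. For entry $(a,b)$ we need $\mathbb{E}(x_a x_b x_c x_d)=\delta_{ab}\delta_{cd}+\delta_{ac}\delta_{bd}+\delta_{ad}\delta_{bc}$, contracted with $(\cvec{v}_i)_c(\cvec{v}_j)_d$.

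Summing over $c,d$ gives the exact identity
\[
\mathbb{E}\big(\cvec{x}\,\transpose{\cvec{x}}\cvec{v}_i\transpose{\cvec{v}_j}\cvec{x}\,\transpose{\cvec{x}}\big)=(\transpose{\cvec{v}_i}\cvec{v}_j)\,\cmat{I}_{\cscalar{V}}+\cvec{v}_i\transpose{\cvec{v}_j}+\cvec{v}_j\transpose{\cvec{v}_i}=\delta_{ij}\cmat{I}_{\cscalar{V}}+\cvec{v}_i\transpose{\cvec{v}_j}+\cvec{v}_j\transpose{\cvec{v}_i}.
\]
The first term is exactly the claimed $\delta_{ij}\cmat{I}_{\cscalar{V}}$. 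It remains to show that the rank-$\leq 2$ correction $\cvec{v}_i\transpose{\cvec{v}_j}+\cvec{v}_j\transpose{\cvec{v}_i}$ is negligible in the $\cscalar{D}\to\infty$ limit.

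That step is the main obstacle, since the correction does not vanish in operator norm: it has unit-size singular values. So the statement can only hold in the sense relevant to Prediction~\ref{pred:grad_norm}, namely after sandwiching between $\vmat{G}_{t+s}$ and $\transpose{\vmat{G}}_{t+s}$ and scaling by $\sqrt{\lambda_i\lambda_j}$. This is where the hypothesis that $\cmat{V}$ is ``general enough'' enters. I would formalize it as a delocalization condition: the columns $\cvec{v}_i$ are spread out relative to the row space of $\vmat{G}_{t+s}$, as they would be for a Haar-random orthonormal frame. Under this condition
\[
\vmat{G}_{t+s}\cvec{v}_i\transpose{\cvec{v}_j}\transpose{\vmat{G}}_{t+s}\approx O\big(\trace(\vmat{G}_{t+s}\transpose{\vmat{G}}_{t+s})/\cscalar{V}\big)\ll\vmat{G}_{t+s}\transpose{\vmat{G}}_{t+s}.
\]

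Concretely, I would then compare traces, as in \eqref{eq:grad_norm}. Summed over $i$, the identity term contributes $\sum_i\lambda_i\trace(\vmat{G}_{t+s}\transpose{\vmat{G}}_{t+s})$, while the correction contributes $2\sum_i\lambda_i\lVert\vmat{G}_{t+s}\cvec{v}_i\rVert^2=2\trace\big(\vmat{G}_{t+s}\cmat{V}\diag(\vvec{\lambda})\transpose{\cmat{V}}\transpose{\vmat{G}}_{t+s}\big)$. The latter involves a projection of $\vmat{G}_{t+s}$ onto only $\cscalar{D}$ directions weighted by $\vvec{\lambda}$. For a generic $\cmat{V}$, the ratio of the correction to the main term therefore decays like $\max_i\lambda_i/\sum_i\lambda_i$, which tends to $0$ as $\cscalar{D}\to\infty$ provided the spectrum is not too concentrated. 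This caveat matches the paper's remark that the ``dependent part'' is negligible for large $\cscalar{D}$. Finally, I would plug the limiting identity into \eqref{eq:gi} and \eqref{eq:nablay} to recover \eqref{eq:egg}.
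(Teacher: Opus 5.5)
Your proposal follows the paper's proof: Isserlis' theorem with the three pairings, where the pairing $k=n$, $l=m$ gives $\delta_{ij}\cmat{I}_{\cscalar{V}}$ by orthonormality of the columns of $\cmat{V}$, and the other two give the rank-one terms $\cmat{V}\cvec{e}_i\transpose{\cvec{e}_j}\transpose{\cmat{V}}$ and its transpose, which are discarded because $\cmat{V}$ is ``general enough''. Your observation that these terms are not small in operator norm, so the lemma can only hold after sandwiching with $\vmat{G}_{t+s}$, is a more precise version of the paper's heuristic that they are ``of magnitude $1/\sqrt{\cscalar{D}}$''; one small correction is that your trace-level bound $2\max_i\lambda_i/\sum_i\lambda_i$ needs no genericity, since it follows from $\cmat{V}\diag(\vvec{\lambda})\transpose{\cmat{V}}\preceq\max_i\lambda_i\,\cmat{I}_{\cscalar{V}}$, but it does not vanish for power-law spectra with $\alpha>0.5$, so in the spiky regime it is the delocalization assumption, not the spectrum caveat, that makes the correction negligible.
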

\begin{proof}
The $(k,n)$-entry of the matrix $\cvec{x}\,\transpose{\cvec{x}}\cmat{V}\cvec{e}_i\transpose{\cvec{e}_j}\transpose{\cmat{V}}\cvec{x}\,\transpose{\cvec{x}}$ is a linear combination of random variables of the form
\[
x_k x_l x_m x_n,
\]
where $x_k, x_l, x_m, x_n$ are entries of $\cvec{x}$. Now recall Wick's formula (Isserlis's theorem), %\footnote{\url{https://en.wikipedia.org/wiki/Isserlis\%27s_theorem}}), 
which implies that in order to have
\[
\mathbb{E}(x_k x_l x_m x_n)\neq0,
\]
there are only three possibilities:
\begin{enumerate}[leftmargin=3.5em, labelwidth=2.5em, labelsep=1em]
\item $k=n$, $l=m$;
\item $k=m$, $l=n$;
\item $k=l$, $m=n$.
\end{enumerate}
When restricted to Case 1, only the diagonal entries (such that $k=n$) in $\mathbb{E}\big(\cvec{x}\,\transpose{\cvec{x}}\cmat{V}\cvec{e}_i\transpose{\cvec{e}_j}\transpose{\cmat{V}}\cvec{x}\,\transpose{\cvec{x}}\big)$ are nonzero, and they equal to $\transpose{\cvec{e}_j}\transpose{\cmat{V}}\cmat{V}\cvec{e}_i$ which is $\delta_{ij}$ because the column vectors of $\cmat{V}$ are orthonormal. So this case results in a $\delta_{ij}\cmat{I}_{\cscalar{V}}$ term.

For both Cases 2 and 3, they result in terms of the form $\cmat{V}\cvec{e}_i\transpose{\cvec{e}_j}\transpose{\cmat{V}}$, which are projectors that map any $\cscalar{V}$-dimensional vector into a 1-dimensional subspace (i.e., $\cmat{V}\cvec{e}_i$) of the $\cscalar{D}$-dimensional space spanned by the column vectors of $\cmat{V}$. When $\cmat{V}$ is ``general enough'', this is like a random orthogonal projection from a $\cscalar{D}$-dimensional space onto its 1-dimensional subspace; So ``on average'' these terms are of the magnitude $1/\sqrt{\cscalar{D}}$, which become negligible compared to $\cmat{I}_{\cscalar{V}}$ when $\cscalar{D}$ is sufficiently large.
%$\transpose{\cmat{V}}\cvec{e}_j\transpose{\cvec{e}_j}\cmat{V}$, which is the self- Kronecker product of the $j$-th row vector of $\cmat{V}$. Now if $\cmat{V}$ is general enough, each entry of $\transpose{\cvec{e}_j}\cmat{V}$ should be roughly of the magnitude of $1/\sqrt{\cscalar{V}}$ (because their squares sum to $1$), and if $\cscalar{V}$ is sufficiently large, this $\transpose{\cmat{V}}\cvec{e}_j\transpose{\cvec{e}_j}\cmat{V}$ term should be negligible compared to the identity matrix $\cmat{I}_{\cscalar{V}}$. 

Rigorously, this ``general enough'' argument can be conveyed by assuming a Haar measure over the space of $\cmat{V}$ and bounds the error with ``overwhelming probability'', as similar to \citet[Appendix C]{sgd_in_large}.
\end{proof}
Although our proof of Lemma~\ref{lem:e2x2} relies on specifics of the Gaussian distribution (such as Wick's formula), the conclusion on high dimension limit is likely to hold for other random vectors (asymptotically) of mean $0$ and variance $1$.

\subsection{Equivalent Batch Factor}
\label{sec:equiv_batch_factor_derivation}

In this section, we discuss how to apply Prediction~\ref{pred:grad_norm} to the calculation of 2nd moment of \emph{mini-batched} gradients of \emph{DNN}s.

For stochastic training with batch-size $\cscalar{B}$, we use $\nabla_\cscalar{B}$ to denote the \emph{per-batch} gradient. Then, instead of \eqref{eq:gi}, we have
\begin{equation}
\label{eq:bgi}
\vvec{g}^\cscalar{B}_i\approx\frac{1}{\cscalar{B}}\sum_{k=1}^\cscalar{B}\sqrt{\lambda_i}\vmat{G}_{t+s}\cvec{x}^{(k)}\,\transpose{\cvec{x}^{(k)}}\cmat{V}\cvec{e}_i
\end{equation}
where $\vvec{g}^\cscalar{B}_i$ is the $i$-th column vector of the $\cscalar{C}\times\cscalar{D}$ matrix $(\nabla_\cscalar{B}\vmat{W}_{t+s})\cmat{U}$, and $\cvec{x}^{(k)}$ the $k$-th data point in a batch. Note the normalization factor $1/\cscalar{B}$ which is to make $\mathbb{E}(\nabla\vmat{W})=\mathbb{E}(\nabla_\cscalar{B}\vmat{W})$. We can generalize Lemma~\ref{lem:e2x2} to this mini-batched setting, as below.
\begin{lemma}\label{lem:e2bx2}
Assume $\cmat{V}$ is ``general enough'', and $\cscalar{B}^2\ll\cscalar{D}$. Then, we have
\[
\mathbb{E}\Big(\sum_{k=1}^\cscalar{B}\big(\cvec{x}^{(k)}\,\transpose{\cvec{x}^{(k)}}\big)\cmat{V}\cvec{e}_i\transpose{\cvec{e}_j}\transpose{\cmat{V}}\sum_{k=1}^\cscalar{B}\big(\cvec{x}^{(k)}\,\transpose{\cvec{x}^{(k)}}\big)\Big)=\cscalar{B}\,\delta_{ij}\cmat{I}_{\cscalar{V}}
\]
at the $\cscalar{D}\to\infty$ limit. Here, $\delta_{ij}$ is the Kronecker delta and $\cmat{I}_{\cscalar{V}}$ is the $\cscalar{V}\times\cscalar{V}$ identity matrix.
\end{lemma}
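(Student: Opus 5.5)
We expand the product of the two batch sums into a double sum over pairs $(k,k')$, with $\cmat{M}_{ij}\eqdef\cmat{V}\cvec{e}_i\transpose{\cvec{e}_j}\transpose{\cmat{V}}$, and split it into diagonal terms ($k=k'$) and off-diagonal terms ($k\neq k'$):
\[
\sum_{k}\mathbb{E}\big(\cvec{x}^{(k)}\transpose{\cvec{x}^{(k)}}\cmat{M}_{ij}\cvec{x}^{(k)}\transpose{\cvec{x}^{(k)}}\big)+\sum_{k\neq k'}\mathbb{E}\big(\cvec{x}^{(k)}\transpose{\cvec{x}^{(k)}}\big)\cmat{M}_{ij}\,\mathbb{E}\big(\cvec{x}^{(k')}\transpose{\cvec{x}^{(k')}}\big).
\]
The data points in a batch are i.i.d. under Assumption~\ref{ass:iid}, so each off-diagonal expectation factorizes.

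For the diagonal part, each of the $\cscalar{B}$ terms is exactly the quantity treated in Lemma~\ref{lem:e2x2}. Wick's formula splits each term into $\delta_{ij}\cmat{I}_{\cscalar{V}}$ (Case 1) plus two terms of the form $\cmat{M}_{ij}$ or its transpose (Cases 2 and 3). Summing therefore gives $\cscalar{B}\,\delta_{ij}\cmat{I}_{\cscalar{V}}$ plus $\cscalar{B}$ copies of rank-one pieces, each of order $1/\sqrt{\cscalar{D}}$ under the ``general enough'' (Haar) assumption on $\cmat{V}$.

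For the off-diagonal part, $\mathbb{E}(\cvec{x}\transpose{\cvec{x}})=\cmat{I}_{\cscalar{V}}$, so each of the $\cscalar{B}(\cscalar{B}-1)$ terms equals $\cmat{M}_{ij}$ exactly. This is the same type of rank-one projector term as in Cases 2 and 3. Collecting everything, the expectation equals
\[
\cscalar{B}\,\delta_{ij}\cmat{I}_{\cscalar{V}}+(\cscalar{B}^2+\cscalar{B})\,\cmat{M}_{ij}
\]
up to transposes within the rank-one part.

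The main obstacle is justifying that this rank-one remainder is negligible relative to $\cscalar{B}\,\delta_{ij}\cmat{I}_{\cscalar{V}}$, in the same heuristic sense used in Lemma~\ref{lem:e2x2}. First, normalize by $\cscalar{B}$, the size of the main term. The remainder then has coefficient about $\cscalar{B}$ times a projector whose ``typical'' magnitude is $1/\sqrt{\cscalar{D}}$. It is negligible provided $\cscalar{B}/\sqrt{\cscalar{D}}\to0$, which is exactly the hypothesis $\cscalar{B}^2\ll\cscalar{D}$.

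I would make the magnitude claim precise the way Lemma~\ref{lem:e2x2} suggests. Put a Haar measure on $\cmat{V}$ and measure the remainder's effect after contracting with the relevant $\cscalar{D}$-dimensional test directions, as in \citet[Appendix C]{sgd_in_large}. Under that measure, the entries of $\cmat{V}\cvec{e}_i$ projected onto a generic direction of the column span are of order $1/\sqrt{\cscalar{D}}$ with overwhelming probability. A union bound over $i,j$ then keeps the estimate uniform in the indices, and the identity follows in the $\cscalar{D}\to\infty$ limit.
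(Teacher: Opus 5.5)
Your proposal is correct and takes essentially the same route as the paper. The paper also reduces to Lemma~\ref{lem:e2x2} via Wick's formula, and notes that the only new ingredient is that cross-interactions inside the batch (your $k\neq k'$ terms, which are Case~3 pairings) add on the order of $\cscalar{B}^2$ rank-one terms $\cmat{V}\cvec{e}_i\transpose{\cvec{e}_j}\transpose{\cmat{V}}$; these are negligible against $\cscalar{B}\,\delta_{ij}\cmat{I}_{\cscalar{V}}$ under the same ``magnitude $1/\sqrt{\cscalar{D}}$'' heuristic precisely when $\cscalar{B}^2\ll\cscalar{D}$. Your explicit total $\cscalar{B}\,\delta_{ij}\cmat{I}_{\cscalar{V}}+\cscalar{B}\,\transpose{\cmat{M}_{ij}}+\cscalar{B}^2\cmat{M}_{ij}$ simply makes the paper's bookkeeping precise, at the same heuristic level of rigor.
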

\begin{proof}
The proof is mostly similar to Lemma~\ref{lem:e2x2}, except that for Case 3, we have $\cscalar{B}^2$ many terms of the form $\cmat{V}\cvec{e}_i\transpose{\cvec{e}_j}\transpose{\cmat{V}}$, due to cross-interaction inside the batch. Therefore, an additional condition $\cscalar{B}^2\ll\cscalar{D}$ is required to ensure that the batch-size is not too large, so that Case 3 is still negligible.
\end{proof}
Therefore, we can calculate
\begin{equation}
\label{eq:b2egg}
\mathbb{E}(\vvec{g}^\cscalar{B}_i\transpose{\vvec{g}^\cscalar{B}_j})\approx\frac{1}{\cscalar{B}}\,\mathbb{E}(\vvec{g}_i\transpose{\vvec{g}}_j)
\end{equation}
as long as the batch-size $\cscalar{B}$ is not too large ($\cscalar{B}\ll\sqrt{\cscalar{D}}$). This formula is compatible with the intuition that increasing the batch-size will decrease gradient noise.

Now, regarding $\mathbb{E}(\nabla\vvec{y}_{t+s}\nabla\transpose{\vvec{y}}_{t+s})$, for Random Feature models where $y_{t+s}$ is a scalar we have
\[
\nabla y_{t+s}=c\big(\transpose{\vvec{w}}_{t+s}\cmat{A}\cvec{x}-\func{b}{\cvec{x}}\big)
\]
hence
\[
\mathbb{E}\big((\nabla y_{t+s})^2\big)=\mathbb{E}\Big(c^2\big(\transpose{\vvec{w}}_{t+s}\cmat{A}\cvec{x}-\func{b}{\cvec{x}}\big)^2\Big)=2c\,\mathbb{E}\big(\func{\ell}{\cvec{x};\vvec{w}_{t+s}}\big),
\]
which is a multiple of the validation loss.

However, the above simplification does \emph{not} hold for DNNs in general. Even if we use Assumption~\ref{ass:trace} to reduce $y_{t+s}$ to a scalar, the \emph{point-wise} loss of DNNs is of the form
\begin{equation}
\label{eq:dnn_pointwise_loss}
\func{\ell}{\mathfrak{x};\vvec{w}}\approx\frac{\func{c}{\mathfrak{x}}}{2}\big(\transpose{\vvec{w}}\cmat{A}\cvec{x}-\func{b}{\mathfrak{x}}\big)^2\quad\mbox{and}\quad\nabla y_{t+s}\approx\func{c}{\mathfrak{x}}\big(\transpose{\vvec{w}}_{t+s}\cmat{A}\cvec{x}-\func{b}{\mathfrak{x}}\big).
\end{equation}
% as an approximation to , hence
% \begin{equation}
% \label{eq:dnn_nablay}
% \nabla y_{t+s}=\func{c}{\mathfrak{x}}\big(\transpose{\vvec{w}}_{t+s}\cmat{A}\cvec{x}-\func{b}{\mathfrak{x}}\big).
% \end{equation}
Especially, the coefficient $c$ in \eqref{eq:dnn_pointwise_loss} %and \eqref{eq:dnn_nablay} 
is different across data points $\mathfrak{x}$. 

Nevertheless, in our simulation we estimate $c$ as an average of $\func{c}{\mathfrak{x}}$, such that the Hessian trace of the layer-wise quadratic loss $\func{L}{\vvec{w}_t}$ matches the empirical measurement $\trace\big(\mathbb{E}(\nabla^2\vmat{W}_{t})\big)$. And we still assume that $\mathbb{E}\big((\nabla y_{t+s})^2\big)$ is a multiple of $2c\func{L}{\vvec{w}_{t+s}}$, as a rough approximation. But the coefficient of that multiple needs to be empirically measured, and it would be much larger than 1 (as in contrast to the case of Random Feature models) because $\mathbb{E}(\func{c}{\mathfrak{x}}^2)$ is usually much larger than $\mathbb{E}(\func{c}{\mathfrak{x}})^2$. In this work, we absorb this multiple into a ``batch-size factor'', because it affects the norm of per-batch gradients as similar to a batch-size.

In practice, we do not directly measure $\func{L}{\vvec{w}_t}$ either, because the measurement of the Hessian tensor is costly if not infeasible. Instead, we have $\lVert\func{\frac{\partial L}{\partial\vvec{w}}}{\vvec{w}_t}\rVert^2=\lVert\mathbb{E}(\nabla\vmat{W}_{t})\rVert^2$ by definition, and $\lVert\mathbb{E}(\nabla\vmat{W}_{t})\rVert^2$ is directly measurable. So it is favorable if $\func{L}{\vvec{w}_t}$ can be approximated by $\lVert\func{\frac{\partial L}{\partial\vvec{w}}}{\vvec{w}_t}\rVert^2$.

Recall that $\func{L}{\vvec{w}}$ is a quadratic function of $\vvec{w}$ which can be decomposed into eigenmodes. Let
\[
\func{L}{\vvec{w}}=\frac{c}{2}\sum_{i=1}^\cscalar{D}\lambda_i(w_i-\cscalar{w}^\ast_i)^2
\]
be the eigen decomposition, where $(w_i)_{1\leq i\leq\cscalar{D}}$ and $(\cscalar{w}^\ast_i)_{1\leq i\leq\cscalar{D}}$ are representations of $\vvec{w}$ and $\cvec{w}^\ast$ under the eigen basis, respectively. Then, we have
\[
\func{\frac{\partial L}{\partial\transpose{\vvec{w}}}}{\vvec{w}}=\big(c\lambda_i(w_i-\cscalar{w}^\ast_i)\big)_{1\leq i\leq\cscalar{D}}\quad\mbox{and}\quad\Big\lVert\func{\frac{\partial L}{\partial\vvec{w}}}{\vvec{w}}\Big\rVert^2=c^2\sum_{i=1}^\cscalar{D}\lambda_i^2(w_i-\cscalar{w}^\ast_i)^2.
\]
Thus, the following quantity
\begin{equation}
\label{eq:lapprox}
\frac{\sum_{i=1}^\cscalar{D}\lambda_i}{\sum_{i=1}^\cscalar{D}\lambda_i^2}\big\lVert\mathbb{E}(\nabla\vmat{W}_{t})\big\rVert^2
\end{equation}
would be an easy-to-measure approximation for $2c\func{L}{\vvec{w}_{t}}$. This approximation should work well if either of the conditions below is satisfied:
\begin{enumerate}
\item[(1)] For most $i$, values of $(w_i-\cscalar{w}^\ast_i)^2$ are approximately of the same magnitude;
\item[(2)] Or, the values of $(w_i-\cscalar{w}^\ast_i)^2$ are ``random enough'', the distribution of $(\lambda_i)$ not too heavy-tailed, and $\cscalar{D}$ sufficiently large.
\end{enumerate}

To summarize, we have assumed that $\mathbb{E}\big((\nabla y_{t+s})^2\big)$ is a multiple of $2c\func{L}{\vvec{w}_{t+s}}$, and propose to use \eqref{eq:lapprox} as an approximation for $2c\func{L}{\vvec{w}_{t}}$. Combining this with \eqref{eq:b2egg} and Prediction~\ref{pred:grad_norm}, we predict
\begin{equation}
\label{eq:bgrad_norm}
\mathfrak{B}\,\mathbb{E}(\lVert\nabla_\cscalar{B}\vmat{W}_{t+s}\lVert^2)\approx\sum_{i=1}^\cscalar{D}\lambda_i\,2c\func{L}{\vvec{w}_{t+s}}
\end{equation}
where $\mathfrak{B}$ is empirically measured as
\begin{equation}\label{eq:equiv_batch_factor_def}
\mathfrak{B}\eqdef\frac{(\sum_{i=1}^\cscalar{D}\lambda_i)^2}{\sum_{i=1}^\cscalar{D}\lambda_i^2}\dfrac{\lVert\mathbb{E}(\nabla\vmat{W}_{t})\rVert^2}{\mathbb{E}(\lVert\nabla_\cscalar{B}\vmat{W}_{t}\lVert^2)}.
\end{equation}
We call $\mathfrak{B}$ the \emph{equivalent batch factor}, because it should coincide with the batch-size $\cscalar{B}$ for Random Feature models.

\subsection{Detailed Derivation of Algorithm~\ref{alg:simulation}}
\label{sec:derive_alg}

We track the vector $\vvec{z}=(z_i)_{1\leq i\leq\cscalar{D}}$ where $z_i\eqdef c(w_i-\cscalar{w}^\ast_i)$, with $(w_i)_{1\leq i\leq\cscalar{D}}$ and $(\cscalar{w}^\ast_i)_{1\leq i\leq\cscalar{D}}$ being the representations of $\vvec{w}$ and $\cvec{w}^\ast$ under the eigen basis, respectively. Since 
\[
\func{L}{\vvec{w}}=\frac{1}{2c}\sum_{i=1}^\cscalar{D}\lambda_i z_i^2,
\]
the sign of each $z_i$ does not matter -- our simulation only depends on the absolute value $\lvert z_i\rvert$. Initially, $\lvert z_i\rvert$ is set to match the Root Mean Square (RMS) of the gradient $\mathbb{E}(\nabla\vmat{W}_{t})$:
\begin{itemize}
 \item The coordinates of $\mathbb{E}(\nabla\vmat{W}_{t})$ are changed to the eigen basis by $\mathbb{E}(\nabla\vmat{W}_{t})\cmat{U}$;
 \item The shape of $\mathbb{E}(\nabla\vmat{W}_{t})\cmat{U}$ is $\cscalar{C}\times\cscalar{D}$; We reduce the $\cscalar{C}$-dimensional output axis by taking $\rms\big(\mathbb{E}(\nabla\vmat{W}_{t})\cmat{U},\,\texttt{axis}=0\big)$;
 \item The $i$-th entry of $\rms\big(\mathbb{E}(\nabla\vmat{W}_{t})\cmat{U},\,\texttt{axis}=0\big)$ should be the same as
 \[
 \Big\lvert\func{\frac{\partial L}{\partial w_i}}{\vvec{w}}\Big\rvert=\lambda_i\lvert z_i\lvert,
 \]
 in order to match the gradient at each eigenmode.
 \item Therefore, we set
 \[
 \vvec{z} \gets \rms\big(\mathbb{E}(\nabla\vmat{W}_{t})\cmat{U},\,\texttt{axis}=0\big)/\vvec{\lambda}
 \]
 where ``$/$'' denotes element-wise division.
\end{itemize}

At each simulation step, let $\vvec{g}=(g_k)_{1\leq k\leq\cscalar{D}}$ be the synthetic per-batch gradient under the eigen basis. The mean $\mathbb{E}(g_k)$ is given by
\begin{equation}\label{eq:synth_grad_mean}
\mathbb{E}(g_k)=\func{\frac{\partial L}{\partial w_k}}{\vvec{w}}=\lambda_k z_k,
\end{equation}
and the 2nd moment
\begin{equation}\label{eq:synth_grad_2nd}
\mathbb{E}(g_k^2)\approx\lambda_k\,2c\func{L}{\vvec{w}}/\mathfrak{B}=\lambda_k\sum_{i=1}^\cscalar{D}\lambda_i z_i^2/\mathfrak{B}
\end{equation}
according to Prediction~\ref{pred:grad_norm} and \eqref{eq:bgrad_norm}. Therefore, the variance is calculated as
\begin{equation}\label{eq:synth_grad_var}
\func{\mathbb{VAR}}{g_k}=\mathbb{E}(g_k^2)-\big(\mathbb{E}(g_k)\big)^2=\lambda_k\Big(\sum_{i=1}^\cscalar{D}\lambda_i z_i^2/\mathfrak{B}-\lambda_k z_k^2\Big).
\end{equation}
Our algorithm simply draws $g_k$ as Gaussian of mean $\mathbb{E}(g_k)$ and variance $\func{\mathbb{VAR}}{g_k}$ as given in \eqref{eq:synth_grad_mean} and \eqref{eq:synth_grad_var}, respectively. 

Once the synthetic per-batch gradient $\vvec{g}$ is obtained, we convert it to the superficial coordinates by $\cmat{U}\vvec{g}$. Then we apply the optimizer $\func{\phi}{\bullet}$, and convert back to the coordinates under eigen basis again:
\[
\mbox{(update to $\vvec{w}$)}=\vvec{u}= \transpose{\cmat{U}}\func{\phi}{\cmat{U}\vvec{g}}.
\]
Our algorithm finishes this simulation step by updating $\vvec{z}=c(\vvec{w}-\cvec{w}^\ast)$ accordingly.

\subsection{Derivation of Learning-rate Threshold}
\label{sec:derive_gamma}

We analyze the dynamics of validation loss $\mathbb{E}\big(\func{\ell}{\cvec{x};\vvec{w}_{t}}\big)$ for Random Feature models defined in \eqref{eq:rf}, by taking eigen decomposition of the Hessian
\[
\cmat{H}\eqdef\mathbb{E}\Big(\func{\frac{\partial^2\ell}{\transpose{\partial\vvec{w}}\partial\vvec{w}}}{\cvec{x};\vvec{w}_{t}}\Big)=c\,\mathbb{E}(\cmat{A}\cvec{x}\transpose{\cvec{x}}\transpose{\cmat{A}}).
\]
Let $\nabla_\cscalar{B}\vvec{w}_{t}$ be the mini-batched gradient of batch-size $\cscalar{B}$. Let $g^{\cscalar{B}}_{i}$, $w_{i,t}$ and $\cscalar{w}^\ast_i$ be the projections of $\nabla_\cscalar{B}\vvec{w}_{t}$, $\vvec{w}_t$ and $\cvec{w}^\ast$ onto the $i$-th eigenmode, respectively. The SGD update is given by
\[
w_{i,t+1}=w_{i,t}-\gamma\,g^{\cscalar{B}}_{i},
\]
where $\gamma$ is the learning-rate. Now, recall that $\mathbb{E}(\nabla_\cscalar{B}\vvec{w}_{t})=\mathbb{E}(\nabla\vvec{w}_{t})=\cmat{H}(\vvec{w}_t-\cvec{w}^\ast)$; so we have
\begin{align}
(w_{i,t+1}-\cscalar{w}^\ast_i)^2 &= (w_{i,t}-\cscalar{w}^\ast_i)^2-2\gamma\,(w_{i,t}-\cscalar{w}^\ast_i)g^{\cscalar{B}}_{i}+\gamma^2 (g^{\cscalar{B}}_{i})^2 \label{eq:random_rf_valid_loss} \\
\mbox{and}\quad
\mathbb{E}\big((w_{i,t+1}-\cscalar{w}^\ast_i)^2\big) &=(1-\gamma\lambda_i)^2(w_{i,t}-\cscalar{w}^\ast_i)^2+\gamma^2\Big(\mathbb{E}\big((g^{\cscalar{B}}_{i})^2\big)-\big(\mathbb{E}(g^{\cscalar{B}}_{i})\big)^2\Big),\label{eq:average_case_rf_valid_loss}
\end{align}
where $\{\lambda_i\}$ is the eigenvalues of $\cmat{H}$. In order to derive the threshold of learning-rate between convergence and divergence, we make the intuitive assumption that when the learning-rate $\gamma$ is at the threshold, the variance of gradients is neither decreasing nor increasing, i.e., it remains a constant independent of the step $t$:
\begin{equation}\label{eq:constant_var}
\mathbb{E}\big((g^{\cscalar{B}}_{i})^2\big)-\big(\mathbb{E}(g^{\cscalar{B}}_{i})\big)^2\equiv V_i\neq 0.
\end{equation}
For more formal and rigorous treatment using the kernel of a Volterra equation, we refer to \citet[C.2]{four_plus_three}. Substituting \eqref{eq:constant_var} into \eqref{eq:average_case_rf_valid_loss}, we immediately get
\begin{equation}\label{eq:limit}
\lim_{s\to\infty}\mathbb{E}\big((w_{i,t+s}-\cscalar{w}^\ast_i)^2\big)=\sum_{s=0}^{\infty}(1-\gamma\lambda_i)^{2s}\gamma^2 V_i=\frac{\gamma}{\lambda_i(2-\gamma\lambda_i)}V_i.
\end{equation}
Furthermore, when $\cscalar{D}=\dim(\vvec{w})$ is sufficiently large and $\cscalar{B}\ll\sqrt{\cscalar{D}}$, we have
\begin{equation}\label{eq:vcalc}
\mathbb{E}\big((g^{\cscalar{B}}_{i})^2\big)=\frac{2}{\cscalar{B}}\lambda_i\,\mathbb{E}\big(\func{\ell}{\cvec{x};\vvec{w}_{t}}\big)=\frac{1}{\cscalar{B}}\lambda_i\,\sum_k \lambda_k(w_{k,t}-\cscalar{w}^\ast_k)^2
\end{equation}
by combining \eqref{eq:b2egg}, \eqref{eq:egg} and \eqref{eq:rf_simplification_volterra}. We also have $\big(\mathbb{E}(g^{\cscalar{B}}_{i})\big)^2\ll\mathbb{E}\big((g^{\cscalar{B}}_{i})^2\big)$ under this condition, based on a similar argument as in Lemma~\ref{lem:e2bx2}. So $V_i$ can be explicitly calculated, therefore \eqref{eq:limit} implies
\begin{equation}\label{eq:threshold_per_eigenmode}
\lim_{s\to\infty}\mathbb{E}\big((w_{i,t+s}-\cscalar{w}^\ast_i)^2\big)=\frac{\gamma}{\lambda_i(2-\gamma\lambda_i)}V_i=\frac{\gamma}{\cscalar{B}(2-\gamma\lambda_i)}\sum_k \lambda_k(w_{k,t}-\cscalar{w}^\ast_k)^2.
\end{equation}
Then, by summing across all $i$-th eigenmodes in \eqref{eq:threshold_per_eigenmode}, we obtain:
\begin{equation}
\lim_{s\to\infty}\mathbb{E}\big(\sum_i \lambda_i(w_{i,t+s}-\cscalar{w}^\ast_i)^2\big)=\sum_i\frac{\gamma\lambda_i}{\cscalar{B}(2-\gamma\lambda_i)}\sum_k \lambda_k(w_{k,t}-\cscalar{w}^\ast_k)^2.
\end{equation}
Thus the following equation regarding the threshold $\gamma$, by taking the limit at $t\to\infty$:
\begin{equation}\label{eq:gamma_derived}
1=\sum_i\frac{\gamma\lambda_i}{\cscalar{B}(2-\gamma\lambda_i)}.
\end{equation}

\subsection{Concentration Effect Only Holds for \texorpdfstring{$\alpha\leq 0.5$}{alpha<=0.5}}
\label{sec:concentration_alpha}

We need to analyze the random process \eqref{eq:random_rf_valid_loss}, instead of the average-case behavior \eqref{eq:average_case_rf_valid_loss}. The difference between \eqref{eq:random_rf_valid_loss} and \eqref{eq:average_case_rf_valid_loss} is a martingale; When one sums the martingales from all eigenmodes, the ``\emph{concentration effect}'' states that the sum converges to $0$ with ``overwhelming probability'', under a suitable limit at $\cscalar{D}=\dim(\vvec{w})\to\infty$.

We refer to \citet{sgd_in_large} for the formal tools and rigorous arguments regarding convergence of martingales. In this section, we give an intuitive sketch of the essence of the idea.  

Recall the arguments in \S~\ref{sec:sketch}, that $\vvec{w}_t$ can be decomposed into $\vvec{w}_{t}=\overline{\vvec{w}}_{t}+\vvec{r}_{t}$ where $\overline{\vvec{w}}_{t}$ follows the deterministic gradient descent, while $\vvec{r}_{t}$ is an accumulation of random noises of mean $\cvec{0}$. At each step, a new draw of gradient noise is added into $\vvec{r}_{t}$, and \emph{on average} it contributes to the $i$-th eigenmode of $\mathbb{E}\big((w_{i,t+1}-\cscalar{w}^\ast_i)^2\big)$ by $\gamma^2\Big(\mathbb{E}\big((g^{\cscalar{B}}_{i})^2\big)-\big(\mathbb{E}(g^{\cscalar{B}}_{i})\big)^2\Big)$, according to \eqref{eq:average_case_rf_valid_loss}.

From the calculation of \eqref{eq:vcalc}, the variance $\Big(\mathbb{E}\big((g^{\cscalar{B}}_{i})^2\big)-\big(\mathbb{E}(g^{\cscalar{B}}_{i})\big)^2\Big)$ is proportional to $\lambda_i$; so we can write the \emph{random} contribution of the gradient noise to the $i$-th eigenmode of $(w_{i,t+1}-\cscalar{w}^\ast_i)^2$, by $\lambda_iX_i^2$ where $X_i$ is a random variable, independent and of the same variance across all $i$-th eigenmodes.

Now, by summing across all eigenmodes, the gradient noise contributes to the random validation loss $\sum_i\lambda_i(w_{i,t+1}-\cscalar{w}^\ast_i)^2$ by $\gamma^2\sum_i\lambda_i^2X_i^2$. The concentration effect states that
\[
\gamma^2\sum_{i=1}^\cscalar{D}\lambda_i^2X_i^2\approx\gamma^2\sum_{i=1}^\cscalar{D}\lambda_i^2\mathbb{E}(X_i^2)
\]
with high probability, at the limit $\cscalar{D}\to\infty$.

Note that, in order for the model to converge, the learning-rate $\gamma$ should be less than the threshold which satisfies Equation~\eqref{eq:gamma_derived}, meaning $\gamma < 2\cscalar{B}/\sum_i\lambda_i$. To make the limit $\cscalar{D}\to\infty$ properly defined, $\gamma$ should actually scale in proportion to $1/\sum_i\lambda_i$. Thus, we obtain the following
\begin{equation}\label{eq:weighted_average}
\frac{\sum_{i=1}^\cscalar{D}\lambda_i^2X_i^2}{\big(\sum_{i=1}^\cscalar{D}\lambda_i\big)^2}
\end{equation}
and note that $\mathbb{E}(X_i^2)$ scales approximately in proportion to $\sum_{i=1}^\cscalar{D}\lambda_i$. So \eqref{eq:weighted_average} can be viewed as an average of $\{\lambda_i X_i^2/(\sum_{i=1}^\cscalar{D}\lambda_i)\}$, weighted by the probability measure $\big(\lambda_i/(\sum_{i=1}^\cscalar{D}\lambda_i)\big)$. 

%So the expectation of the above can have a finite non-zero limit at $\cscalar{D}\to\infty$. %The concentration effect states that the random value concentrates around its expectation with high probability, at $\cscalar{D}\to\infty$.

For PLRF models with power-law exponent $\alpha$, the eigenvalues $(\lambda_i)$ are asymptotically proportional to $(i^{-2\alpha})$. Then, assuming $\{X_i\}$ are i.i.d.~random variables, and $X_i^2/(\sum_{i=1}^\cscalar{D}\lambda_i)$ has finite expectation and variance, the concentration to expectation is a standard argument by the Law of Large Numbers, that the random fluctuations of \eqref{eq:weighted_average} will cancel out, if and only if
\[
\lim_{\cscalar{D}\to\infty}\sum_{i=1}^\cscalar{D}\lambda_i=\infty.
\]
This is equivalent to $\alpha\leq 0.5$.

\section{PLRF Simulation Experiments}

As a sanity check, we apply our simulation algorithm to PLRF models, and provide the Python code in supplementary materials. The PLRF models are defined the same as in Figure~\ref{fig:plrf}, while we train with SGD and Adam, at batch-sizes $\cscalar{B}=\{1,8,64\}$. Each configuration is simulated by two runs of different random seeds.

\begin{figure}[t]
 \centering
 \begin{subfigure}[b]{0.32\textwidth}
     \centering
     \includegraphics[width=\textwidth]{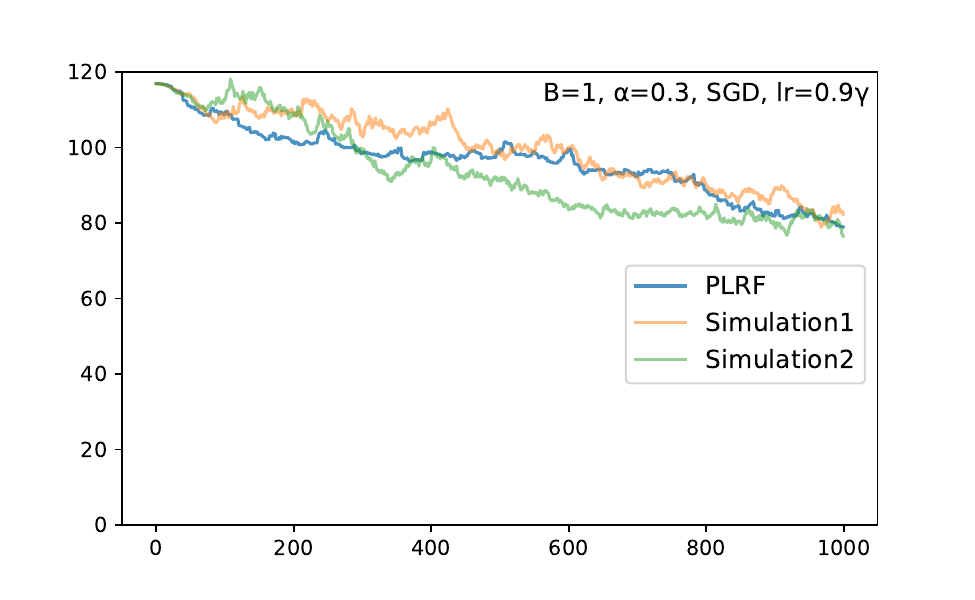}
     %\caption{(a)}
     %\label{fig:y equals x}
 \end{subfigure}
 \begin{subfigure}[b]{0.32\textwidth}
     \centering
     \includegraphics[width=\textwidth]{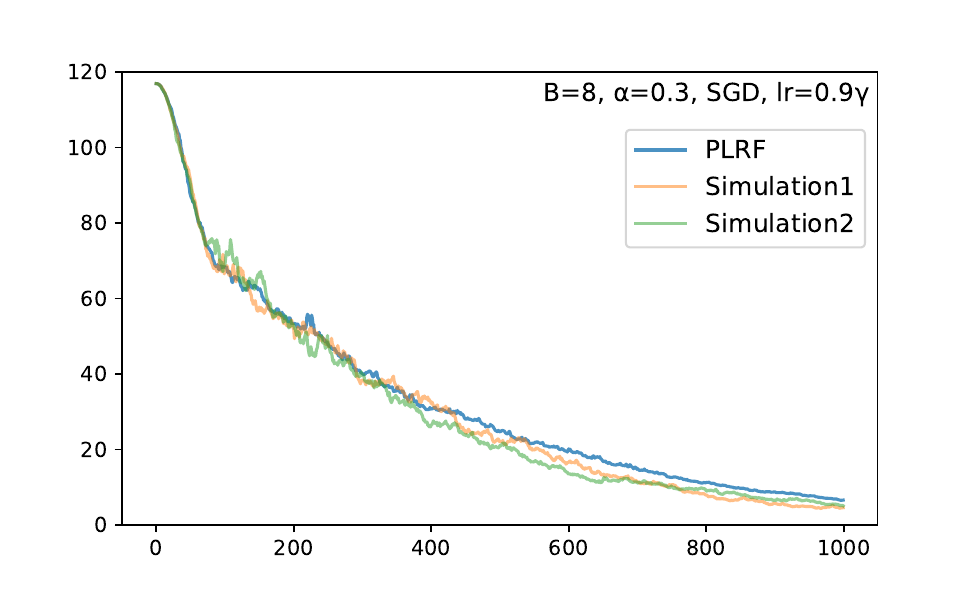}
     %\caption{(a)}
     %\label{fig:y equals x}
 \end{subfigure}
 \begin{subfigure}[b]{0.32\textwidth}
     \centering
     \includegraphics[width=\textwidth]{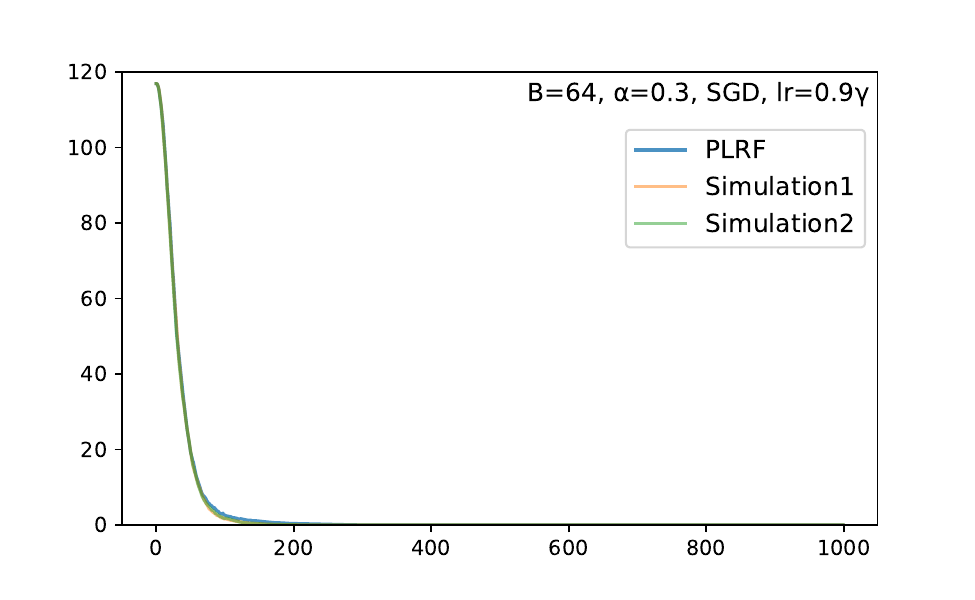}
     %\caption{(a)}
     %\label{fig:y equals x}
 \end{subfigure}\\
 \begin{subfigure}[b]{0.32\textwidth}
     \centering
     \includegraphics[width=\textwidth]{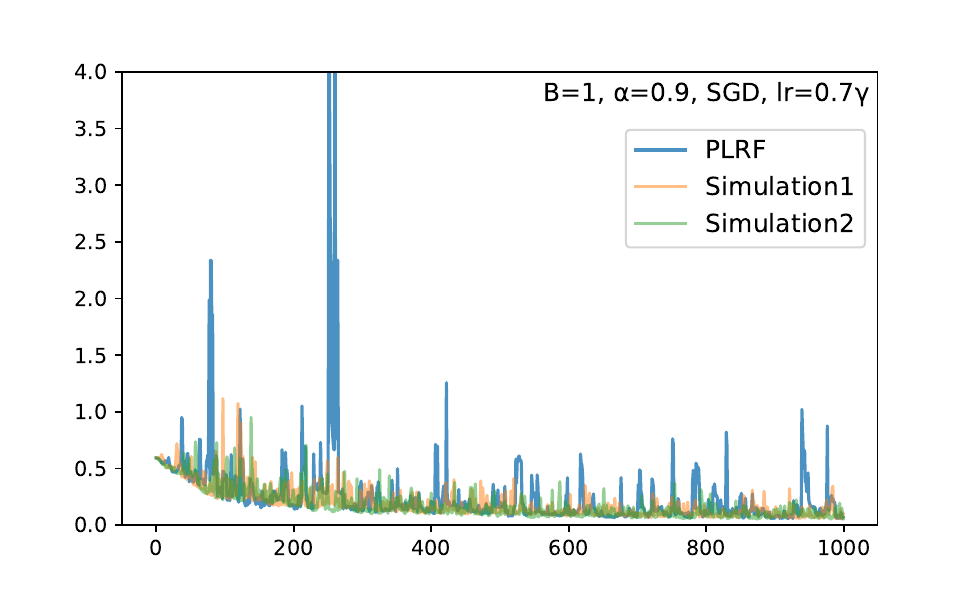}
     %\caption{(a)}
     %\label{fig:y equals x}
 \end{subfigure}
 \begin{subfigure}[b]{0.32\textwidth}
     \centering
     \includegraphics[width=\textwidth]{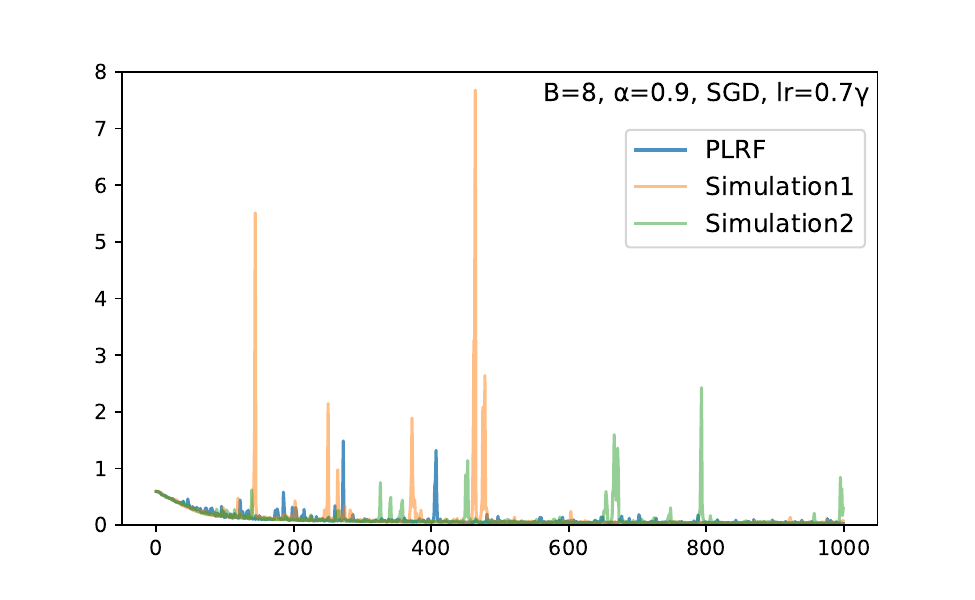}
     %\caption{(a)}
     %\label{fig:y equals x}
 \end{subfigure}
 \begin{subfigure}[b]{0.32\textwidth}
     \centering
     \includegraphics[width=\textwidth]{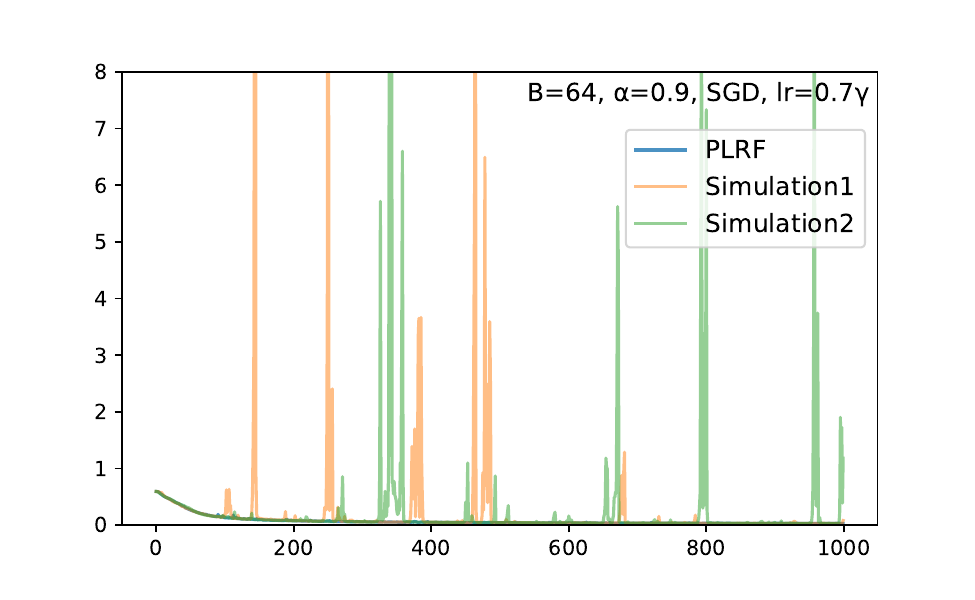}
     %\caption{(a)}
     %\label{fig:y equals x}
 \end{subfigure}\\
 \begin{subfigure}[b]{0.32\textwidth}
     \centering
     \includegraphics[width=\textwidth]{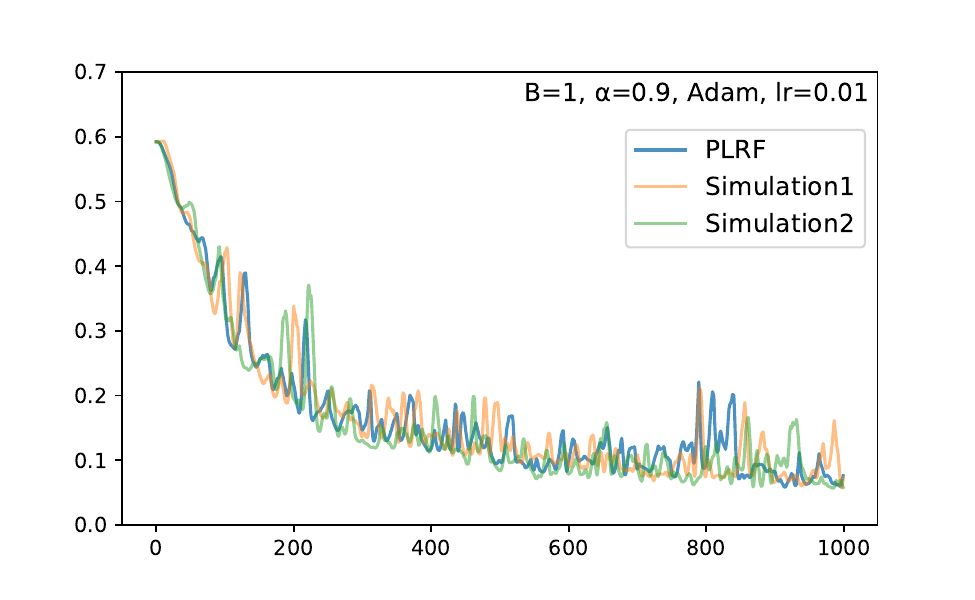}
     %\caption{(a)}
     %\label{fig:y equals x}
 \end{subfigure}
 \begin{subfigure}[b]{0.32\textwidth}
     \centering
     \includegraphics[width=\textwidth]{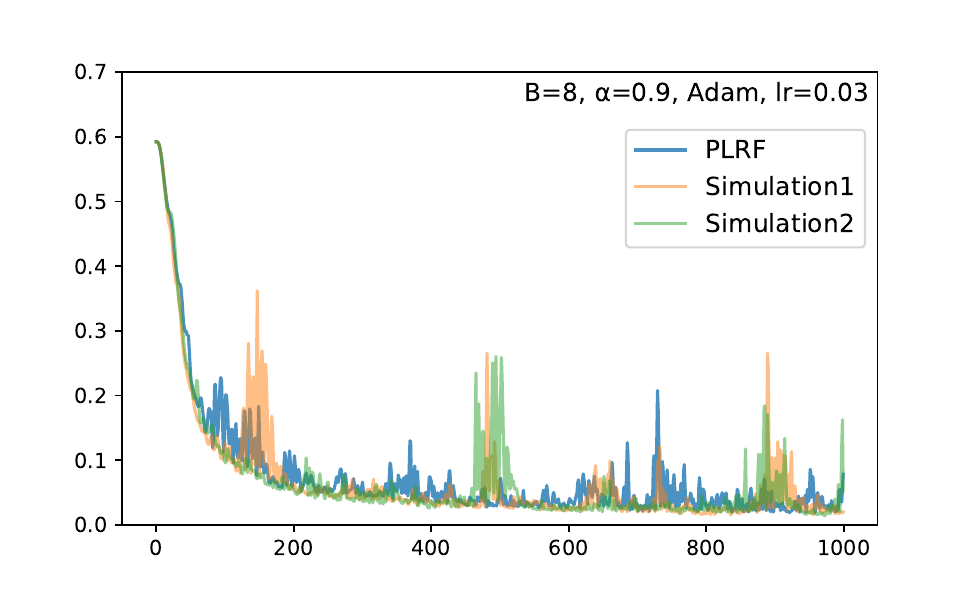}
     %\caption{(a)}
     %\label{fig:y equals x}
 \end{subfigure}
 \begin{subfigure}[b]{0.32\textwidth}
     \centering
     \includegraphics[width=\textwidth]{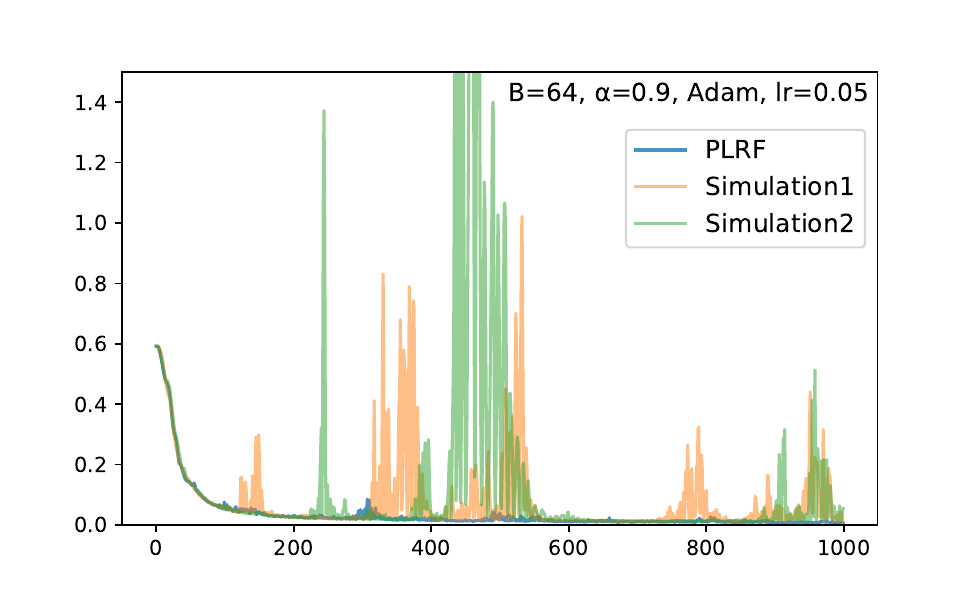}
     %\caption{(a)}
     %\label{fig:y equals x}
 \end{subfigure}\\
 \begin{subfigure}[b]{0.32\textwidth}
     \centering
     \includegraphics[width=\textwidth]{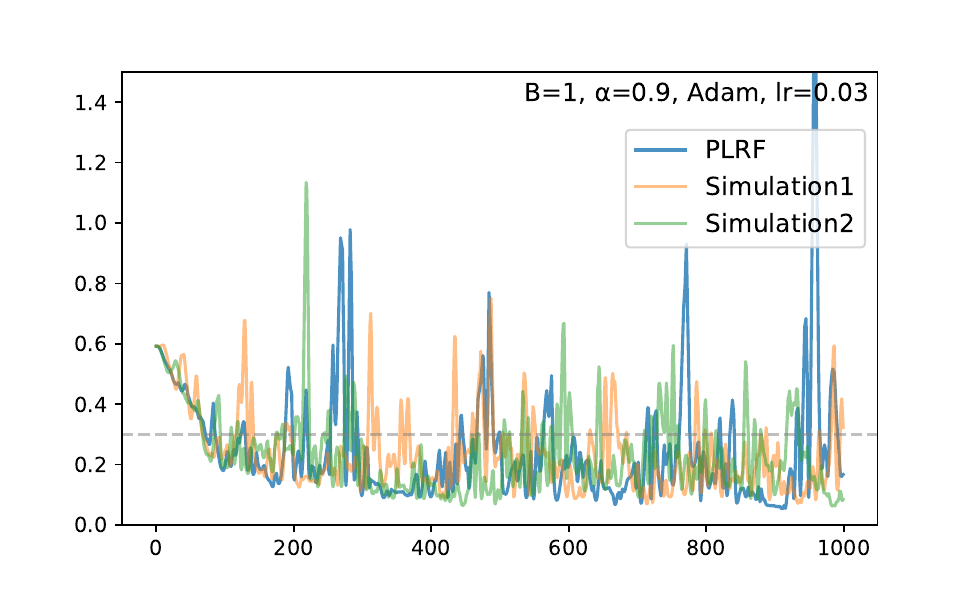}
     %\caption{(a)}
     %\label{fig:y equals x}
 \end{subfigure}
 \begin{subfigure}[b]{0.32\textwidth}
     \centering
     \includegraphics[width=\textwidth]{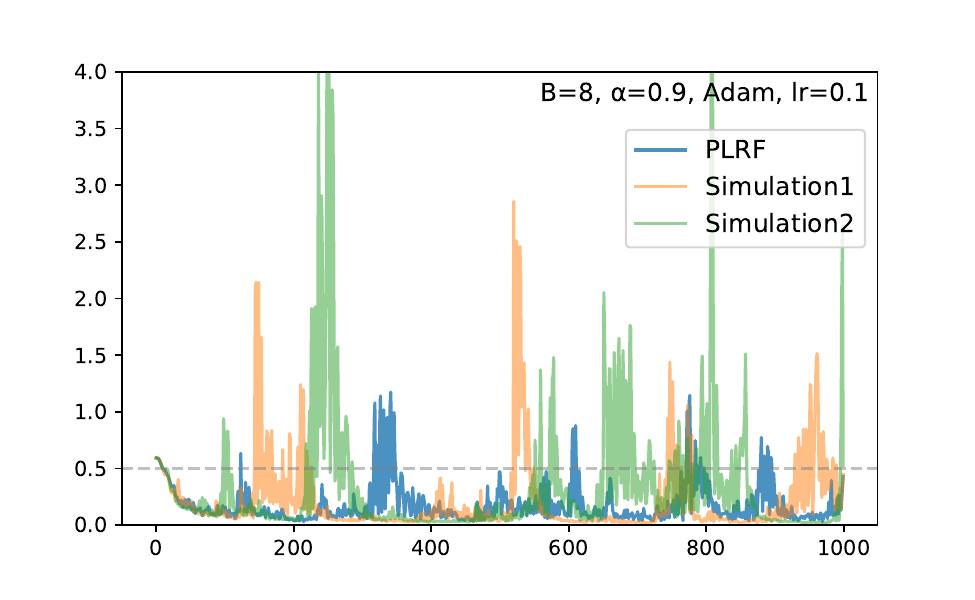}
     %\caption{(a)}
     %\label{fig:y equals x}
 \end{subfigure}
 \begin{subfigure}[b]{0.32\textwidth}
     \centering
     \includegraphics[width=\textwidth]{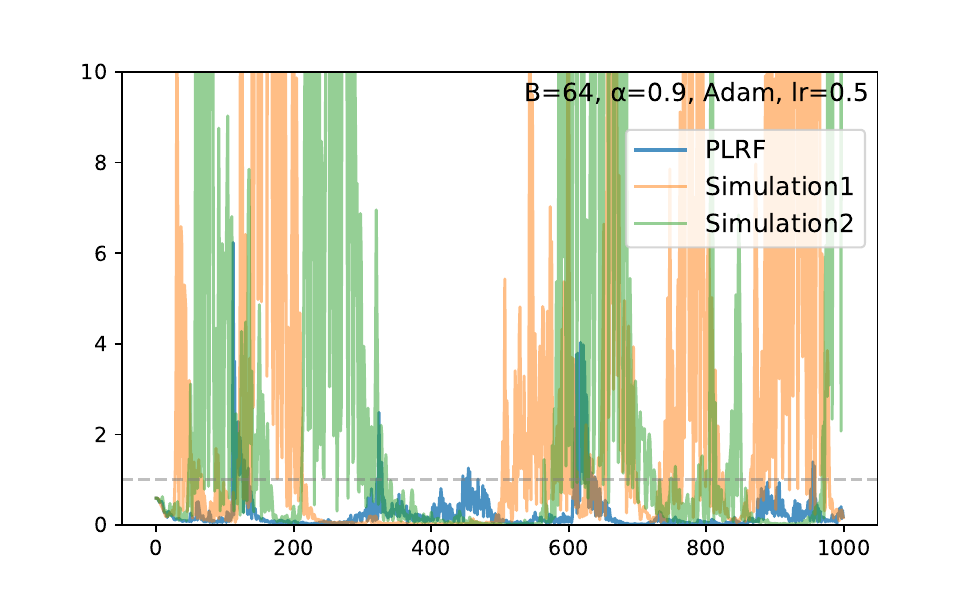}
     %\caption{(a)}
     %\label{fig:y equals x}
 \end{subfigure}
\caption{Simulating PLRF by Algorithm~\ref{alg:simulation}. We apply SGD and Adam at different batch-sizes. Each plot includes one PLRF training and two simulation runs (with different random seeds).}
\label{fig:plrf_simulation}
\end{figure}

Figure~\ref{fig:plrf_simulation} shows the results. We observe that the simulation correctly captures the average-case behavior of the validation loss: the underlying ``flows'' match each other as long as we ignore the spikes. Moreover, the simulation qualitatively captures the spiky behavior: Simulation is spiky if and only if the actual PLRF training trajectory is spiky.

However, the simulation does not always correctly predict every statistics of training spikes: It seems that the simulation tends to underestimate the ``largeness'' of spikes when $\cscalar{B}=1$, while significantly overestimate it when $\cscalar{B}$ is large. 

\begin{table}[t]
\footnotesize
\centering
\begin{tabular}{@{}lccccccccc@{}}
\toprule
 & \multicolumn{3}{c}{\texttt{B=1,LR=0.03,TH=0.3}} & \multicolumn{3}{c}{\texttt{B=8,LR=0.1,TH=0.5}} & \multicolumn{3}{c}{\texttt{B=64,LR=0.5,TH=1.0}} \\
 \cmidrule(lr){2-4} \cmidrule(lr){5-7} \cmidrule(lr){8-10}
            & Q90 & RAT & MRLAT & Q90 & RAT & MRLAT & Q90 & RAT & MRLAT \\ \midrule
PLRF        & 0.89 & 0.18 & 8.2 & 1.1 & 0.051 & 2.2 & 3.8 & 0.035 & 1.3 \\
Simulation1 & 0.59 & 0.14 & 4.9 & 1.4 & 0.096 & 2.7 & 22. & 0.38 & 4.6 \\
Simulation2 & 0.53 & 0.18 & 5.1 & 2.5 & 0.19 & 3.7 & 44. & 0.36 & 6.9 \\ \bottomrule
\end{tabular}
\caption{Statistics of training spikes, for three configurations of batch-sizes $\cscalar{B}=\{1,8,64\}$, with respectively picked learning-rate (\texttt{LR}) and threshold (\texttt{TH}).}
\label{tab:stats_spikes}
\end{table}

This tendency is quantitatively documented in Table~\ref{tab:stats_spikes}: For each of the three configurations in the last row of Figure~\ref{fig:plrf_simulation}, we pick a threshold (shown as dashed lines in the figure) and empirically measure the following statistics to reflect the nature of spikes:
\begin{itemize}
 \item \textbf{Quantile at $90\%$ (Q90)}: The 90th percentile of loss values larger than the threshold at steps $\geq200$. This measures how large the spikes are.
 \item \textbf{Ratio Above Threshold (RAT)}: The ratio of loss values larger than the threshold, for steps $\geq200$. This measures how frequently the spikes occur.
 \item \textbf{Mean Run Length Above Threshold (MRLAT)}: Count the number of consecutive steps ($\geq 200$) where loss values exceed the threshold before they drop back down. Then, take the average of these lengths. This measures the average duration of a spike.
\end{itemize}
We observe that for $\cscalar{B}=1$, simulations have slightly lower Q90; while for $\cscalar{B}=64$, all statistics (Q90, RAT and MRLAT) become much larger, indicating that the simulations are overestimating spikes. We believe the reasons are due to: (1) The distribution of \emph{per-example} gradients of PLRF is not Gaussian; (2) Our estimate of the 2nd moment of \emph{per-batch} gradients, as used in the simulation, is only asymptotically accurate when the batch-size is small.

\section{Transformer LM Experiments}

In this section, we give details of our case-study of a Transformer LM training.

\subsection{Model and Training Settings}
\label{sec:llm_train_settings}

The model is a vanilla decoder-only Transformer, with Self-Attention \citep{NIPS2017_3f5ee243}, Pre-Layernorm \citep{xiong2020layernormalizationtransformerarchitecture}, Rotary Position Encoding \citep{rope} and QK-layernorm \citep{pmlr-v202-dehghani23a} for each layer. We build our experiments upon an existing series of model sizes used in previous research \citep{DBLP:conf/icml/EverettXWANLGSK24,DBLP:journals/corr/abs-2503-09799}, where the sizes of model components are parameterized by a single factor $k$, as below:
%Our scaling experiments are built upon the work of \url{http://go/scaling-law-baseline}; the model is a decoder-only Transformer, with Self-Attention, Rotary Position Encoding, QK-Norm, 2-layer MLP, GeLU activation, and PreNorm for each layer. The sizes of model components are parameterized by a single scaling factor $k$, as below:
\begin{itemize}
\item Sequence length: $2048$.
\item Vocab size: $32000$.
\item Number of blocks: $12*k$.
\item Number of attention heads: $16*k$.
\item Dimension of attention heads: $64$.
\item Dimension of model hidden vectors = (Attention head dimension) $*$ (Number of attention heads) = $64*16*k=1024*k$.
\item MLP intermediate dimension = $4$ $*$ (Hidden dimension) = $4096*k$.
\end{itemize}
As detailed in \citet{DBLP:conf/icml/EverettXWANLGSK24}, the training hyperparameters are found by sweeping smaller models and verified by fitting a power-law curve. We take one point $k=2$ for our case-study.

\begin{figure}[t]%{0.32\textwidth}
\centering
\includegraphics[width=0.7\textwidth]{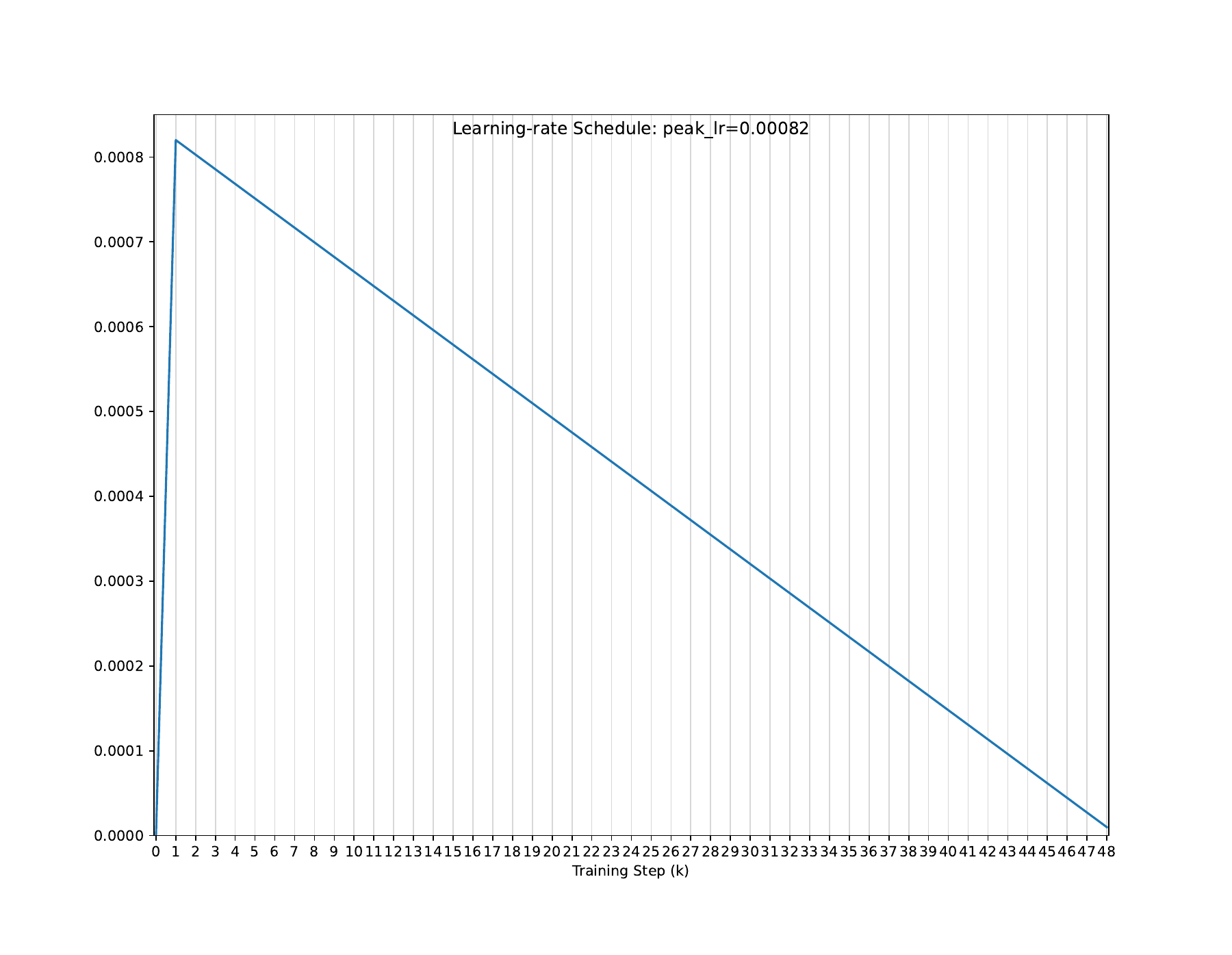}
\caption{The learning-rate schedule used for our Transformer LM training.}
\label{fig:lr_schedule}
\end{figure}

For this point, the hyperparameters are given as below.
\begin{itemize}
\item Optimizer: AdamW \citep{loshchilov2018decoupled} with $\beta_1=0.9$, $\beta_2=0.95$. We do not apply any gradient clipping to the training.
\item Batch-size (number of sequences per batch): Set to $256$.
\item Number of training steps: Determined by the Chinchilla rule \citep{hoffmann2022trainingcomputeoptimallargelanguage}, i.e., (Total number of tokens trained) / (Tokens per batch) = ($20$ $*$ Number of model parameters) / $(256\times2048)$ $\approx$ $48500$.
\item Learning-rate schedule: We use a simple schedule with warm-up in $1000$ steps, then linearly decay to $0$. The peak learning-rate is $0.00082$. The schedule is shown in Figure~\ref{fig:lr_schedule}.
%\item Peak learning-rate: Determined by a Bayesian fit of the power law; then with the near-optimality verified at 9.8B scale ($k=4$) by a grid search.
\item Weight-decay: Set to $0.2$.
\end{itemize}

\begin{figure}[t!]
 \centering
 \begin{subfigure}[b]{0.49\textwidth}
     \centering
     \includegraphics[width=\textwidth]{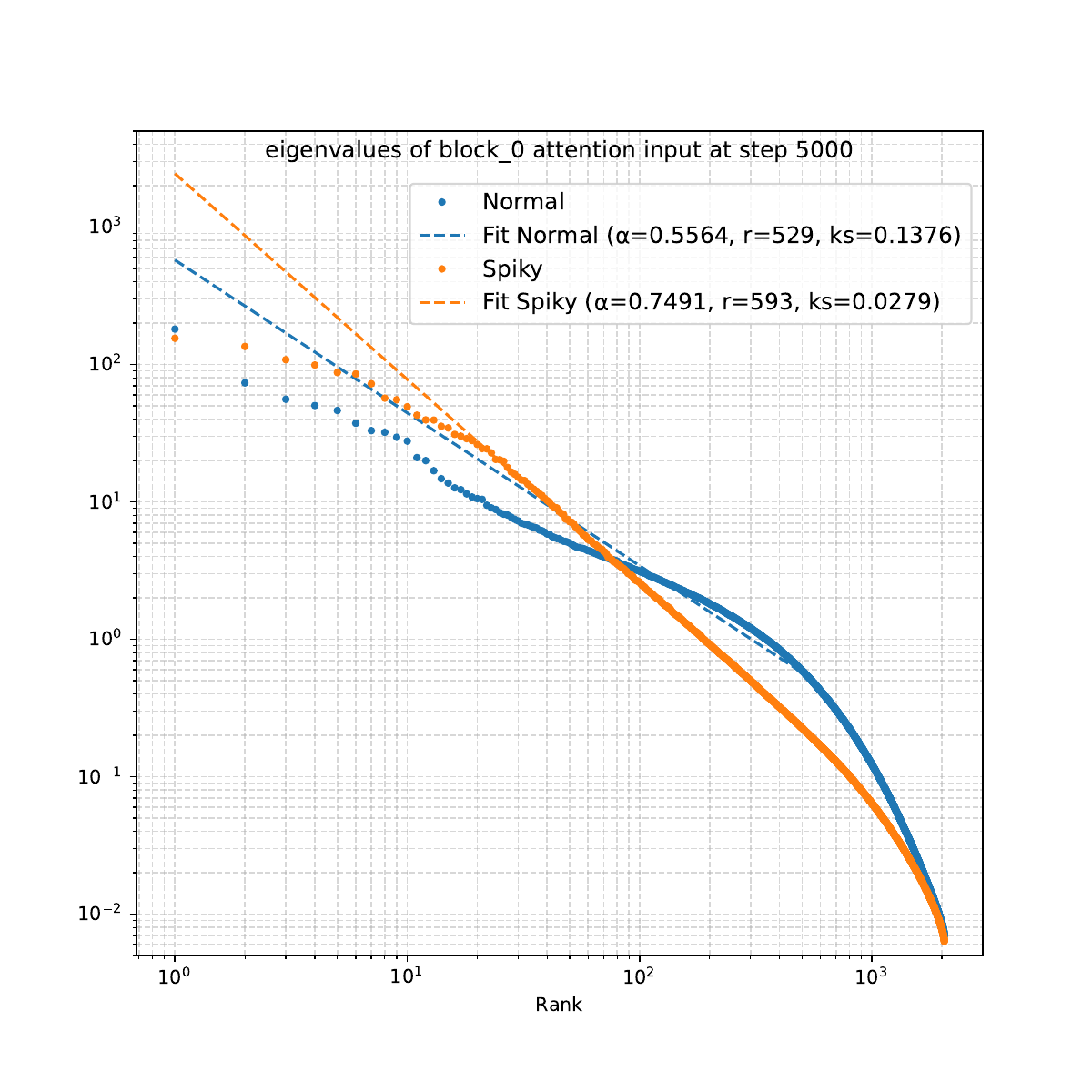}
     %\caption{(a)}
     %\label{fig:y equals x}
 \end{subfigure}
 \begin{subfigure}[b]{0.49\textwidth}
     \centering
     \includegraphics[width=\textwidth]{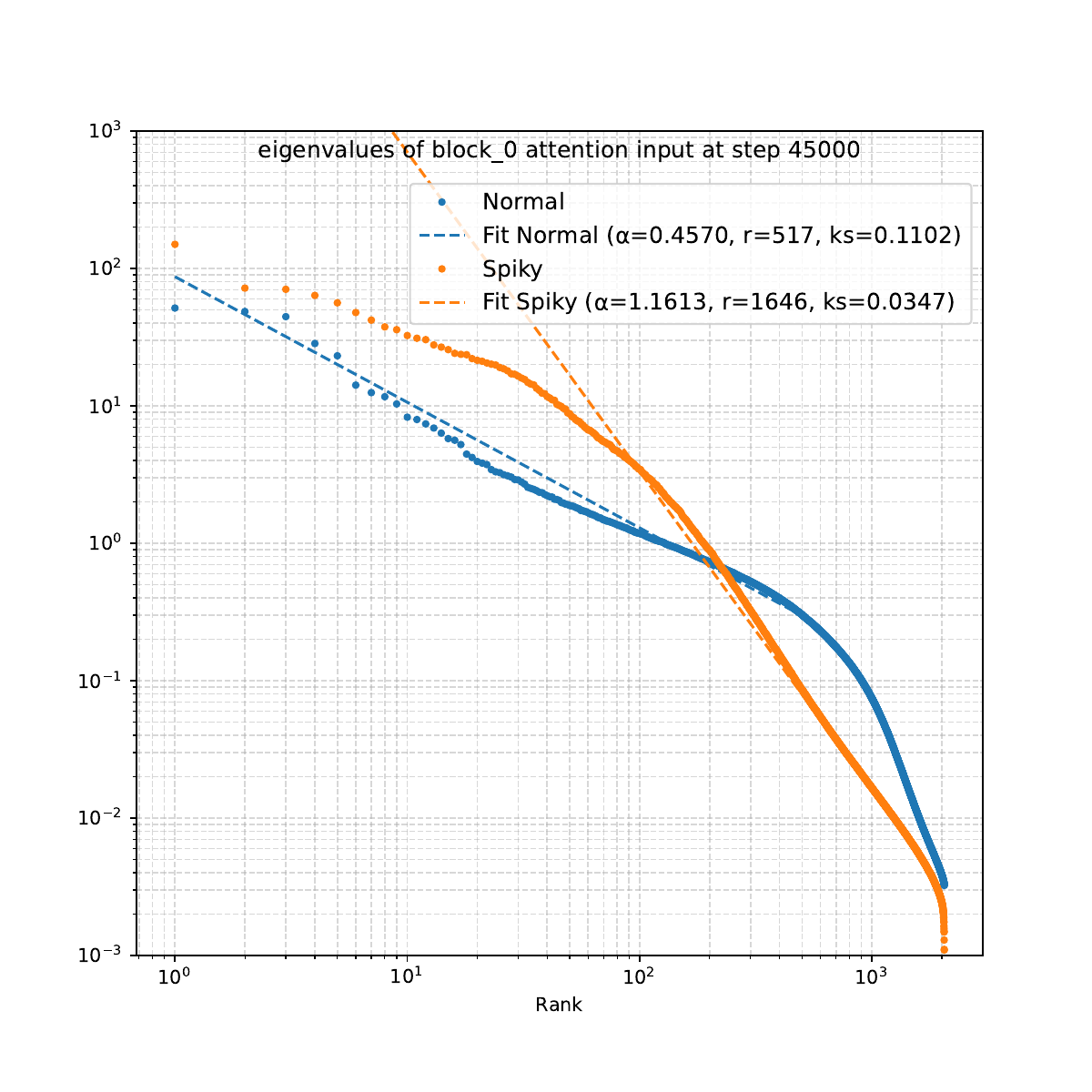}
     %\caption{(a)}
     %\label{fig:y equals x}
 \end{subfigure}\\
 \begin{subfigure}[b]{0.49\textwidth}
     \centering
     \includegraphics[width=\textwidth]{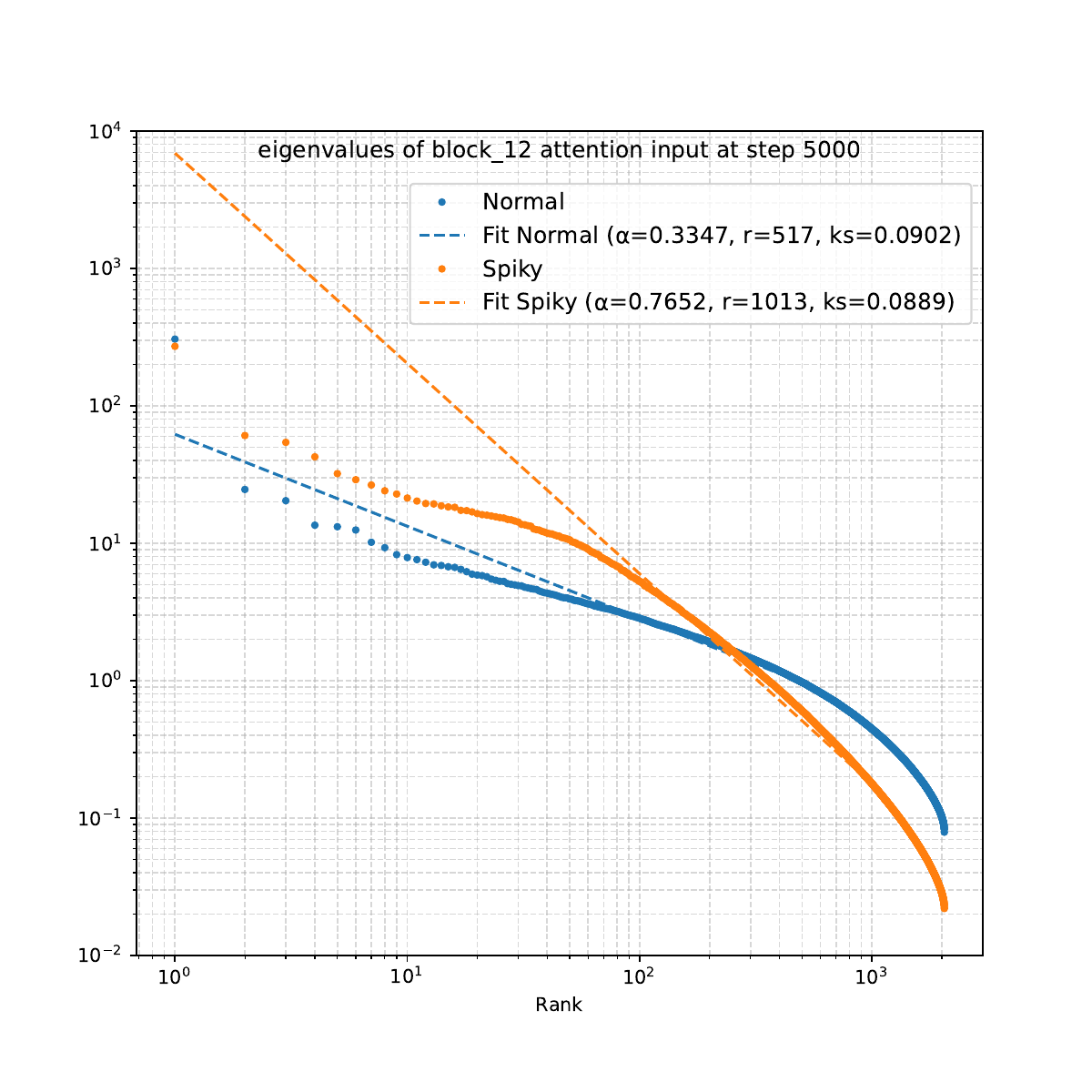}
     %\caption{(a)}
     %\label{fig:y equals x}
 \end{subfigure}
 \begin{subfigure}[b]{0.49\textwidth}
     \centering
     \includegraphics[width=\textwidth]{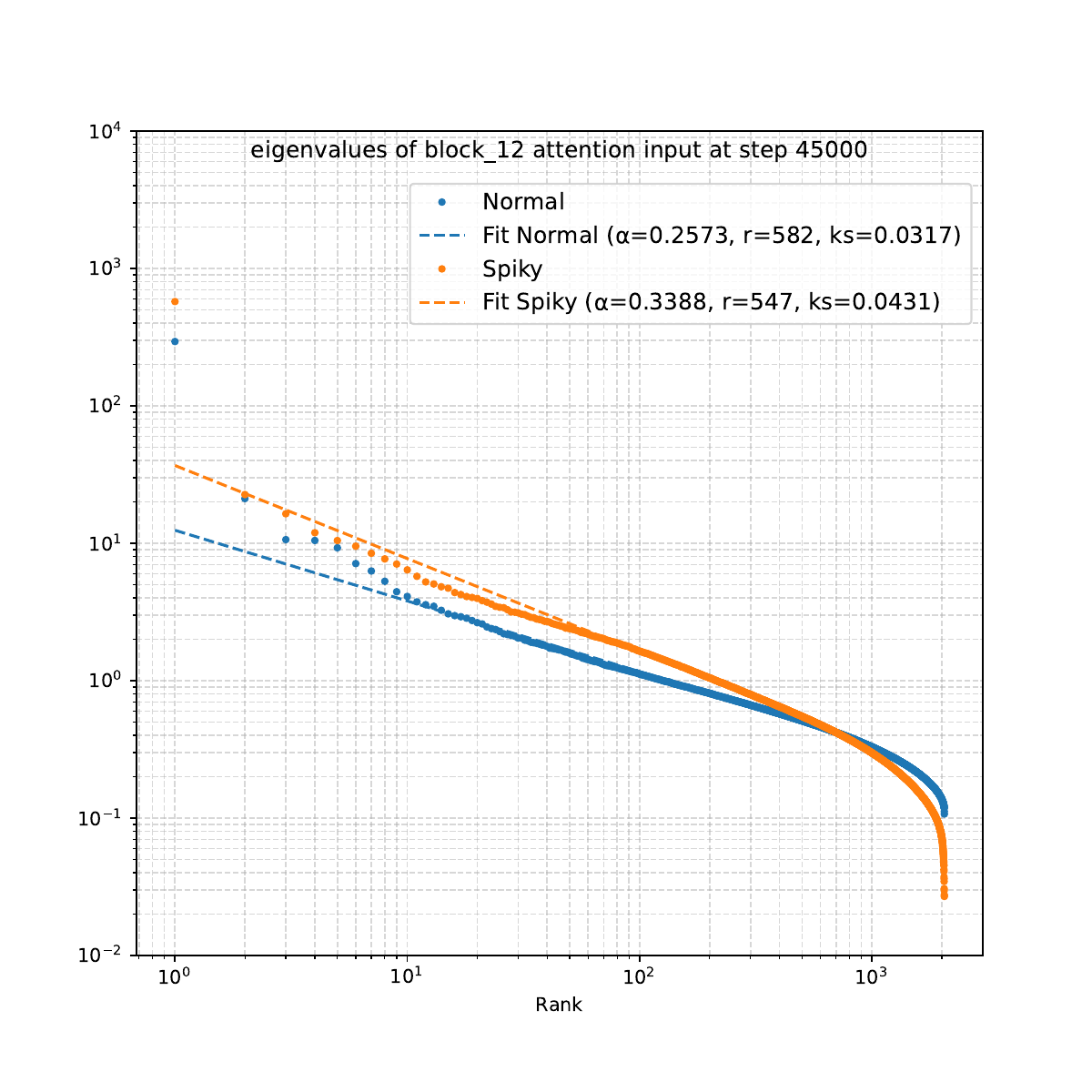}
     %\caption{(a)}
     %\label{fig:y equals x}
 \end{subfigure}\\
 \begin{subfigure}[b]{0.49\textwidth}
     \centering
     \includegraphics[width=\textwidth]{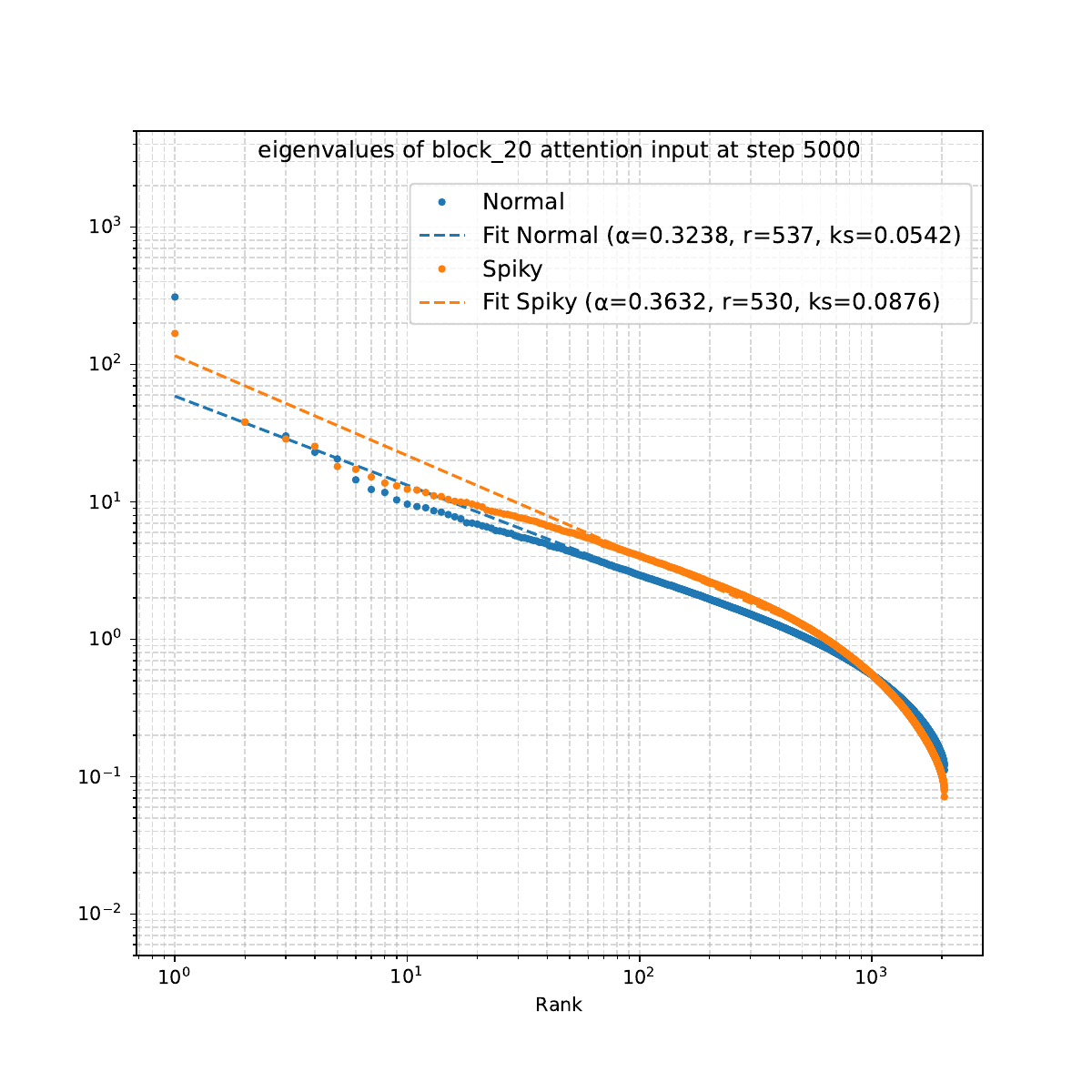}
     %\caption{(a)}
     %\label{fig:y equals x}
 \end{subfigure}
 \begin{subfigure}[b]{0.49\textwidth}
     \centering
     \includegraphics[width=\textwidth]{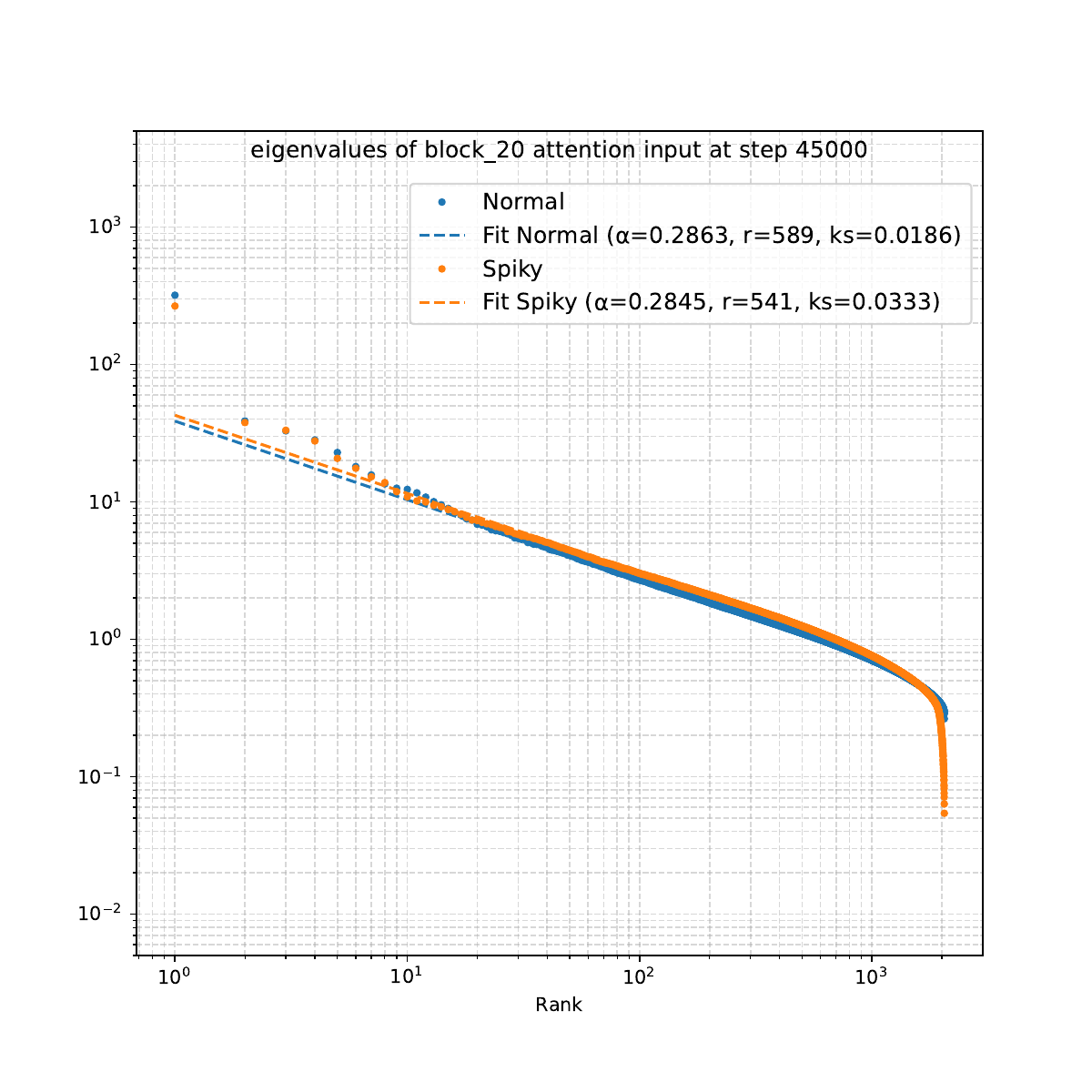}
     %\caption{(a)}
     %\label{fig:y equals x}
 \end{subfigure}
\caption{Log-log plot with truncated power-law fit, for ranked eigenvalues of correlation matrices.}
\label{fig:loglog}
\end{figure}

The model is trained with cross-entropy loss normalized by the number of tokens per batch. We note this normalization factor is applied to gradient -- the loss should be considered \emph{per-token} cross-entropy without any normalization factor; while the \emph{per-batch} gradient is summed across all tokens in a batch, so we normalize it by the number of tokens per batch, in order to ensure that the \emph{average gradient} over a dataset does not depend on the batch-size of the calculation.

This convention of normalization becomes critical when we discuss and distinguish between the squared norms of \emph{per-example} gradient $\lVert\nabla\vmat{W}_{t}\lVert^2$ and \emph{per-batch} gradient $\lVert\nabla_\cscalar{B}\vmat{W}_{t}\lVert^2$. Here, $\lVert\nabla\vmat{W}_{t}\lVert^2$ denotes the squared norm of gradient over \emph{one token}, and $\mathbb{E}(\lVert\nabla\vmat{W}_{t}\lVert^2)$ is averaged across \emph{all tokens} in a dataset. In contrast, $\lVert\nabla_\cscalar{B}\vmat{W}_{t}\lVert^2$ denotes the squared norm of gradient of \emph{a batch}, and $\mathbb{E}(\lVert\nabla_\cscalar{B}\vmat{W}_{t}\lVert^2)$ takes average across \emph{all batches} in a dataset. There is no obvious relation between $\mathbb{E}(\lVert\nabla\vmat{W}_{t}\lVert^2)$ and $\mathbb{E}(\lVert\nabla_\cscalar{B}\vmat{W}_{t}\lVert^2)$, because the square in $\lVert\nabla_\cscalar{B}\vmat{W}_{t}\lVert^2$ is taken \emph{after} summing over the batch! Throughout this paper we will carefully distinguish, prove and verify relations between $\mathbb{E}(\lVert\nabla\vmat{W}_{t}\lVert^2)$ and $\mathbb{E}(\lVert\nabla_\cscalar{B}\vmat{W}_{t}\lVert^2)$. 

Empirically, the calculation of $\mathbb{E}(\lVert\nabla\vmat{W}_{t}\lVert^2)$ can still be batched, because we have the following
\begin{equation}
(\nabla\vmat{W}_{t})^{\odot2}=(\nabla\vvec{y}_{t})^{\odot2}\,\transpose{\cvec{h}_{t}^{\odot2}},
\end{equation}
where $\;^{\odot2}$ denotes element-wise square. Therefore, by taking squares of $\nabla\vvec{y}_{t}$ and $\cvec{h}_{t}$ first, one can obtain results of squares \emph{before} summing over the batch, while still batching the whole calculation and summing across all tokens.

In order to calculate the Hessian trace $\trace\big(\mathbb{E}(\nabla^2\vmat{W}_{t})\big)$, we use the fact that for any random vector $\vvec{v}=(v_i)$, where $\{v_i\}$ are independent, of mean $0$ and variance $1$, we have
\begin{equation}
\trace(\cmat{H})=\mathbb{E}(\transpose{\vvec{v}}\cmat{H}\vvec{v}).
\end{equation}
Therefore, instead of directly calculating the huge tensor $\cmat{H}$, we only need the Hessian form $\transpose{\vvec{v}}\cmat{H}\vvec{v}$ of a few random vectors. In our experiments, we randomly sample $6$ vectors, with each entry drawn from $\{\pm1\}$. Preliminary experiments suggest this setting has relative error $<0.1$. 

The Hessian form $\transpose{\vvec{v}}\cmat{H}\vvec{v}$ can be calculated from Jacobian-Vector Products (a.k.a.~forward-mode autodiff). With some implementation techniques, one can even calculate the layer-wise Hessian form for multiple layers in one forward pass, by concatenating intermediate results from each layer along the way of autodiff.

\subsection{Power-Law Distribution of Forward Activations}
\label{sec:power_law_dist}

Let $\cvec{h}_t$ be the input hidden vector to a layer. In order for the layer to exhibit training spikes, our theory suggests that the eigenvalues of $\mathbb{E}(\cvec{h}_{t}\transpose{\cvec{h}}_{t})$ should follow a heavy-tailed distribution. In this section, we verify that it is indeed the case for our Transformer LM training.

In Figure~\ref{fig:loglog}, we sort the eigenvalues of $\mathbb{E}(\cvec{h}_{t}\transpose{\cvec{h}}_{t})$ in descending order, and show the log-log plots of eigenvalues over ranks, for different layers and training steps. The Normal and Spiky1 training configurations are compared in each sub-plot.

\begin{figure}[t!]
 \centering
 \begin{subfigure}[b]{0.49\textwidth}
     \centering
     \includegraphics[width=\textwidth]{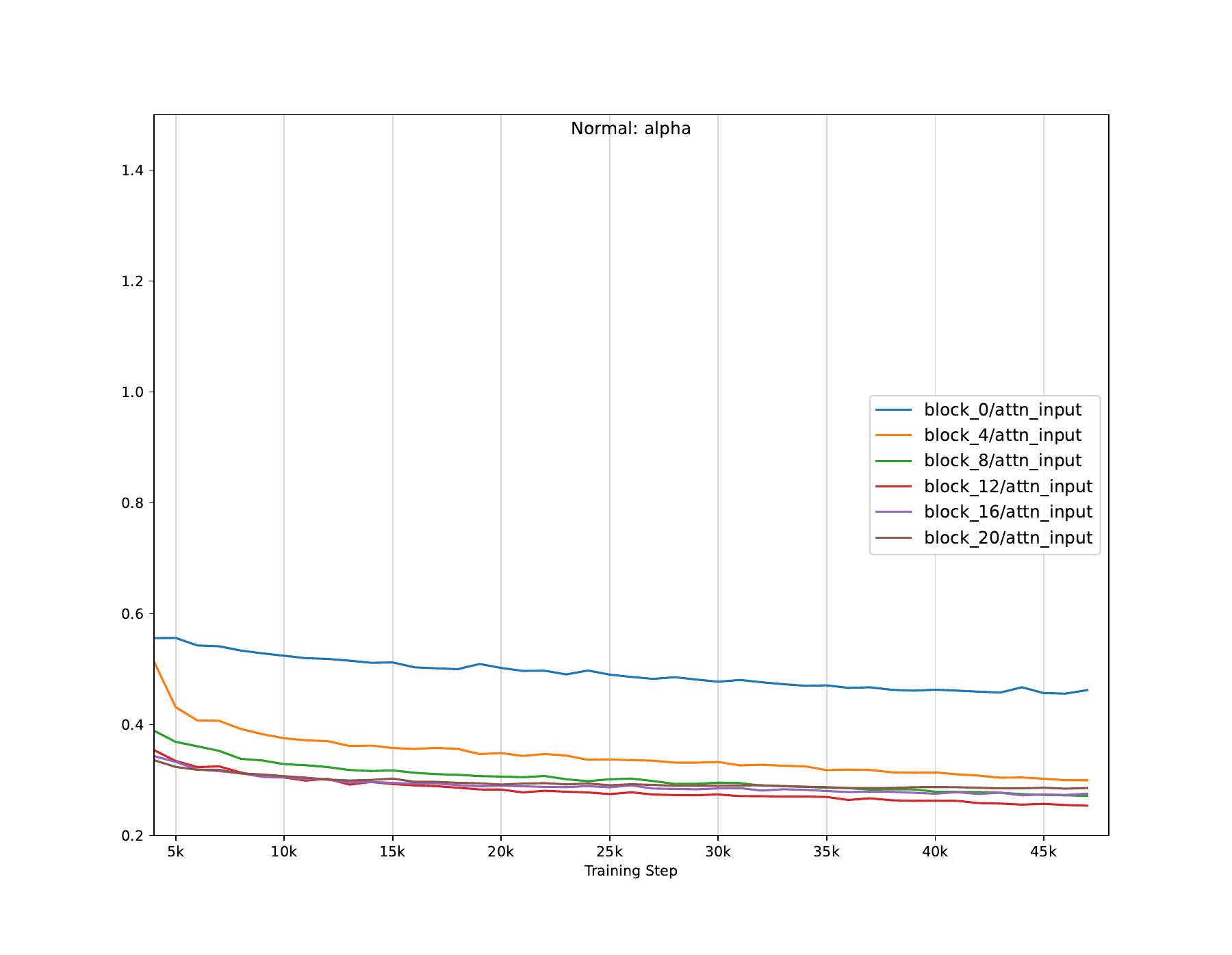}
     %\caption{(a)}
     %\label{fig:y equals x}
 \end{subfigure}
 \begin{subfigure}[b]{0.49\textwidth}
     \centering
     \includegraphics[width=\textwidth]{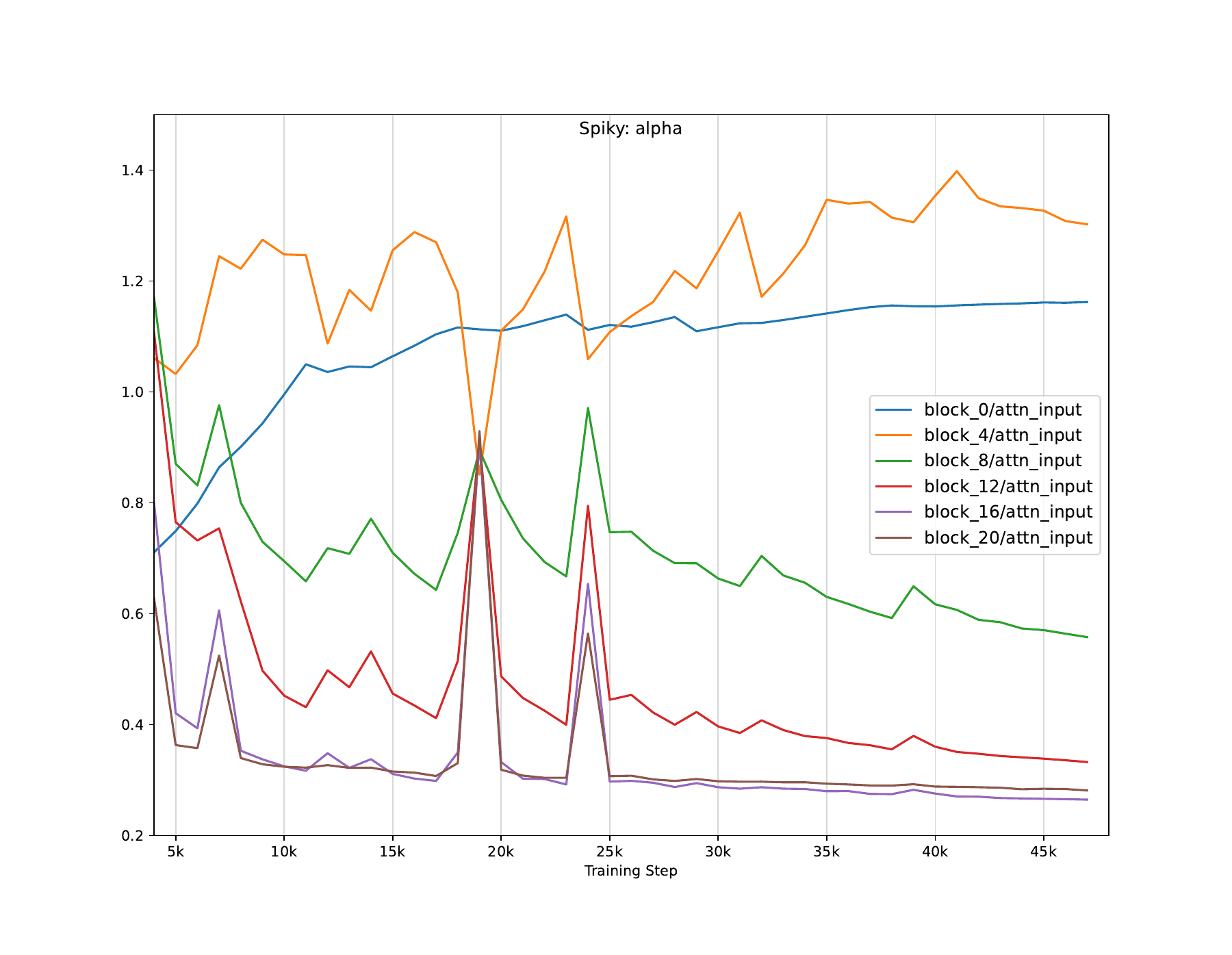}
     %\caption{(a)}
     %\label{fig:y equals x}
 \end{subfigure}\\
 \begin{subfigure}[b]{0.49\textwidth}
     \centering
     \includegraphics[width=\textwidth]{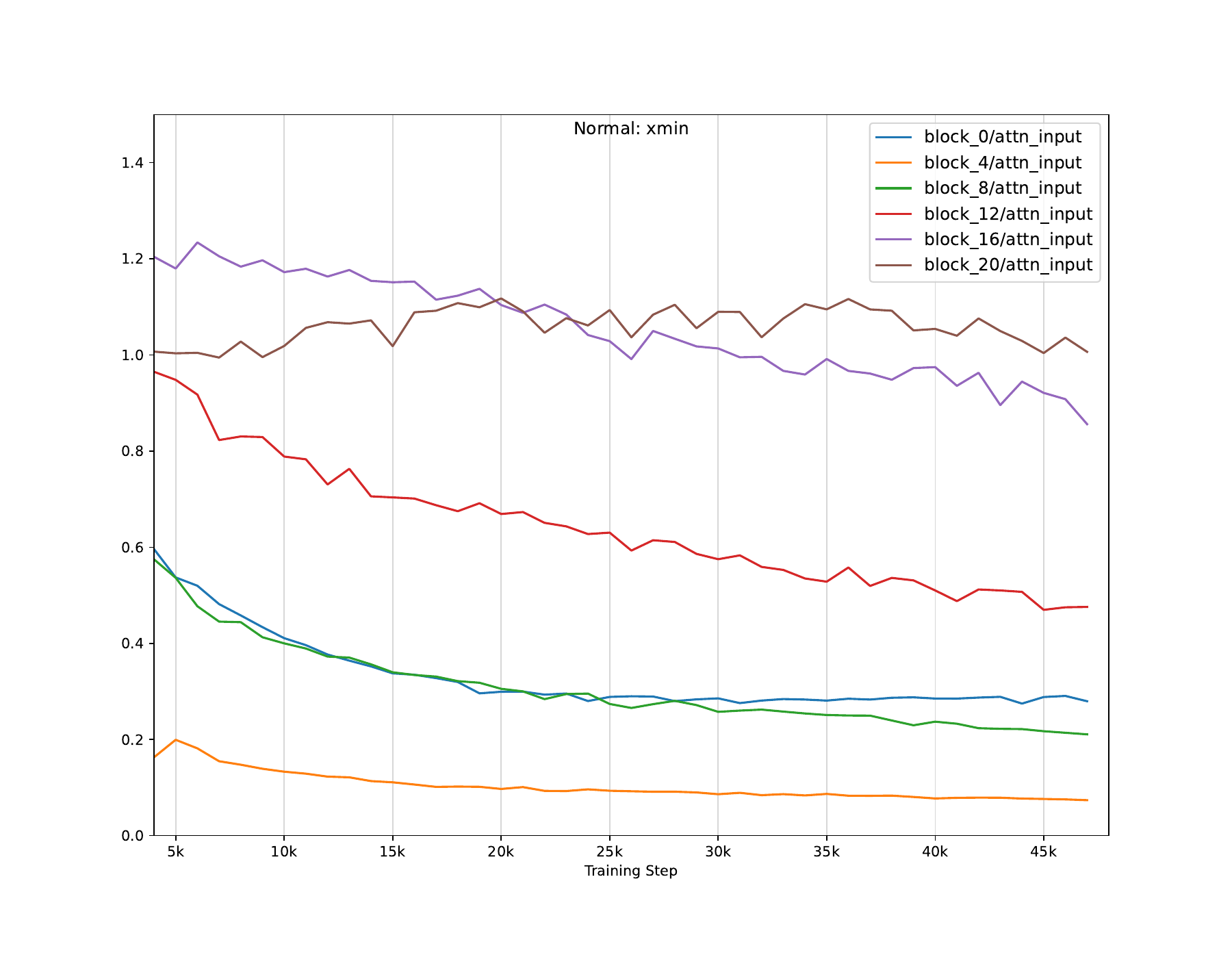}
     %\caption{(a)}
     %\label{fig:y equals x}
 \end{subfigure}
 \begin{subfigure}[b]{0.49\textwidth}
     \centering
     \includegraphics[width=\textwidth]{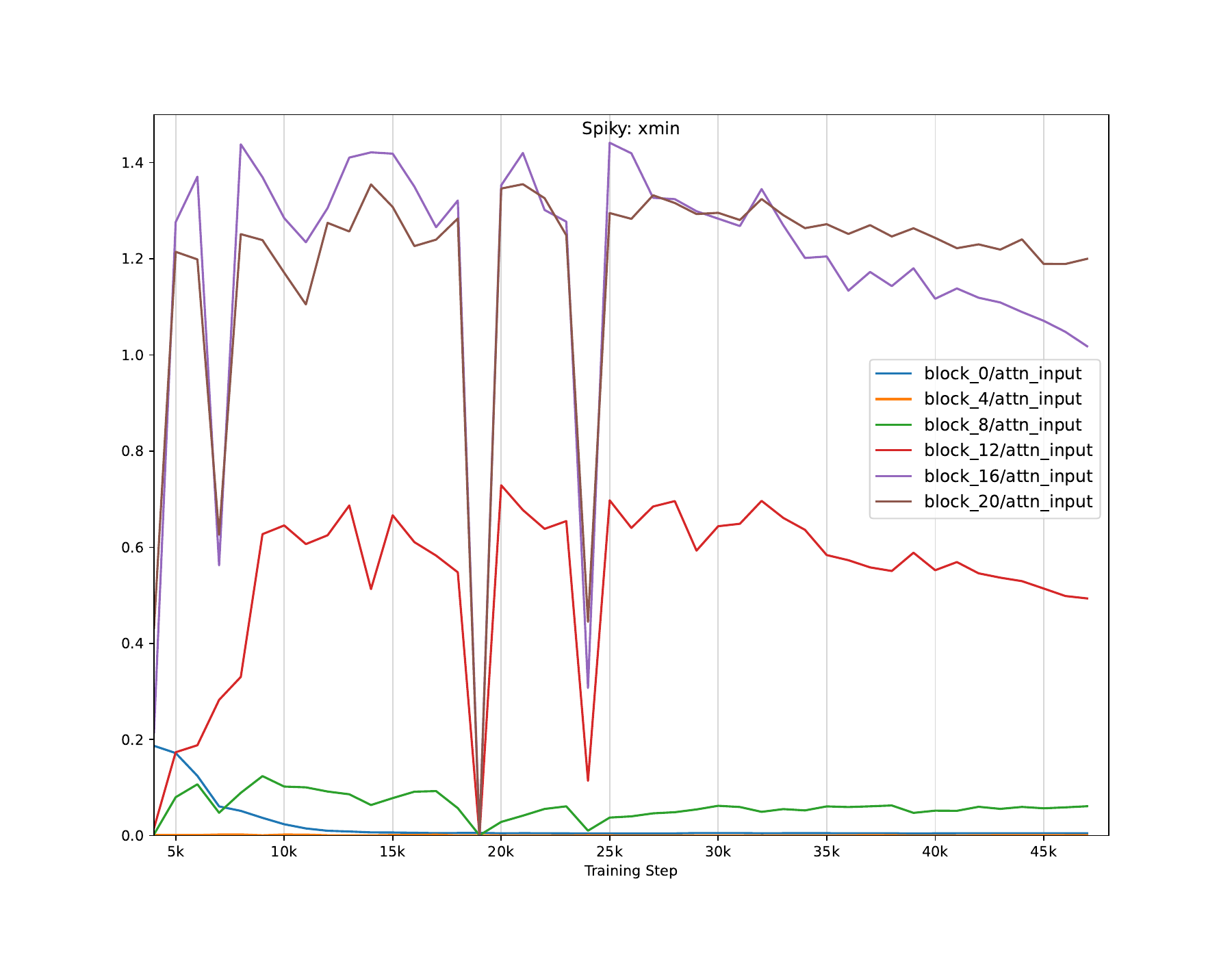}
     %\caption{(a)}
     %\label{fig:y equals x}
 \end{subfigure}\\
 \begin{subfigure}[b]{0.49\textwidth}
     \centering
     \includegraphics[width=\textwidth]{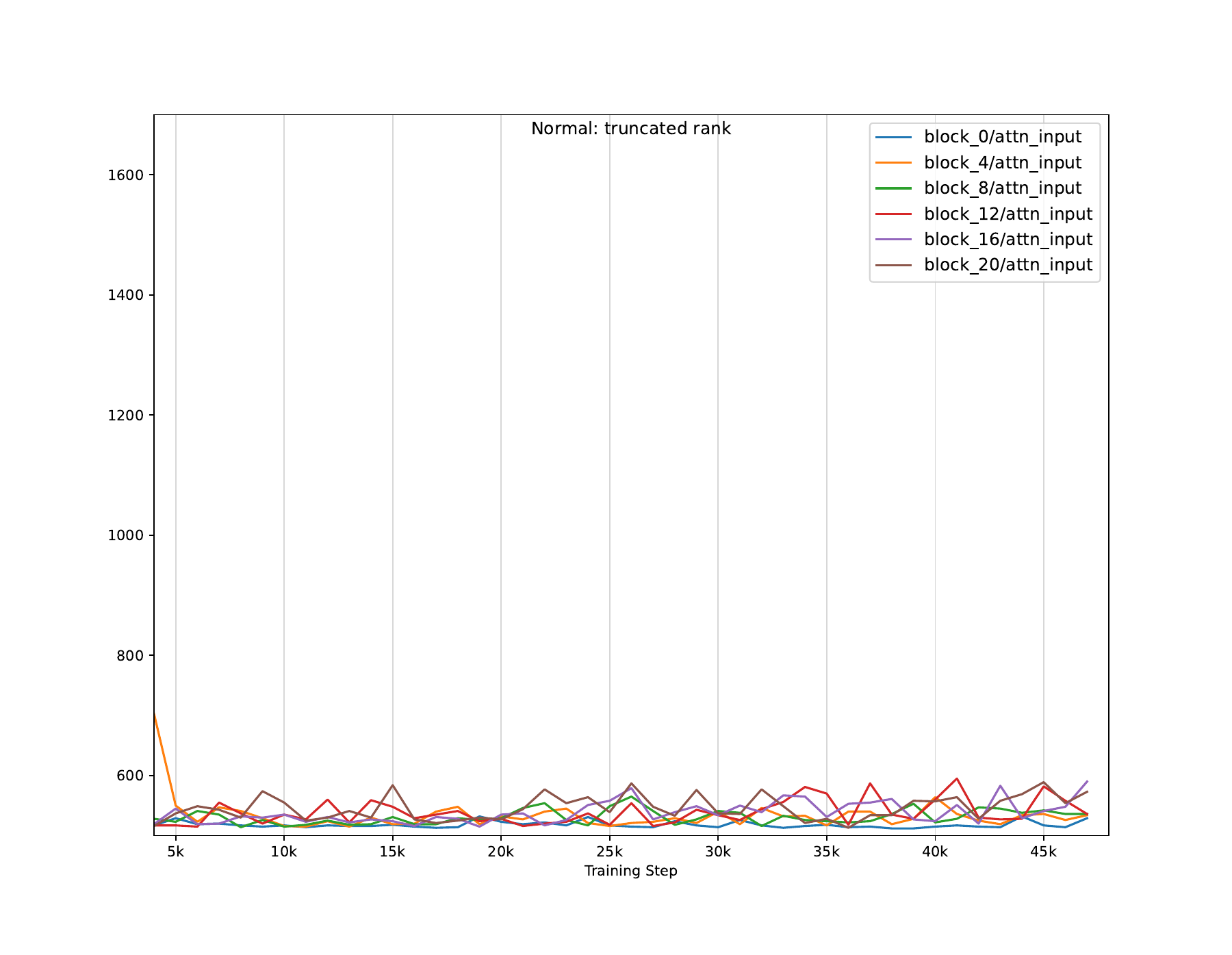}
     %\caption{(a)}
     %\label{fig:y equals x}
 \end{subfigure}
 \begin{subfigure}[b]{0.49\textwidth}
     \centering
     \includegraphics[width=\textwidth]{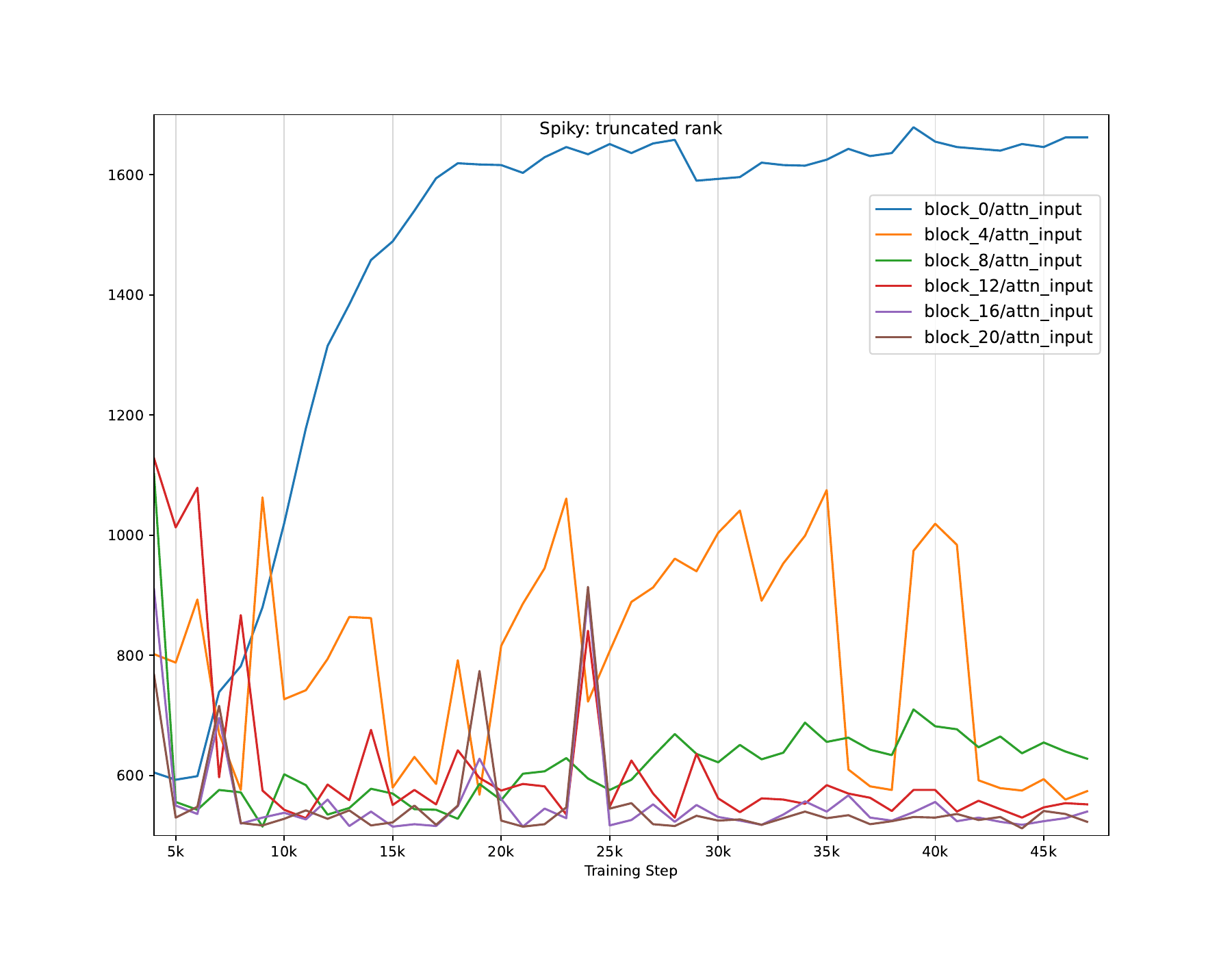}
     %\caption{(a)}
     %\label{fig:y equals x}
 \end{subfigure}
\caption{Evolution of power-law fitting statistics, across layers and training steps.}
\label{fig:powerlaw}
\end{figure}

In the log-log plots, eigenvalues indeed exhibit power-law like distributions, whereas the Spiky training generally shows steeper descending and crosses Normal. In other words, eigenvalues from Spiky are more heavy-tailed.

Quantitatively, we follow \citet{doi:10.1137/070710111} to fit truncated power-law distributions to the log-log plots, which estimates the power-law exponent by maximizing the likelihood, and estimates the truncation threshold by minimizing the Kolmogorov–Smirnov (KS) statistic. The estimated power-law exponent $\alpha$, truncation rank $r$, and the KS statistic are shown in Figure~\ref{fig:loglog}.

For lower layers (e.g., \texttt{block\_0}), the eigenvalues distribution of Spiky seems significantly different from Normal, and the gap remains unclosed until the end. For upper layers (e.g., \texttt{block\_20}) the distributions are more similar, and they seem to converge to the same towards the end of training (e.g., \texttt{block\_\{12,20\}}).

This tendency is documented in Figure~\ref{fig:powerlaw}, where we plot the evolution of power-law fitting statistics along the whole training process. Generally, Spiky training exhibits larger power-law exponent $\alpha$, larger truncation threshold $x_\textrm{min}$ for upper layers and smaller for lower layers, and larger truncated rank $r$ especially for lower layers. The evolution of the statistics is also more unstable for Spiky.

\subsection{Empirical Verification of Prediction~\ref{pred:grad_norm}}
\label{sec:verification_of_pred}

In Prediction~\ref{pred:grad_norm}, we have derived a formula to calculate the norm of per-example gradients, given as Equation~\eqref{eq:grad_norm}. We empirically verify the formula in this section.

\begin{figure}[t]
 \centering
 \begin{subfigure}[b]{0.49\textwidth}
     \centering
     \includegraphics[width=\textwidth]{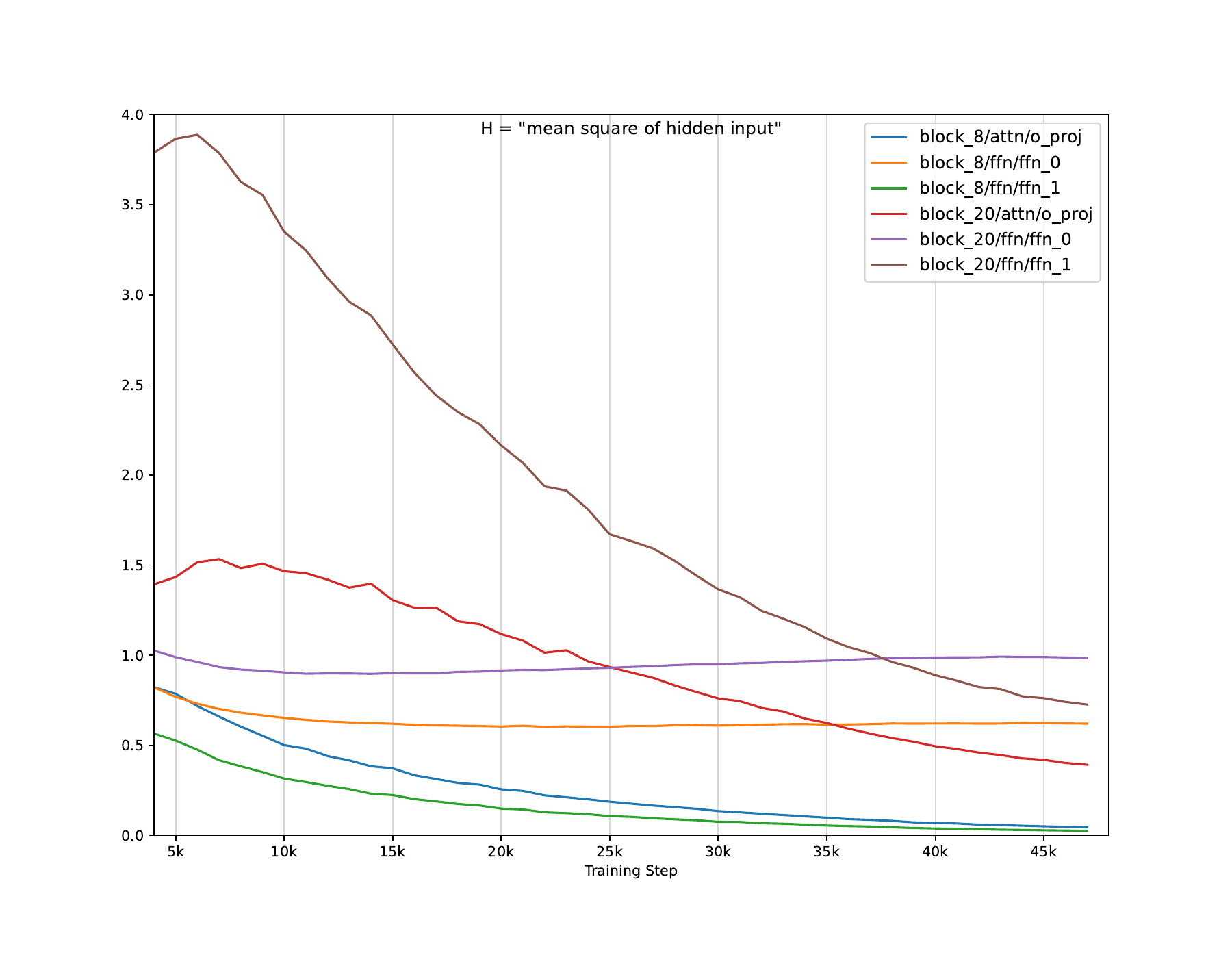}
     %\caption{(a)}
     %\label{fig:y equals x}
 \end{subfigure}
 \begin{subfigure}[b]{0.49\textwidth}
     \centering
     \includegraphics[width=\textwidth]{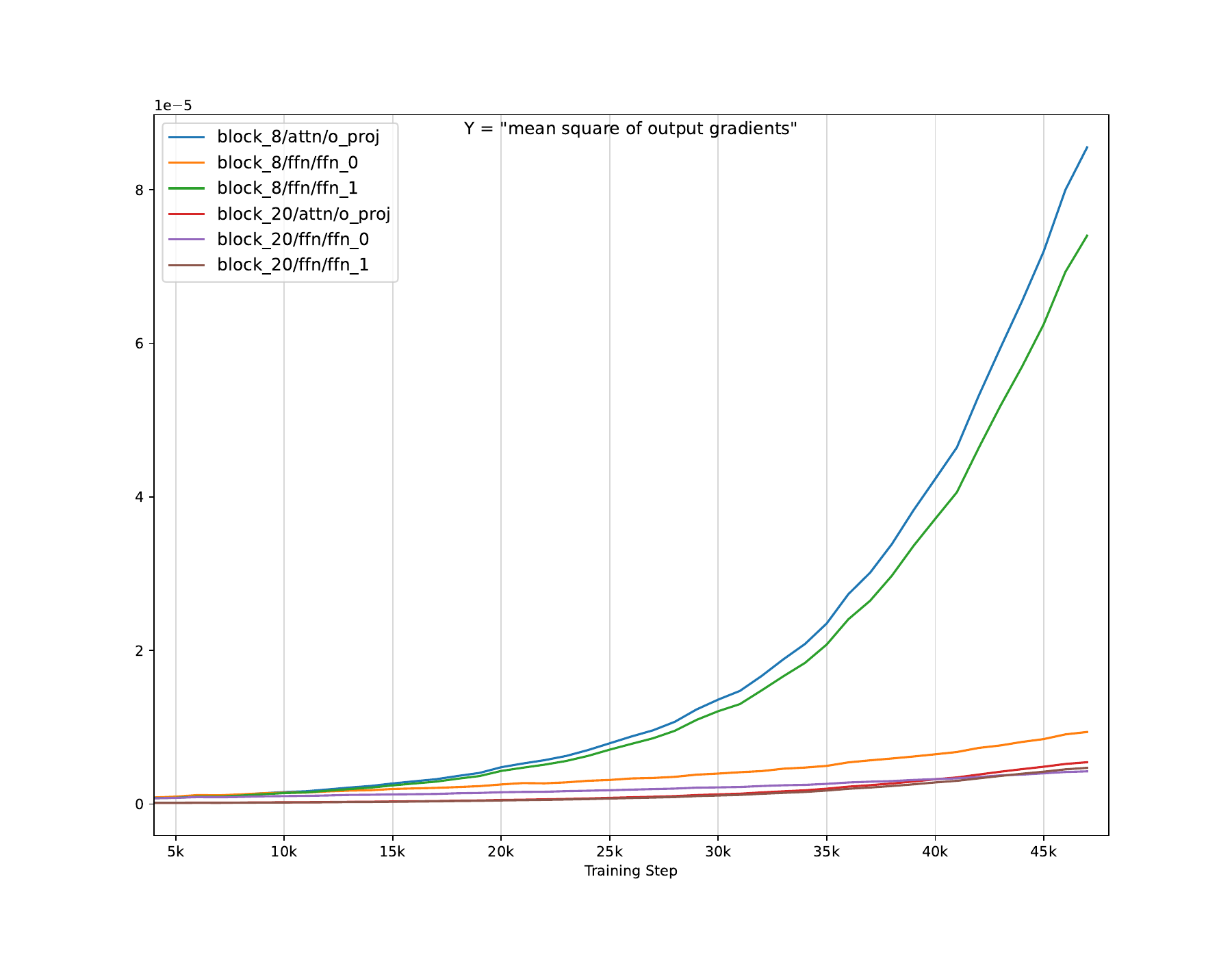}
     %\caption{(a)}
     %\label{fig:y equals x}
 \end{subfigure}\\
 \begin{subfigure}[b]{0.49\textwidth}
     \centering
     \includegraphics[width=\textwidth]{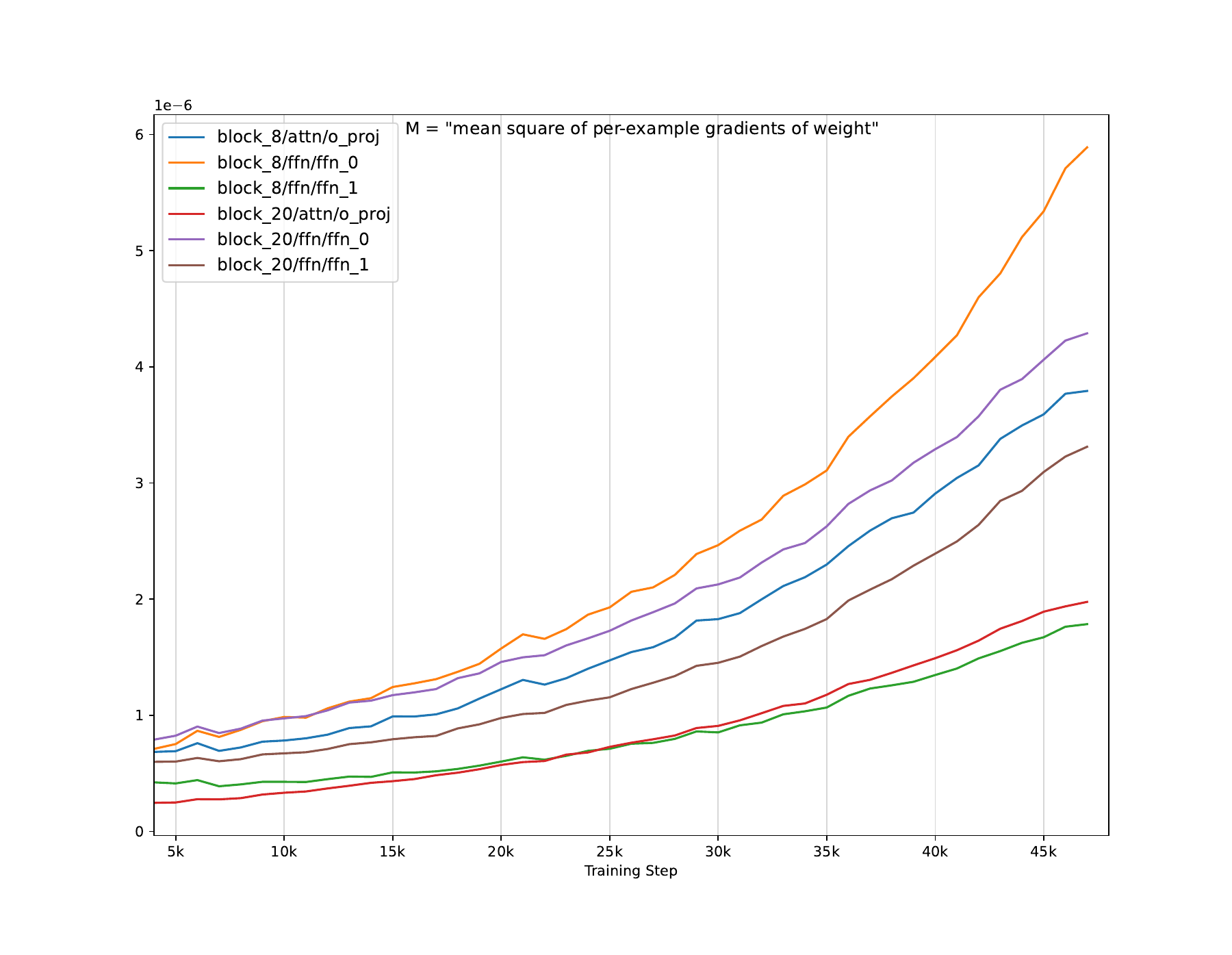}
     %\caption{(a)}
     %\label{fig:y equals x}
 \end{subfigure}
 \begin{subfigure}[b]{0.49\textwidth}
     \centering
     \includegraphics[width=\textwidth]{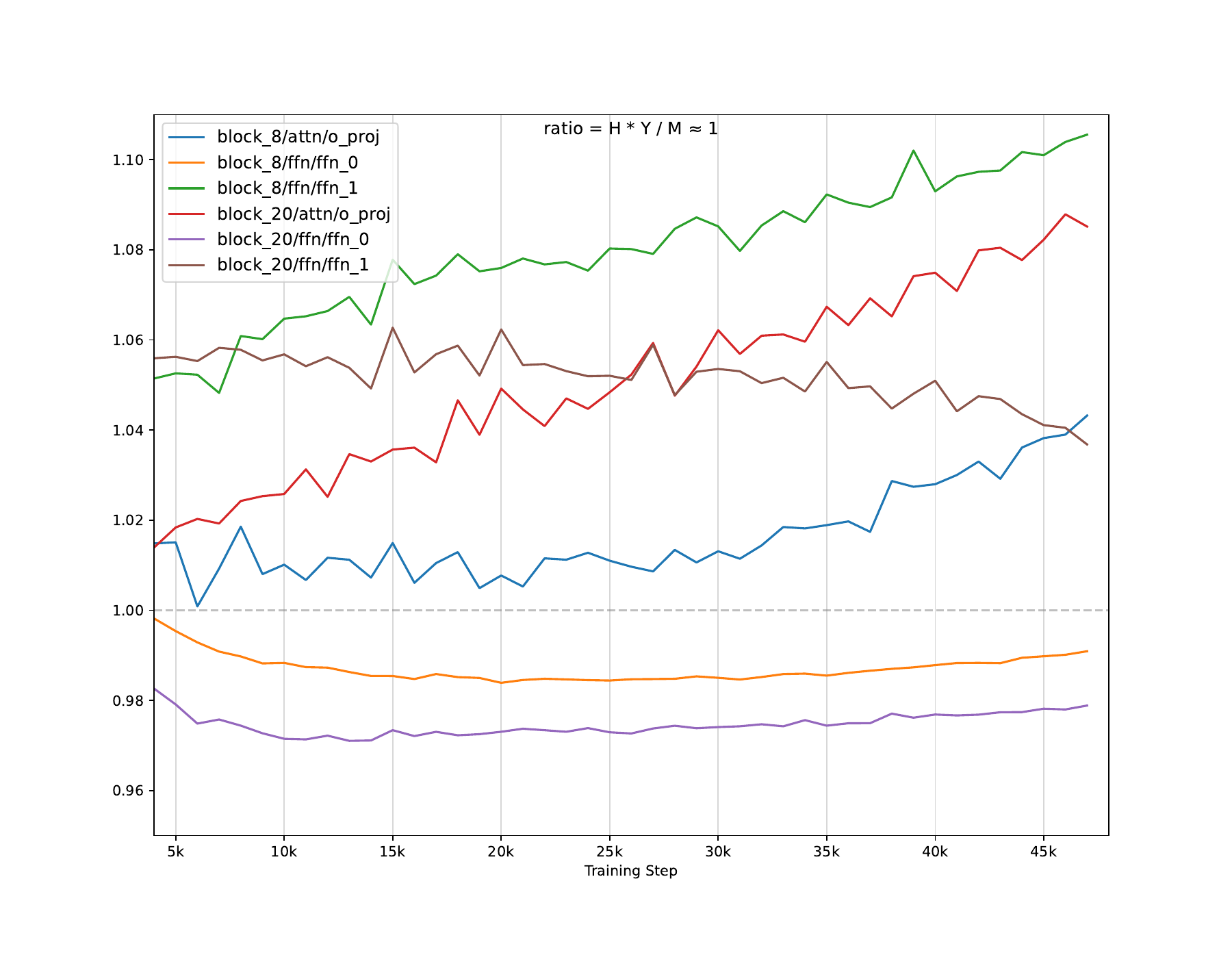}
     %\caption{(a)}
     %\label{fig:y equals x}
 \end{subfigure}
\caption{Empirical verification of Equation~\eqref{eq:grad_norm}.}
\label{fig:prediction_verify}
\end{figure}

Recall that layer $\vmat{W}$ is a $\cscalar{C}\times\cscalar{D}$ matrix, the input $\cvec{h}_t$ a $\cscalar{D}$-dimensional, and the output gradient $\nabla\vvec{y}_t$ a $\cscalar{C}$-dimensional vector. In Figure~\ref{fig:prediction_verify}, we plot:
\begin{enumerate}
 \item[(a)] $\mathfrak{H}\eqdef\sum_{i=1}^\cscalar{D}\lambda_i/\cscalar{D}=\mathbb{E}(\lVert\cvec{h}_t\rVert^2)/\cscalar{D}$, the mean square of hidden inputs.
 \item[(b)] $\mathfrak{Y}\eqdef\mathbb{E}(\lVert\nabla\vvec{y}_{t}\rVert^2)/\cscalar{C}$, the mean square of output gradients.
 \item[(c)] $\mathfrak{M}\eqdef\mathbb{E}(\lVert\nabla\vmat{W}_{t}\lVert^2) / (\cscalar{C}\times\cscalar{D})$, the mean square of per-example gradients of the model weight.
 \item[(d)] The ratio $\mathfrak{H}\mathfrak{Y} / \mathfrak{M}$, which should be close to $1$ according to Equation~\eqref{eq:grad_norm}.
\end{enumerate}
We show different types of layers across training steps. The norm of gradients can vary multiple magnitudes during training, yet the ratio plot in Figure~\ref{fig:prediction_verify} shows that the relative error of Equation~\eqref{eq:grad_norm} is less than $0.1$, mostly within $5\%$.

\subsection{Is the ``Batch-size'' Large or Small?}
\label{sec:equiv_batch_factor_dnn_breakdown}

\begin{figure}[t]
 \centering
 \begin{subfigure}[b]{0.32\textwidth}
     \centering
     \includegraphics[width=\textwidth]{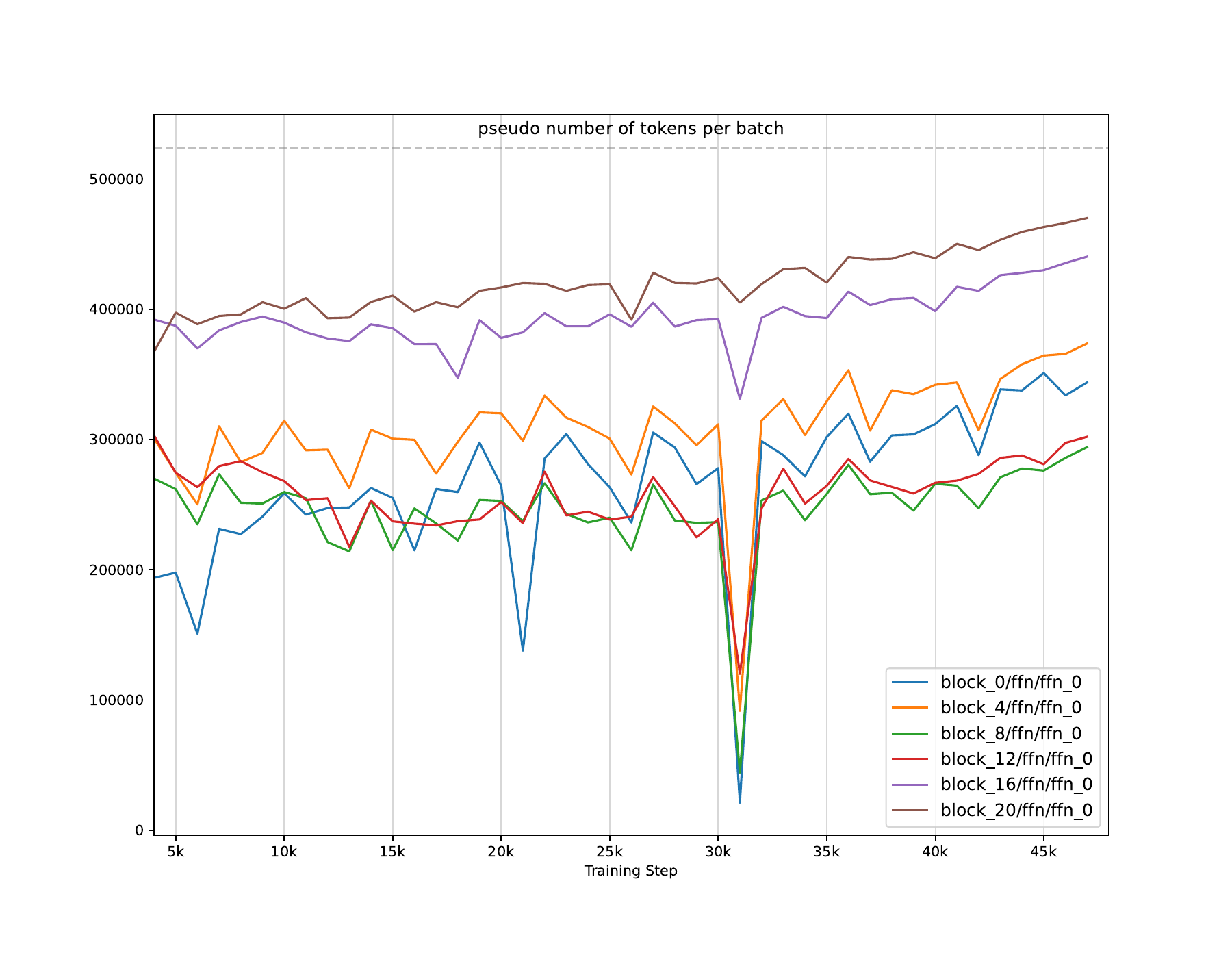}
     % \caption{$\mathbb{E}(\lVert\nabla\vmat{W}_{t}\lVert^2)/\mathbb{E}(\lVert\nabla_\cscalar{B}\vmat{W}_{t}\lVert^2)$}
     % \label{fig:equiv_batch_factor_a}
 \end{subfigure}
 \begin{subfigure}[b]{0.32\textwidth}
     \centering
     \includegraphics[width=\textwidth]{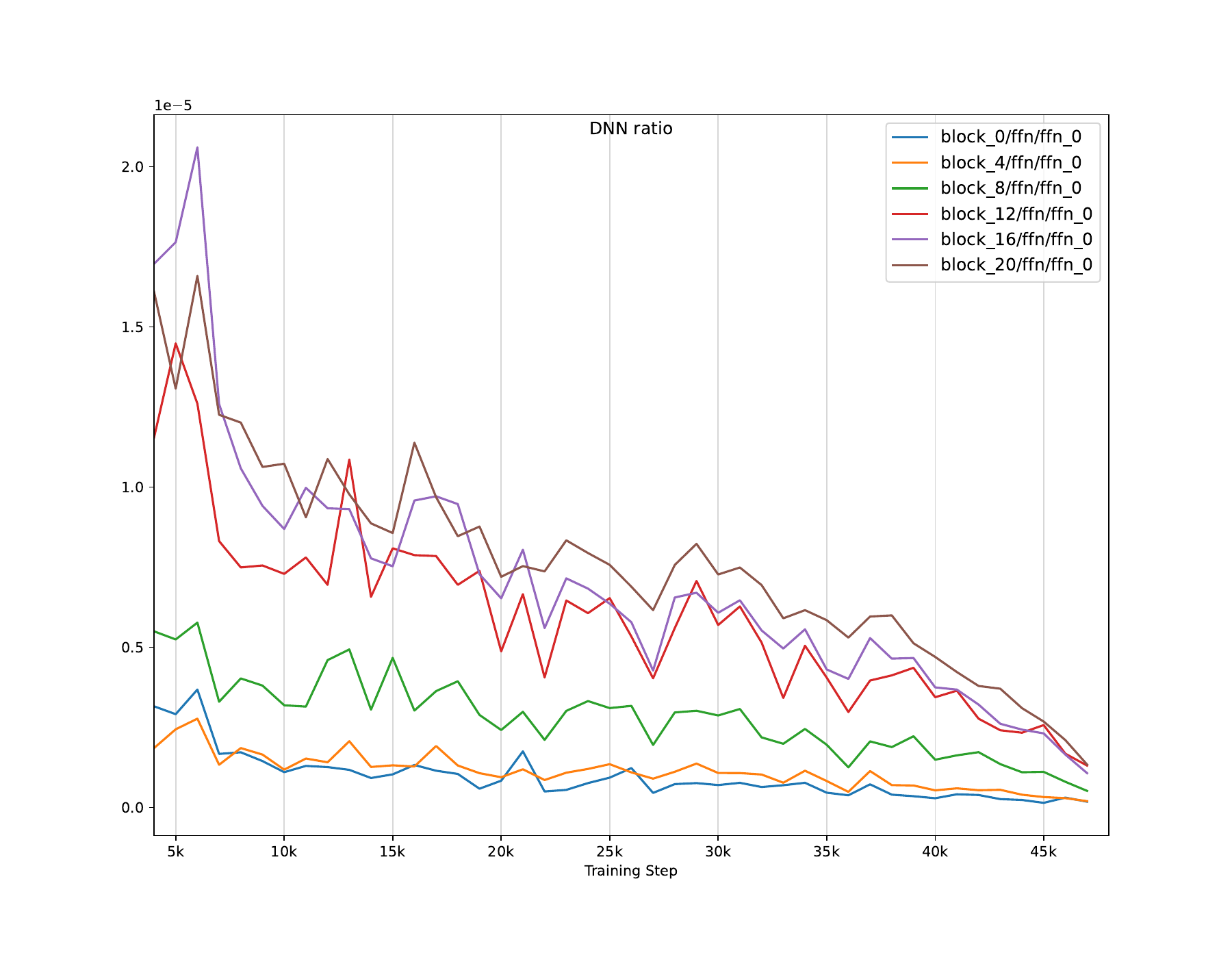}
     % \caption{(b)}
     % \label{fig:equiv_batch_factor_b}
 \end{subfigure}
 \begin{subfigure}[b]{0.32\textwidth}
     \centering
     \includegraphics[width=\textwidth]{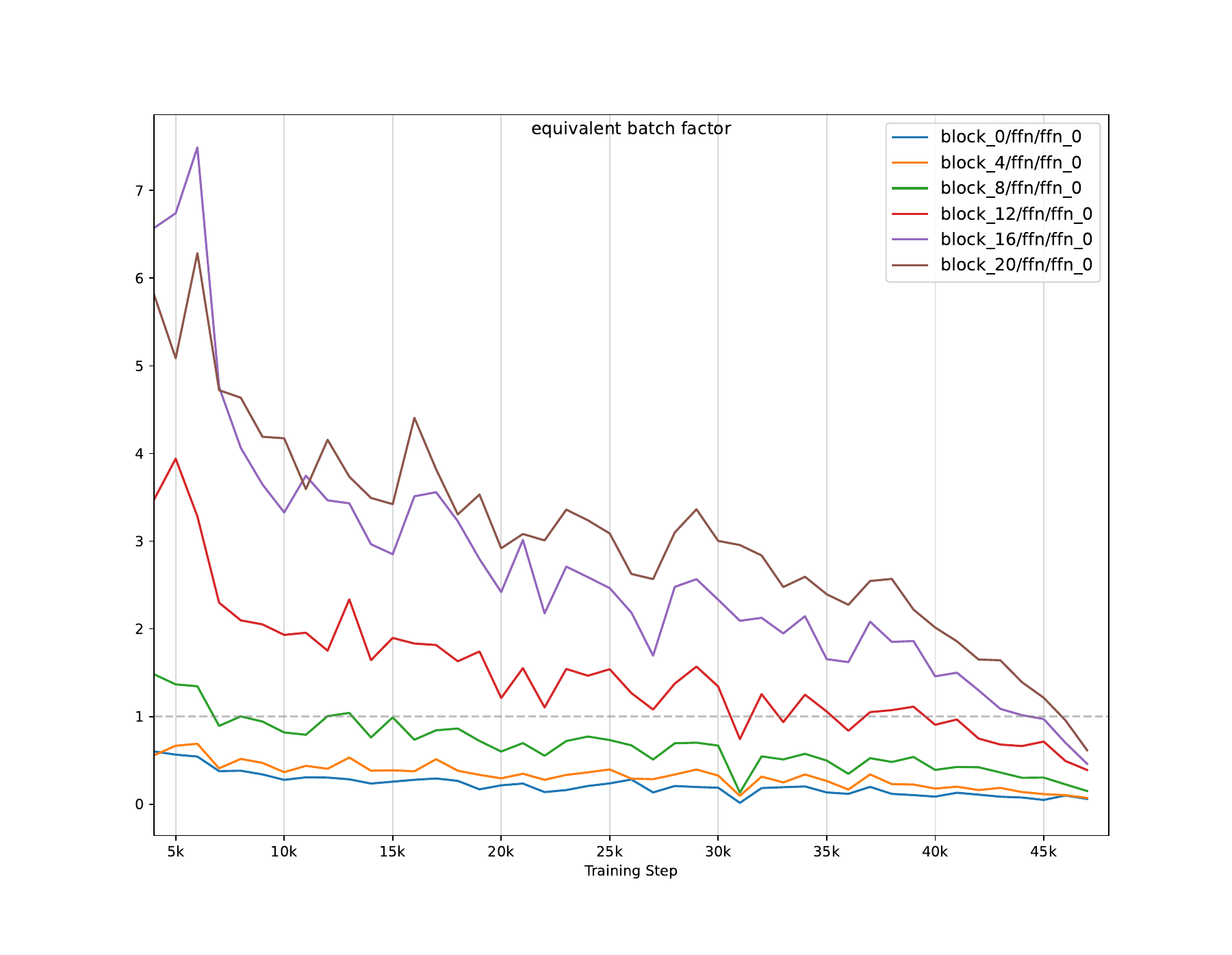}
     % \caption{$\mathfrak{B}$}
     % \label{fig:equiv_batch_factor_c}
 \end{subfigure}
\caption{Breakdown of the equivalent batch factor.}
\label{fig:equiv_batch_factor}
\end{figure}

Specific to the training of Transformer LMs, there is the question of what should be considered as the ``batch-size''. Is it the number of tokens per batch, because each token is corresponding to a summand in the training loss function? Or is it the number of sequences per batch, because tokens within a sequence can have correlations, and the unit of sequence is what is guaranteed to be independent in a batch? Since recent LMs are trained with very long sequences ($\geq$10k), the answer to this question affects our understanding of whether the training of Transformer LMs should be theoretically considered as having a ``large'' or ``small'' batch-size. 

The difference between ``large'' and ``small'' batch-sizes is drastic in theory: If the batch-size is small (compared to $\sqrt{\cscalar{D}}$, where $\cscalar{D}$ is the hidden dimension of the model), the gradient noise plays an important role in stochastic optimization, of which the effect can be accurately analyzed as we demonstrated in this work; On the other hand, our theoretical calculation becomes less accurate as the batch-size increases, and when the batch-size is extremely large as close to a full-batch setting, the theory regarding full-batch gradient descent becomes more relevant.

In this section, we provide a perspective from the \emph{equivalent batch factor} concept. From \eqref{eq:bgrad_norm}, equivalent batch factor $\mathfrak{B}$ directly affects the largeness of gradient noise; In Figure~\ref{fig:equiv_batch_factor}~(Right), we plot $\mathfrak{B}$ for different layers and training steps, as empirically measured according to \eqref{eq:equiv_batch_factor_def}. We observe that $\mathfrak{B}$ is small (close to $1$), which means that we are in a typical stochastic optimization regime.

Furthermore, $\mathfrak{B}$ can be decomposed into three factors as below.
\[
\mathfrak{B}=\left(\frac{\mathbb{E}(\lVert\nabla\vmat{W}_{t}\lVert^2)}{\mathbb{E}(\lVert\nabla_\cscalar{B}\vmat{W}_{t}\lVert^2)}\right)\left(\frac{\sum_{i=1}^\cscalar{D}\lambda_i\mathbb{E}(\lVert\nabla\vvec{y}_{t}\rVert^2)}{\mathbb{E}(\lVert\nabla\vmat{W}_{t}\lVert^2)}\right)\left( \frac{\sum_{i=1}^\cscalar{D}\lambda_i}{\sum_{i=1}^\cscalar{D}\lambda_i^2}\dfrac{\lVert\mathbb{E}(\nabla\vmat{W}_{t})\rVert^2}{\mathbb{E}(\lVert\nabla\vvec{y}_{t}\rVert^2)}\right).
\]
The first factor, $\mathbb{E}(\lVert\nabla\vmat{W}_{t}\lVert^2)/\mathbb{E}(\lVert\nabla_\cscalar{B}\vmat{W}_{t}\lVert^2)$, is plotted in Figure~\ref{fig:equiv_batch_factor}~(Left), which turns out to be slightly smaller than the number of tokens per batch (shown as dashed line in the figure), which is not completely a surprise in light of \eqref{eq:b2egg}. For recent LMs, this is a very large number. 

The second factor, $\sum_{i=1}^\cscalar{D}\lambda_i\mathbb{E}(\lVert\nabla\vvec{y}_{t}\rVert^2)/\mathbb{E}(\lVert\nabla\vmat{W}_{t}\lVert^2)$, is what we have investigated in \S~\ref{sec:verification_of_pred} and Figure~\ref{fig:prediction_verify}, which is close to $1$ as predicted.

The third factor is what makes $\mathfrak{B}$ small. It measures how far a DNN model deviates from \eqref{eq:rf_simplification_volterra} and \eqref{eq:lapprox}, which is satisfied by Random Feature models and would make this factor $\approx 1$. In contrast, the empirical measure in our case turns out to be very small, as shown in Figure~\ref{fig:equiv_batch_factor}~(Middle). It suggests that the point-wise loss of DNNs brings very different gradients to each training example, and constitutes the major source of gradient noise in training of DNNs.

Somehow magically, the first factor and the third factor cancel each other to make $\mathfrak{B}$ close to $1$. If this is not a coincidence (since the training recipe we adopted in \S~\ref{sec:llm_train_settings} has been tuned on batch-size as well), it might suggest heuristics for tuning the batch-size hyperparameter: Because the first factor depends on the batch-size while the third factor does not, one can empirically measure the third factor and tune the batch-size to make $\mathfrak{B}\approx 1$. We leave this direction to future work.

\subsection{The Appropriate \texorpdfstring{$\beta_2$}{beta2}}
\label{sec:appropriate_beta2}

\begin{table}[t]
\centering
\begin{tabular}{cccc}
\toprule
$\beta_2=0.95$ & $\beta_2=0.982$ & $\beta_2=0.992$ & $\beta_2$ schedule \\
\midrule
2.4421         & 2.4405          & 2.4502          & \textbf{2.4404} \\
\bottomrule
\end{tabular}
\caption{Final validation loss for different $\beta_2$ settings.}
\label{tab:beta2_eval}
\end{table}

\begin{figure}[t]%{0.32\textwidth}
\centering
\includegraphics[width=0.7\textwidth]{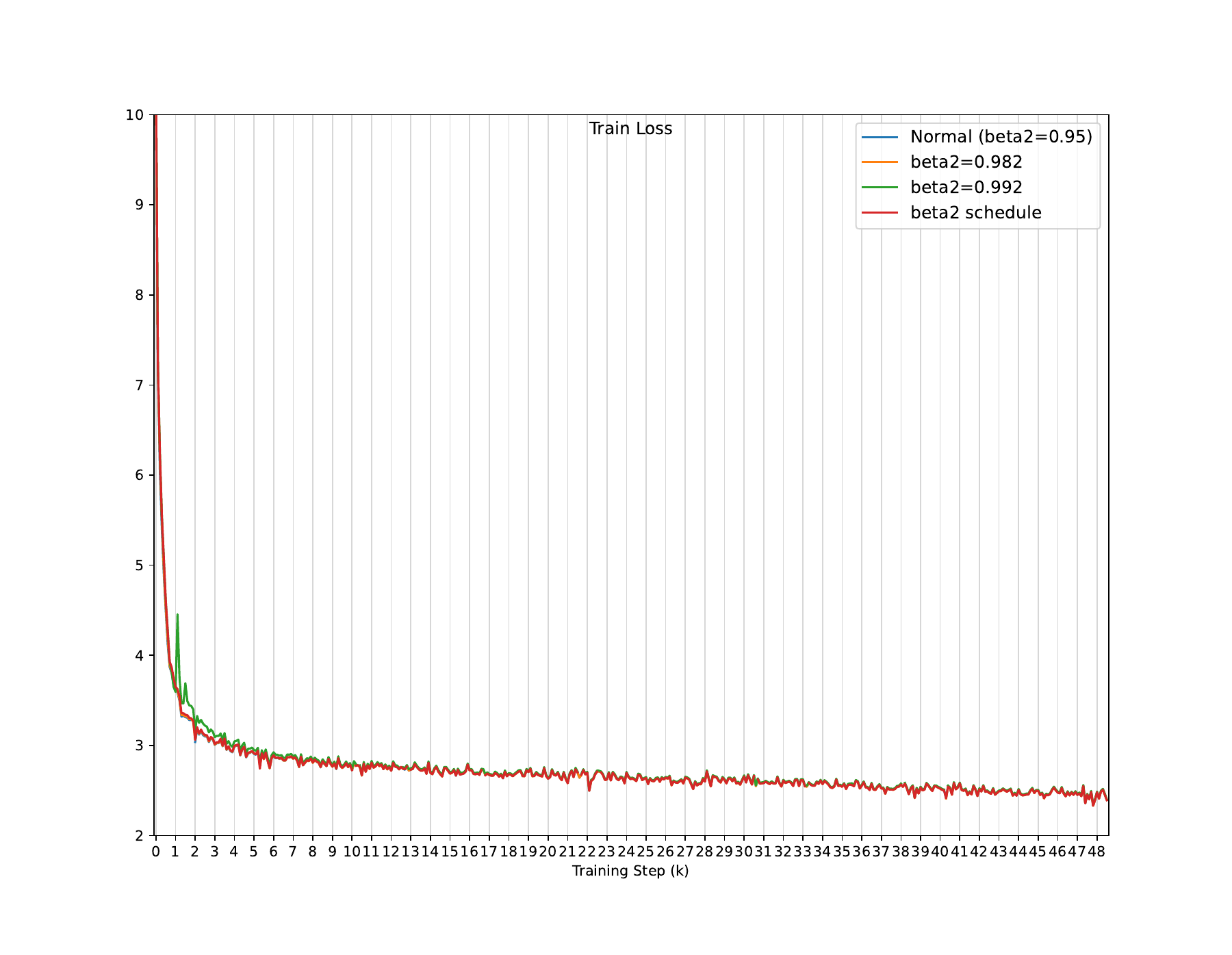}
\caption{Training loss for different $\beta_2$ settings.}
\label{fig:beta2_train_loss}
\end{figure}

For our case-study, we know that the hyperparameter $\beta_2$ is causing training spikes. Is there a recipe to derive the appropriate $\beta_2$, possibly from empirical metrics, so that one can avoid the risk of spiky training? In this section, we propose and verify such a recipe. 

\begin{figure}[t!]
 \centering
 \begin{subfigure}[b]{0.49\textwidth}
     \centering
     \includegraphics[width=\textwidth]{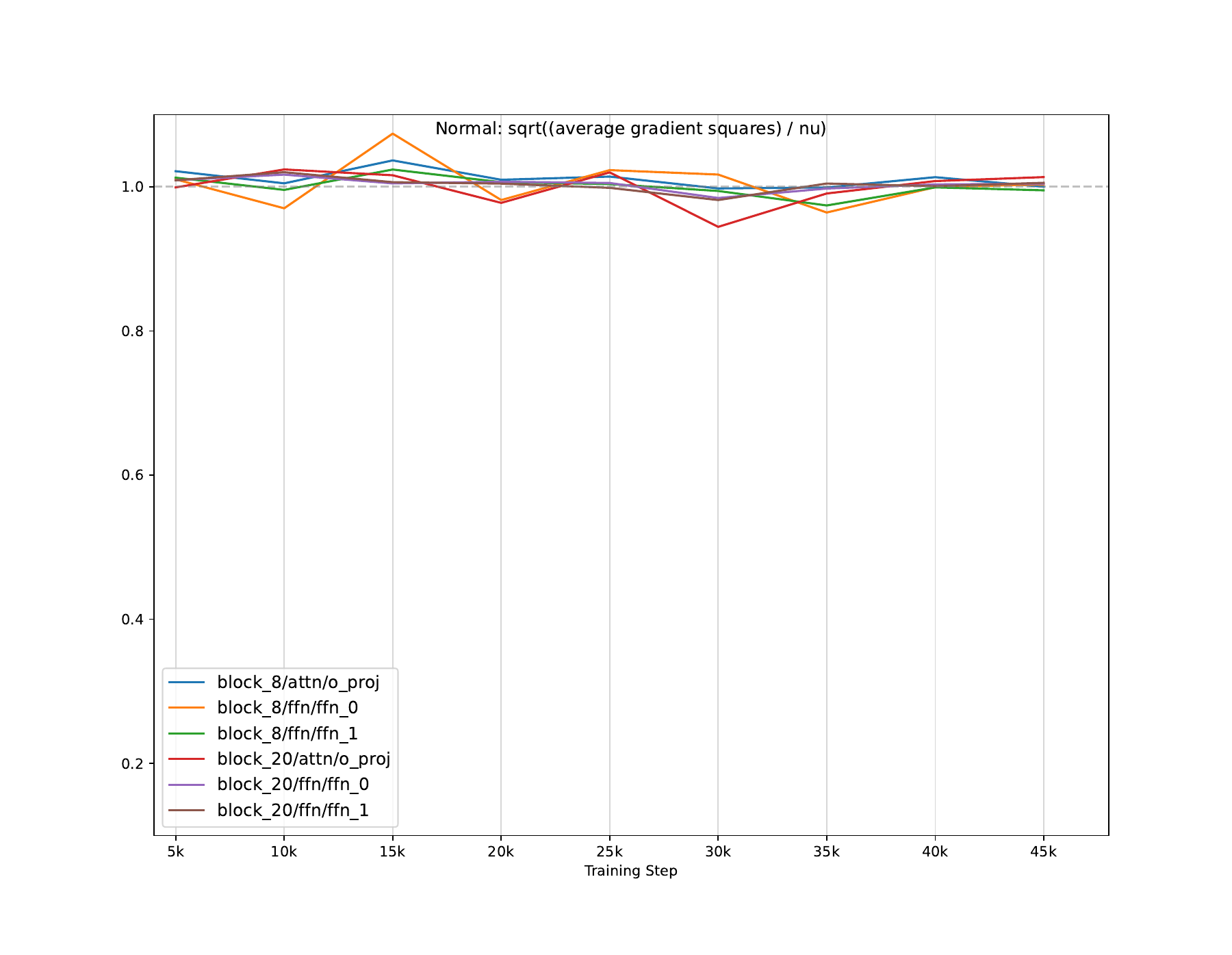}
     %\caption{(a)}
     %\label{fig:y equals x}
 \end{subfigure}
 \begin{subfigure}[b]{0.49\textwidth}
     \centering
     \includegraphics[width=\textwidth]{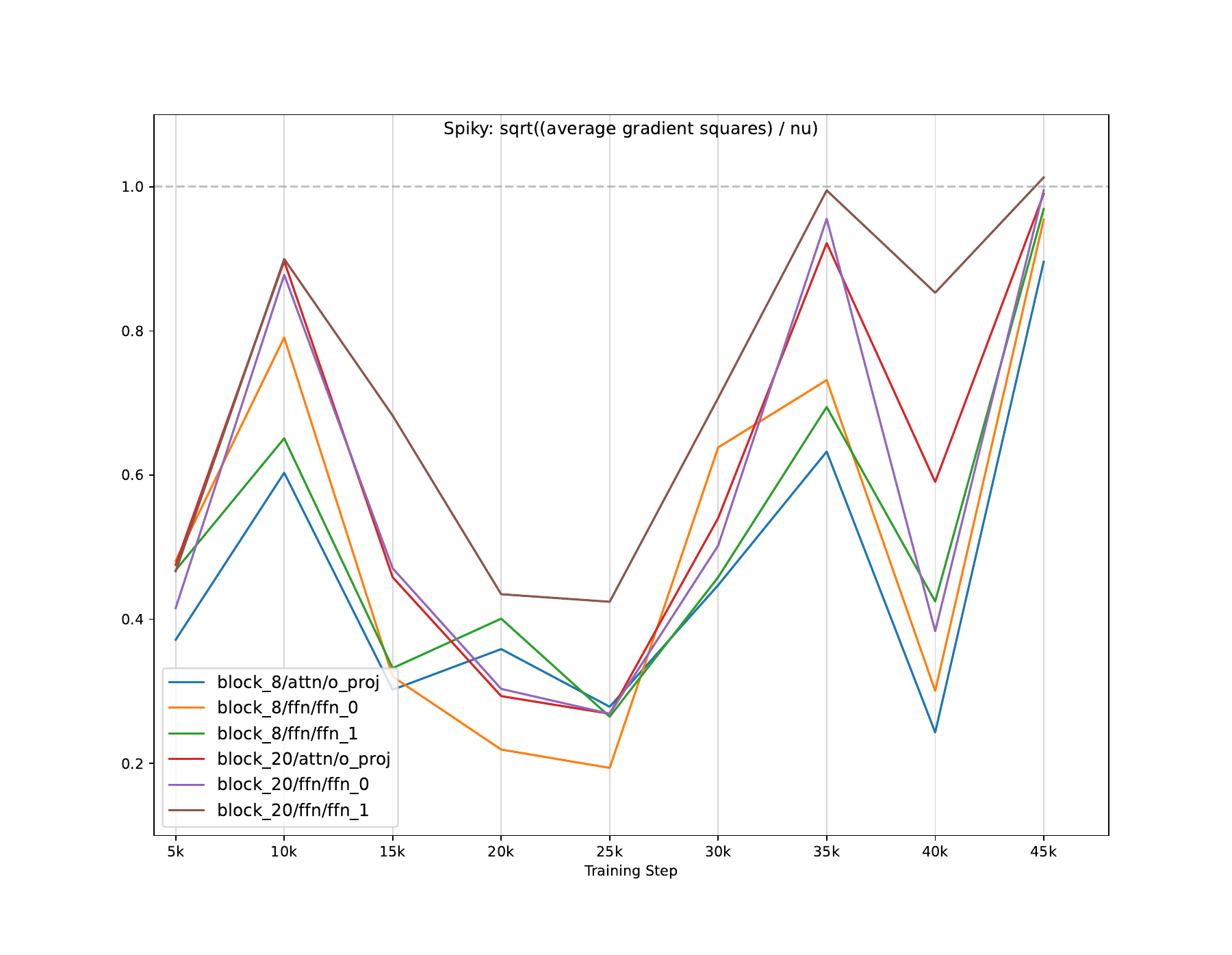}
     %\caption{(a)}
     %\label{fig:y equals x}
 \end{subfigure}\\
 \begin{subfigure}[b]{0.49\textwidth}
     \centering
     \includegraphics[width=\textwidth]{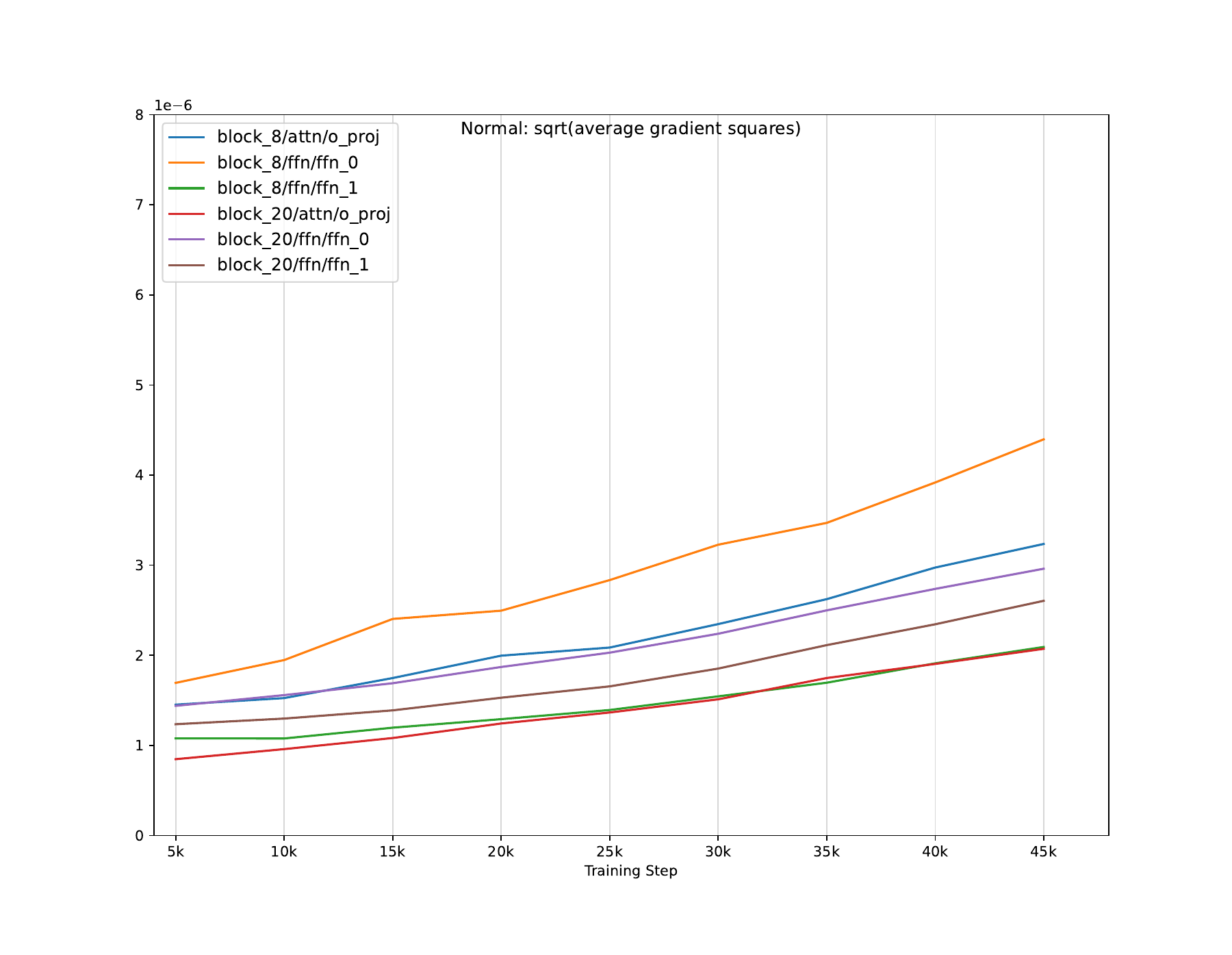}
     %\caption{(a)}
     %\label{fig:y equals x}
 \end{subfigure}
 \begin{subfigure}[b]{0.49\textwidth}
     \centering
     \includegraphics[width=\textwidth]{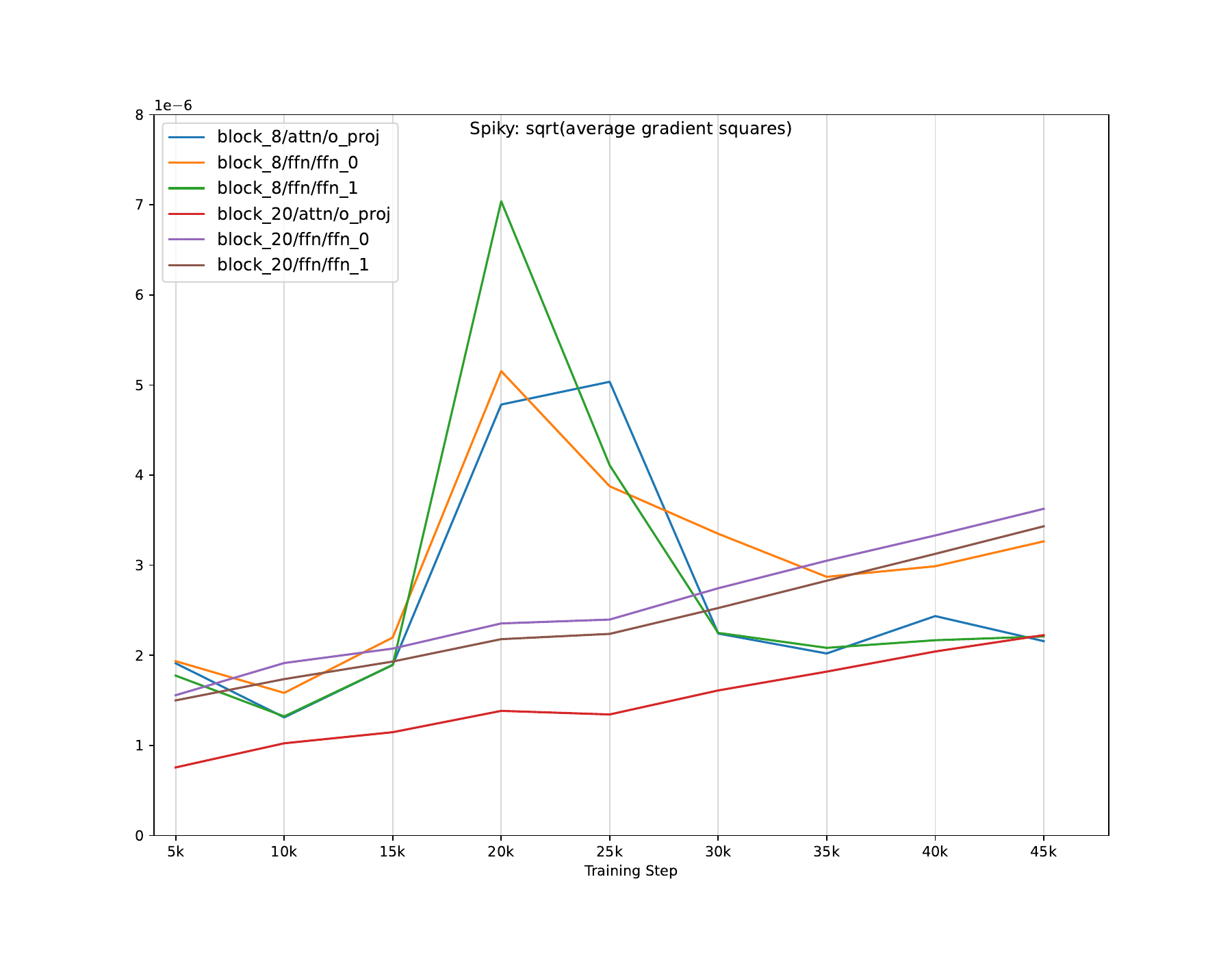}
     %\caption{(a)}
     %\label{fig:y equals x}
 \end{subfigure}\\
 \begin{subfigure}[b]{0.49\textwidth}
     \centering
     \includegraphics[width=\textwidth]{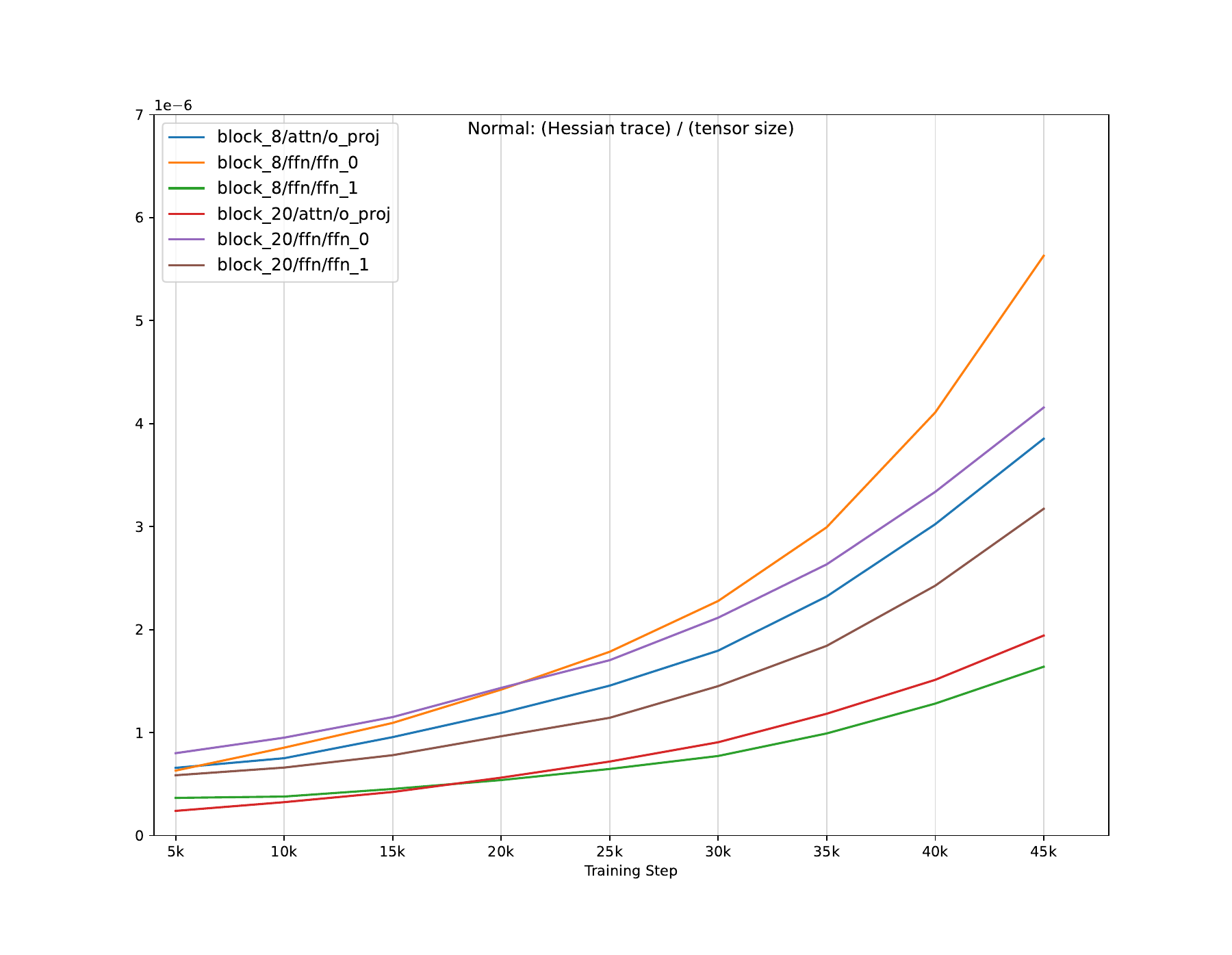}
     %\caption{(a)}
     %\label{fig:y equals x}
 \end{subfigure}
 \begin{subfigure}[b]{0.49\textwidth}
     \centering
     \includegraphics[width=\textwidth]{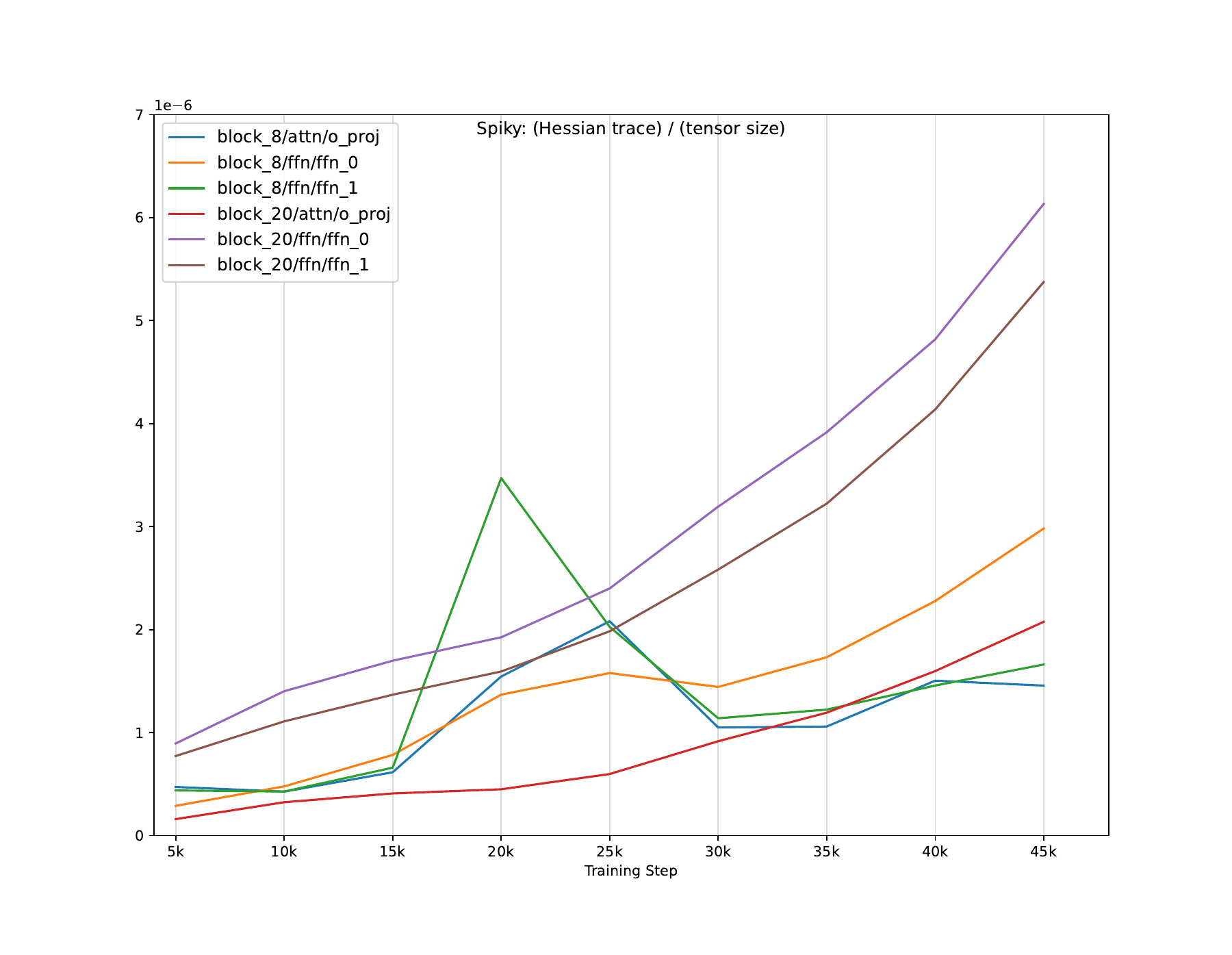}
     %\caption{(a)}
     %\label{fig:y equals x}
 \end{subfigure}\\
 \begin{subfigure}[b]{0.49\textwidth}
     \centering
     \includegraphics[width=\textwidth]{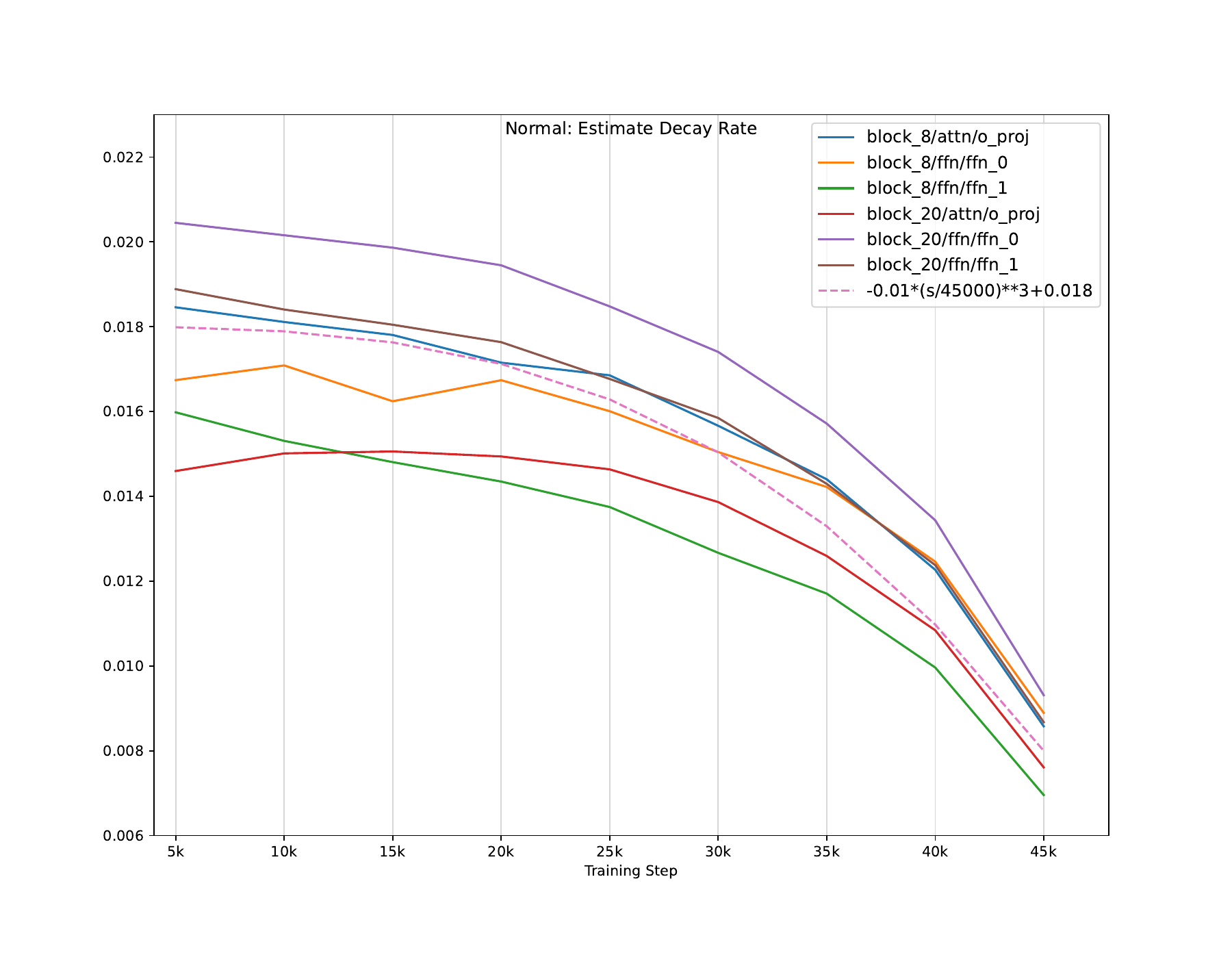}
     %\caption{(a)}
     %\label{fig:y equals x}
 \end{subfigure}
 \begin{subfigure}[b]{0.49\textwidth}
     \centering
     \includegraphics[width=\textwidth]{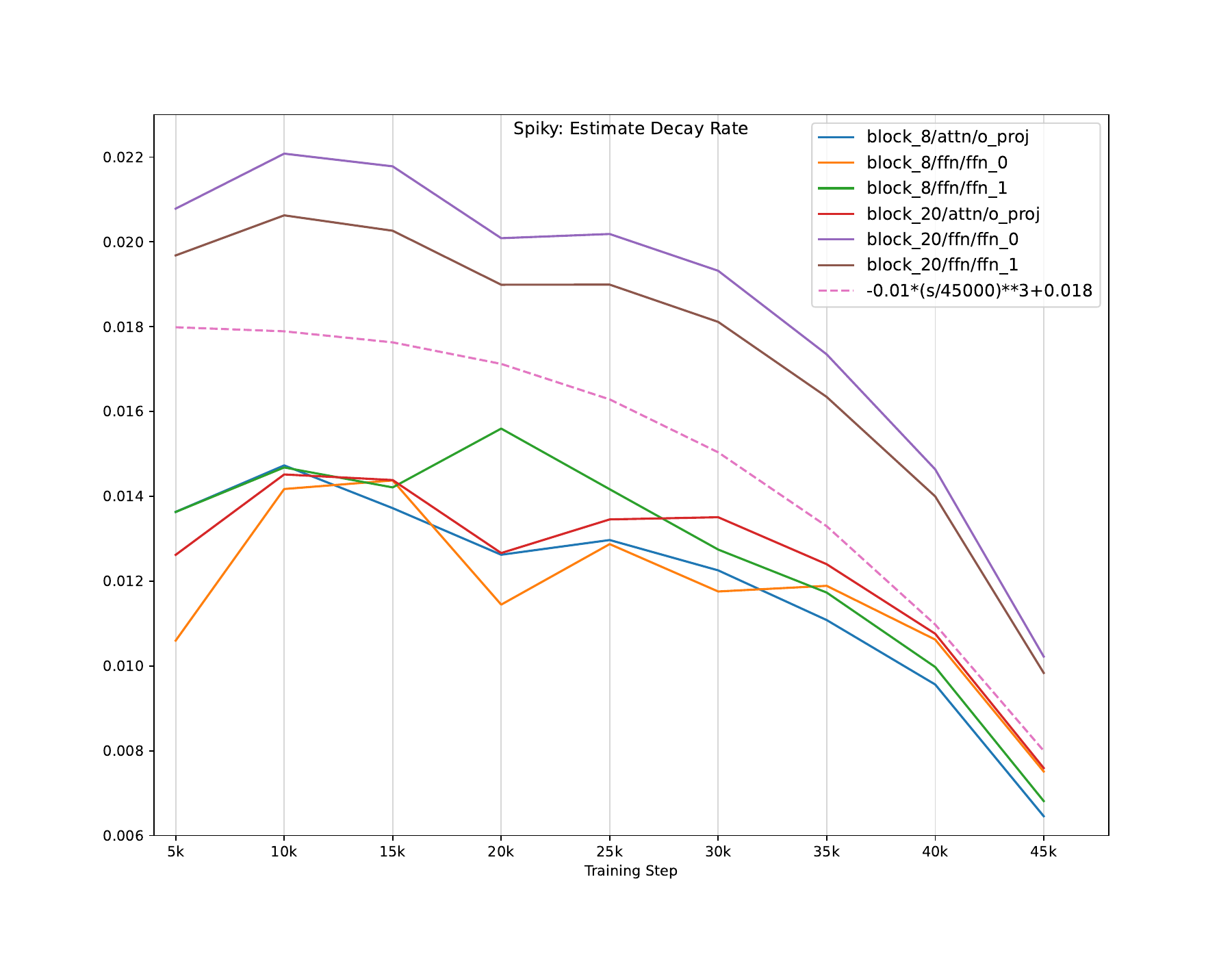}
     %\caption{(a)}
     %\label{fig:y equals x}
 \end{subfigure}
\caption{Estimating an appropriate $\beta_2$ schedule.}
\label{fig:beta2_est}
\end{figure}

The proposal is to set
\begin{equation}\label{eq:beta2_proposal}
\beta_2\approx 1 - \sqrt{\frac{\eta_t\mathfrak{T}}{\sqrt{\mathfrak{N}}}}\;\mbox{ where }\;\mathfrak{T}\eqdef\trace\big(\mathbb{E}(\nabla^2\vmat{W}_{t})\big)/(\cscalar{C}\times\cscalar{D})\mbox{ and }\mathfrak{N}\eqdef\mathbb{E}(\lVert\nabla_\cscalar{B}\vmat{W}_{t}\lVert^2)/(\cscalar{C}\times\cscalar{D}).
\end{equation}
Here, $\eta_t$ is the learning-rate schedule for Adam; We empirically measure $\mathfrak{T}$ the normalized Hessian trace and $\mathfrak{N}$ the mean square of per-batch gradients from validation data. In particular, $\eqref{eq:beta2_proposal}$ depends on the learning-rate schedule $\eta_t$; So if one applies a decaying schedule to the learning-rate, it usually suggests that $\beta_2$ should follow an increasing schedule.

The rationale for \eqref{eq:beta2_proposal} comes from \eqref{eq:randomness}: Because the accumulated gradient noise $\vvec{r}_t$ is decayed by $(\cmat{I}-\eta_{t}\,\cmat{H})$ at each training step, the random fluctuations of the loss dynamics (i.e.~the source of training spikes) has self-correlation within $\approx 1/(\eta_{t}\,\cmat{H})$ steps; Then, our proposal is to ensure that $\vvec{\nu}$, the running average of gradient squares used as an optimizer state in Adam, has self-correlation within $\approx 1/\sqrt{\eta_{t}\,\cmat{H}}$ steps. Since the learning-rate $\eta_t$ in \eqref{eq:randomness} is for SGD, we have to replace it by the adaptive learning-rate $\eta_t/\sqrt{\vvec{\nu}}$ in Adam; We approximate the tensor $\sqrt{\vvec{\nu}}$ by $\sqrt{\mathfrak{N}}$, and the tensor $\cmat{H}$ by $\mathfrak{T}=\trace(\cmat{H})/\dim(\vvec{w})$.

In Figure~\ref{fig:beta2_est}, we first ensure that $\sqrt{\mathfrak{N}}$ is actually a good approximation of $\sqrt{\vvec{\nu}}$ (or vice versa): We plot the ratio $\sqrt{\mathfrak{N}/\mean(\vvec{\nu})}$ in the First Row, for different layers and training steps, comparing the Normal and Spiky configurations. We found that the approximation works well for Normal training, but much worse for Spiky. This is not surprising because the large $\beta_2$ used by Spiky is likely to make any surge of gradient squares last longer in $\vvec{\nu}$, causing it to overestimate the average.

Then, we plot $\sqrt{\mathfrak{N}}$, $\mathfrak{T}$ and finally $\sqrt{\eta_t\mathfrak{T}/\sqrt{\mathfrak{N}}}$ in the Second, Third and Fourth Row of Figure~\ref{fig:beta2_est}, respectively. Although the plots of $\sqrt{\mathfrak{N}}$ and $\mathfrak{T}$ can be spiky, we observe that the overall tendency of the ratio $\sqrt{\eta_t\mathfrak{T}/\sqrt{\mathfrak{N}}}$ is comparatively stable and consistent across all layers, for both Normal and Spiky. In order to capture this tendency, we manually fit a schedule of the form below (shown as pink dashed curves in the Fourth Row of Figure~\ref{fig:beta2_est}).
\begin{equation}\label{eq:1mbeta2_decay_schedule}
(\texttt{final\_value} - \texttt{init\_value})\left(\frac{\mbox{step}}{\mbox{total\_steps}}\right)^{\texttt{exponent}}+\texttt{init\_value},
\end{equation}
with $\texttt{init\_value}=0.018$, $\texttt{final\_value}=0.008$ and $\texttt{exponent}=3$.

We set $(1-\beta_2)$ to the schedule given in \eqref{eq:1mbeta2_decay_schedule} and train a Transformer LM as in \S~\ref{sec:llm_train_settings}, without changing any other hyperparameters. We compare the final validation loss with the original setting $\beta_2=0.95$, and two additional settings $\beta_2=(1-\texttt{init\_value})=0.982$ and $\beta_2=(1-\texttt{final\_value})=0.992$. The results are shown in Table~\ref{tab:beta2_eval}.

We observe that the proposed $\beta_2$ schedule can indeed improve performance in terms of validation loss. Remarkably, the performance significantly differs between $\beta_2=(1-\texttt{init\_value})$ and $\beta_2=(1-\texttt{final\_value})$, even if $\texttt{init\_value}$ and $\texttt{final\_value}$ only differ by $0.01$. We also observe that the training of $\beta_2=(1-\texttt{final\_value})$ is spiky, while $\beta_2=(1-\texttt{init\_value})$ is not (Figure~\ref{fig:beta2_train_loss}). The proposed $\beta_2$ schedule can even improve a bit over $\beta_2=(1-\texttt{init\_value})$, with the final value of $\beta_2$ actually becoming $(1-\texttt{final\_value})$.

\end{document}